\newcommand{\tabincell}[2]{\begin{tabular}{@{}#1@{}}#2\end{tabular}}
\newtheorem{thm}{Theorem}[section]
\newtheorem{lem}{Lemma}[section]
\newtheorem{defn}{Definition}[section]
\newtheorem{rem}{Remark}[section]
\newcommand{\blue}{\color{blue}}
\definecolor{pink}{rgb}{1, 0, 1}
\definecolor{orange}{rgb}{1, 0.7529, 0}
\definecolor{darkgreen}{rgb}{0, 0.8, 0}
\begin{document}

\title{
{\small{{\blue This paper is published in IEEE Transactions on Robotics}}}\\ \vspace{0pt}
\Huge{C$^*$: A Coverage Path Planning Algorithm for Unknown Environments using Rapidly\\ Covering Graphs}
\vspace{6pt}
\thanks{Digital Object Identifier 10.1109/TRO.2026.3661719}
\thanks{© 2025 IEEE.  Personal use of this material is permitted.  Permission from IEEE must be obtained for all other uses, in any current or future media, including reprinting/republishing this material for advertising or promotional purposes, creating new collective works, for resale or redistribution to servers or lists, or reuse of any copyrighted component of this work in other works.}
}

\author{\begin{tabular}{cccccccccc}
 {Zongyuan Shen$^\dag$} & {James P. Wilson$^\dag$} & {Shalabh Gupta$^\dag$$^\star$}
\end{tabular}\vspace{-6pt}
\thanks {$^\dag$Dept. of Electrical and Computer Engineering, University of Connecticut, Storrs, CT 06269, USA.}
\thanks {$^\star$Corresponding author (email: shalabh.gupta@uconn.edu)}\vspace{-3pt}
}

\maketitle 

\begin{abstract}
The paper presents a novel sample-based algorithm, called C$^*$, for real-time coverage path planning (CPP) of unknown environments. C$^*$ is built upon the concept of a Rapidly Covering Graph (RCG),  which is incrementally constructed during robot navigation via progressive sampling of the search space. By using efficient sampling and pruning techniques, the RCG is constructed to be a minimum-sufficient graph, where its nodes and edges form the potential waypoints and segments of the coverage trajectory, respectively. The RCG tracks the coverage progress, generates the coverage trajectory and helps the robot to escape from the dead-end situations. 
To minimize coverage time, C$^*$ produces the desired back-and-forth coverage pattern, while adapting to the TSP-based optimal coverage of local isolated regions, called coverage holes, which are surrounded by obstacles and covered regions.
It is analytically proven that C$^*$ provides complete coverage of unknown environments. The algorithmic simplicity and low computational complexity of C$^*$ make it easy to implement and suitable for real-time on-board applications. The performance of C$^*$ is validated by 1) extensive high-fidelity simulations and 2) laboratory experiments using an autonomous robot. C$^*$ yields near optimal trajectories, and a comparative evaluation with seven existing CPP methods demonstrates significant improvements in performance in terms of coverage time, number of turns, trajectory length, and overlap ratio, while preventing the formation of coverage holes. Finally, C$^*$ is comparatively evaluated on two different CPP applications using 1) energy-constrained robots and 2) multi-robot teams. 
\end{abstract}

\begin{IEEEkeywords}
Coverage path planning, Autonomous robots, Unknown environments, Rapidly covering graph
\end{IEEEkeywords}

\IEEEpeerreviewmaketitle

\thispagestyle{empty}

\vspace{-0pt}
\section{Introduction}
\label{sec:introduction}

\IEEEPARstart{R}{ecent} advancements in sensing, computing and communication technologies have spurred a rapid growth of autonomous service robots that support humans by performing a variety of tasks to improve their quality of life. Specifically, some of these tasks require planning a path to cover all points in the search space, this problem is called CPP~\cite{acar2006sensor,acar2002sensor, song2018, galceran2013survey}. A wide range of application domains exist for CPP including exploration (e.g., mapping of underwater terrains~\cite{palomeras2018,shen2022ct}); industry (e.g., 
structural inspection~\cite{englot2013three, vidal2017,bircher2018receding,song2020online}, spray-painting~\cite{vempati2018,atkar2005} and surface cleaning of vehicles~\cite{hassan2019ppcpp}); agriculture (e.g., robotic weeding~\cite{maini2022online} and arable farming~\cite{jin2011}); household (e.g., floor cleaning~\cite{palacin2004building} and lawn mowing~\cite{weiss2008lawn,Song_sose2015}); hazard removal (e.g., oil spill cleaning~\cite{SGH13}); and defense (e.g., demining~\cite{mukherjee2011symbolic}).

Often, the environments where coverage operations are performed are unknown. Therefore, it is essential to develop sensor-based CPP methods that can replan and adapt the coverage trajectory in real-time based on dynamic discovery of the environment. It is critical that the CPP method provides complete coverage and does not get stuck in a dead-end. Furthermore, it is desired that the CPP method is efficient, i.e., it minimizes the coverage time by reducing a) the overlaps which in turn reduces the trajectory length, and b) the number of turns. Moreover, it is desired that the coverage method does not create intermediate coverage holes during navigation, which are local uncovered regions surrounded by obstacles and covered regions. This is required to prevent unnecessary inconvenience to cover such holes later via long return trajectories. Finally, it is important that the CPP method is simple to implement on real robot platforms and also computationally efficient to generate adaptive coverage trajectories in real-time.

\begin{figure}[t]
    \centering
    \includegraphics[width=1.0\columnwidth]{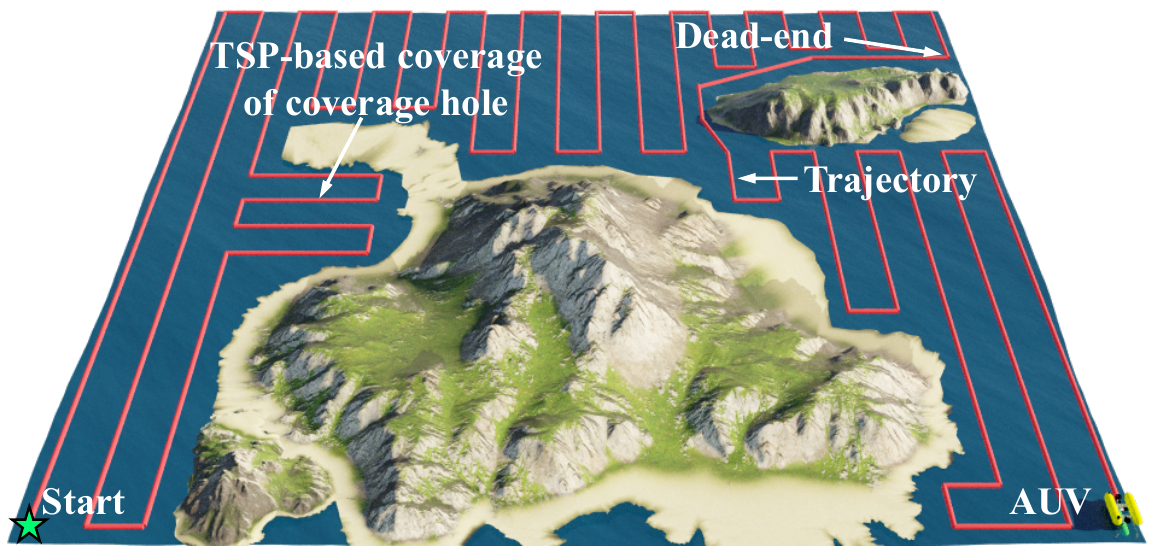}
    \caption{C$^*$ coverage of an island scenario by an AUV.} 
   \label{fig:CPP_application} 
   \vspace{-10pt}
\end{figure}

In this regard, the paper presents a novel algorithm, called C$^*$, for real-time CPP of unknown environments. Fig.~\ref{fig:CPP_application} shows an example of C$^*$-coverage of an island scenario in an ocean environment by an autonomous underwater vehicle (AUV). While the existing CPP methods rely on different techniques such as cellular decomposition~\cite{acar2002sensor}, area tiling~\cite{song2018}, spanning trees~\cite{gabriely2001spanning}, and neural networks~\cite{luo2008bioinspired}, C$^*$ is a sample-based algorithm built upon a novel concept of RCGs. An RCG is a minimum sufficient graphical representation of the obstacle-free space, which is incrementally constructed via progressive sampling during robot navigation. The nodes and edges of an RCG form the potential waypoints and segments of the robot's coverage trajectory, respectively. The objectives of RCG are to track the coverage progress, select non-myopic waypoints for the coverage trajectory in uncovered areas, and guide the robot to escape from dead-end situations. 

C$^*$ provides a complete coverage guarantee of unknown environments using the desired back-and-forth coverage pattern. During navigation, if the robot detects the formation of any potential coverage holes, then C$^*$ synergistically adapts to the locally optimal trajectories generated \textit{in situ} using the traveling salesman problem (TSP) to cover such regions. This prevents the formation of undesired coverage holes in the map during the coverage process, thus eliminating the need to cover them later by return trajectories from distant regions. C$^*$ is validated by high-fidelity simulations and real experiments, which demonstrate that C$^*$ is highly efficient and produces near optimal trajectories. A comparative evaluation with the existing CPP methods shows that C$^*$ yields significant performance improvements in terms of coverage time, number of turns, trajectory length and overlap ratio. Furthermore, C$^*$ is simple to implement and computationally efficient to run on real robotic platforms for real-time operations.

\vspace{-6pt}
\subsection{Related Work}
\label{sec:related_work}
A review of existing CPP methods is presented in~\cite{galceran2013survey}. CPP methods can be classified into various categories.

\vspace{6pt}
\subsubsection{\textbf{Random vs Systematic Methods}} In general, the CPP methods are classified as random or systematic. While the random methods are simple to implement, they can generate strongly overlapping trajectories and do not guarantee complete coverage in finite time. In contrast, the systematic methods use some basic underlying rules to generate well-defined coverage patterns (e.g., back \& forth and spiral) and provide complete coverage in finite time. 

\vspace{6pt}
\subsubsection{\textbf{2D vs 3D Methods}} The CPP methods are also classified based on the dimension of coverage space. While the 2D methods plan coverage paths to cover the 2D surfaces (e.g., floor cleaning), the 3D methods work in the 3D spaces (e.g., mapping of high-rise buildings by UAVs and underwater terrains by UUVs). Hert et. al.~\cite{hert1996} presented a CPP method for underwater terrain mapping using an AUV by sweeping across the terrain while moving at a fixed distance to the uneven surface. Shen and Gupta ~\cite{shen2022ct} 
presented an algorithm, called CT-CPP, that performs layered sensing of the environment using coverage trees (CT) for terrain mapping. 

\vspace{6pt}
\subsubsection{\textbf{Offline vs Online Methods}} The CPP methods are also classified as offline or online depending on the availability of the environmental information. The offline methods rely on prior information known about the environment to design the coverage paths before the robotic system is deployed for operation. Although these algorithms perform well in known scenarios, their performance can degrade if the prior information is incomplete and/or incorrect. In contrast, the online methods are adaptive~\cite{Shen_SMART2023}, i.e., they update the coverage paths in real-time based on the environmental information collected by the onboard sensors.

\vspace{6pt}
$\bullet$ \textit{\textbf{Known environments}}: Several CPP methods have been developed for known environments. Zelinsky et al.~\cite{zelinsky1993planning} proposed a grid-based method, which assigns each grid cell a distance-to-goal value. The coverage path is then generated along the steepest ascent by selecting the unvisited cell in the local neighborhood with the largest distance-to-goal value. Huang~\cite{huang2001optimal} developed a CPP method that divided the coverage area into multiple subareas which are each covered by back-and-forth motion with different sweep directions to minimize the total number of turns. Xie et al.~\cite{xie2020path} proposed a TSP-based method to cover multiple disjoint areas for UAVs. Ramesh et al.~\cite{ramesh2022optimal} computed the optimal partitioning of a non-convex environment for minimizing the number of turns by solving a linear programming problem. Shen et al.~\cite{shen2021cppnet} presented a neural network based method, called CPP Network (CPPNet), which takes the occupancy grid map of the environment as input and generates a near-optimal coverage path offline.

 \vspace{6pt}
$\bullet$ \textit{\textbf{Unknown environments}}: Several CPP methods have been developed for unknown environments. Lee et al.~\cite{LBCO11} proposed a grid-based method using a path linking strategy called coarse-to-fine constrained inverse distance transform (CFCIDT) to generate a smooth coverage path, so as to avoid sharp turns. 
Gabriely and Rimon~\cite{gabriely2001spanning} developed the Spanning Tree Covering (STC) algorithm, which discretizes the workspace into coarse cells, achieving complete coverage with zero overlap but at a coarser resolution than the robot’s footprint, which may leave gaps. Full spiral STC (FS-STC)~\cite{gabriely2003competitive} extends STC by incorporating partially occupied cells into an augmented graph, enabling fine-cell level coverage at the resolution of the robot’s footprint. 
Gonzalez et al.~\cite{gonzalez2005bsa} presented the Backtracking Spiral Algorithm (BSA), which generates a spiral path online by following the covered area or obstacles on a specific lateral side until no uncovered free cell can be found in the local neighborhood. Then, a backtracking mechanism is used to escape from a dead-end.

Acar and Choset~\cite{acar2002sensor} developed a CPP method which detects the critical points on obstacles to decompose the coverage area into multiple subareas. Then, these subareas and their connections are represented by a graph and coverage is achieved by covering each subarea individually.
A limitation of this method is that it can not handle rectilinear environments. This method was further improved by Garcia and Santos ~\cite{garcia2004mobile}. Ferranti et al.~\cite{ferranti2007brick} presented the Brick-and-Mortar (BM) algorithm, which gradually enlarges the blocks of inaccessible cells (i.e., visited or obstacle cells) by selecting the target cell as the unexplored cell in the local neighborhood with the most surrounding inaccessible cells. The target cell is marked as visited if it does not block the path between any two accessible cells (i.e., explored or unexplored
cells) in the neighborhood; otherwise, it is marked as explored. If there is no unexplored cell in the neighborhood, then an explored cell is selected to traverse through and escape from the dead-end situation.

Luo and Yang~\cite{luo2008bioinspired} developed the Neural Network CPP (NNCPP) algorithm, which places a neuron at the center of every grid cell. The robot path is planned online based on the dynamic activity landscape of the neural network. The activity landscape is updated such that the obstacle cells have negative potential values for collision avoidance and uncovered cells have positive potential values for coverage. The algorithm can escape from the dead-ends  by means of neural activity propagation from the unclean areas to attract the robot. Gupta et al.~\cite{GRP09} proposed an Ising model approach for dynamic CPP. Viet et al.~\cite{viet2013ba} presented the 
Boustrophedon motions and A$^*$ (BA$^*$) algorithm, which performs the back-and-forth motion until no uncovered cell is found in the local neighborhood. Then, to escape from a dead-end situation, the algorithm searches for and computes the shortest path to the nearest backtracking cell using the A$^*$ algorithm~\cite{hart1968formal}.

Hassan and Liu~\cite{hassan2019ppcpp} proposed the Predator–Prey CPP (PPCPP) algorithm, which is based on a reward function consisting of three factors: 1) predation avoidance reward that maximizes the distance from the predator to prey (predator is a virtual stationary point and prey is the coverage spot of the robot), 2) smoothness reward that minimizes the turn angle for continuing motion in a straight line, and 3) boundary reward for covering boundary targets. Song and Gupta~\cite{song2018} developed the $\varepsilon^*$ algorithm, which utilizes an Exploratory Turing Machine (ETM) as a supervisor to guide the robot online with navigation and tasking commands for coverage via back-and-forth motion. To escape from a dead-end, the algorithm uses multi-resolution adaptive potential surfaces (MAPS) to find the unexplored regions. 

\vspace{6pt}
\subsubsection{\textbf{Energy Constrained Methods}} Recently, some CPP methods have been proposed considering the energy constraints of robots. Shen et al.~\cite{shen2020} proposed the $\varepsilon^*+$ algorithm which is an extension of the $\varepsilon^*$ algorithm~\cite{song2018} for energy-constrained vehicles. It generates the back-and-forth coverage path while continuously monitoring the remaining energy of the robot if it is sufficient to return back to the charging station. Upon battery depletion, the robot retreats back to the charging station using the A$^*$ algorithm and after recharging it advances to the nearest unexplored cell to restart the coverage of the remaining uncovered area. Shnaps and Rimon~\cite{shnaps2016} developed the battery powered coverage (BPC) algorithm, where the robot follows circular arcs formed by equidistant points to the source. When the battery discharges to a level just sufficient to return to the base, the robot returns to the charging station along the shortest path. Dogru and Marques~\cite{dogru2022eco} proposed energy constrained online path planning (ECOCPP) algorithm, which generates coverage and retreat paths by following the contours of covered area. 

\vspace{6pt}
\subsubsection{\textbf{Multi-robot Methods}} Some online CPP methods have also been proposed for multi-robot systems~\cite{palacios2019equitable,song2020care,hassan2020dec,wang2024apf} and curvature-constrained robots~\cite{shen2019online,kan2020online,maini2022online,GMDM_TRO}.

\begin{figure}[t]
    \centering  
    \includegraphics[width=0.50\textwidth]{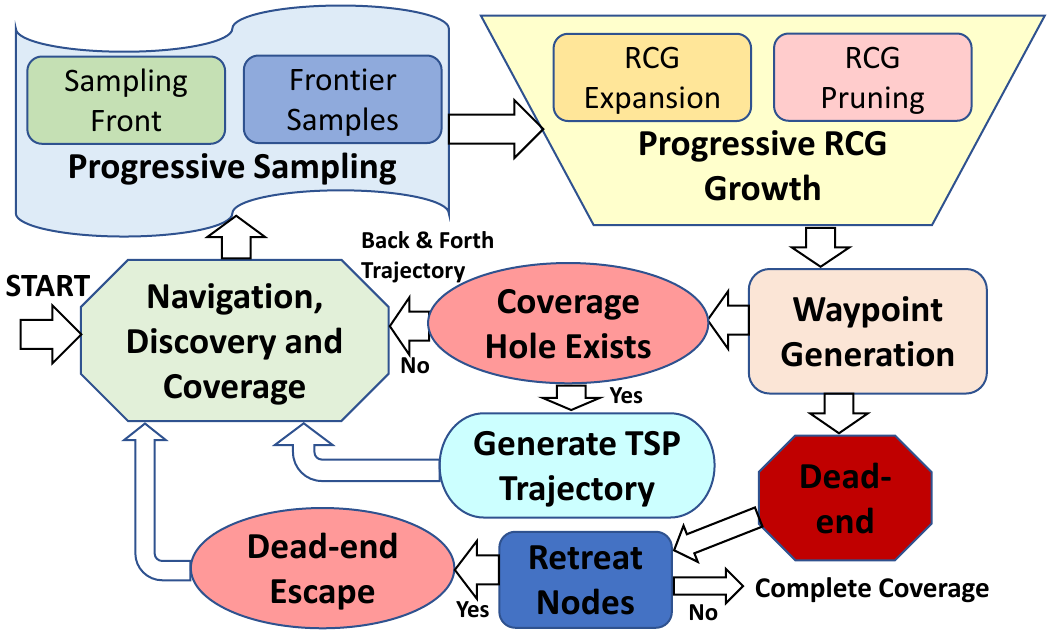}\\ \vspace{6pt}
    \caption{C$^*$ Operation.}\label{fig:CStarOperation}
    \vspace{6pt}
\end{figure}

\vspace{-0pt}
\subsection{Summary of the C$^*$ Algorithm}
This section provides a brief summary of the C$^*$ algorithm. C$^*$ provides complete coverage of unknown environments with dynamic sensing and adaptive coverage trajectory generation. It produces the desired back-and-forth coverage pattern with \textit{in situ} adaptation to the TSP-optimized coverage of coverage holes that are detected along the path during navigation. While most CPP methods are grid-based, C$^*$ is sample-based. It is built upon a novel concept of RCG which is incrementally built as the robot discovers the environment. 

Fig.~\ref{fig:CStarOperation} illustrates the C$^*$ operation. While traversing towards a waypoint on the coverage trajectory, the robot senses and discovers the environment. It then incrementally grows the RCG by progressive sampling in the obstacle-free region detected by onboard sensors (details in Section~\ref{Step1:Sensing}). The progressive sampling is done in a systematic manner within the sampling front using the frontier samples that are adjacent to the unknown area or obstacles  (details in Section~\ref{Step2:Sampling}). Then, the nodes of RCG are formed by these samples and serve as the waypoints for the robot. On the other hand, the edges are formed by connecting the nodes and serve as the potential traversal paths for the robot to form the coverage trajectory. The RCG forms a minimum sufficient graphical representation of the observed obstacle-free space (details in Section~\ref{Step3:RCGformation}). It has a sparse structure built by efficient sampling and pruning strategies, which provides computational and memory-efficient operation and non-myopic waypoint generation. 

To track the coverage progress, the nodes of RCG maintain the state as closed (visited) or open (unvisited). The next waypoint of the robot is selected as an unvisited neighboring node on the RCG. The waypoint is selected in a manner to generate the desired back-and-forth coverage pattern (details in Section~\ref{Step4:waypointgeneration}). Once the waypoint is selected, the robot moves towards it, maps the environment, and covers the region around the trajectory. The above process is then repeated.

\begin{table*}[t]{}
\scriptsize
\caption {Comparison of salient features of C$^*$ with the baseline algorithms.}\label{tab:feature}\vspace{-0pt}
\centering
\setlength\tabcolsep{2.0pt}
\begin{tabular}{l l l l l l l l l} 
 \toprule
 
\specialrule{0em}{1pt}{1pt}\vspace{3pt}

& \textbf{C$^*$} 

& \tabincell{l}{\textbf{PPCPP}\cite{hassan2019ppcpp}  \\ (2019)} 

& \tabincell{l}{\textbf{$\varepsilon^*$}\cite{song2018}  \\ (2018)} 

& \tabincell{l}{\textbf{BA$^*$}\cite{viet2013ba}  \\ (2013)}

& \tabincell{l}{\textbf{NNCPP}\cite{luo2008bioinspired}  \\ (2008)}

& \tabincell{l}{\textbf{BM}\cite{ferranti2007brick}  \\ (2007)}

& \tabincell{l}{\textbf{BSA}\cite{gonzalez2005bsa}  \\ (2005)}

& \tabincell{l}{\textbf{FSSTC}\cite{gabriely2003competitive}  \\ (2003)}\\ 
\toprule 

\specialrule{0em}{3pt}{-3pt}
\tabincell{l}{\textbf{Environment}\\{\textbf{Representation}}} 
& \tabincell{l}{RCG}
& \tabincell{l}{Circular\\disks} 
& \tabincell{l}{Grid map} 
& \tabincell{l}{Grid map} 
& \tabincell{l}{Grid map} 
& \tabincell{l}{Grid map} 
& \tabincell{l}{Grid map}
& \tabincell{l}{Grid map}\\

\specialrule{0em}{3pt}{3pt} 
\tabincell{l}{\textbf{Coverage} \\ \textbf{Pattern}} 
& \tabincell{l}{Back-and-forth \\ with adaptive TSP-\\ optimized coverage \\ of coverage holes} 
& \tabincell{l}{Spiral} 
& \tabincell{l}{Back-and-forth} 
& \tabincell{l}{Back-and-forth}
& \tabincell{l}{Back-and-forth} 
& \tabincell{l}{No obvious \\ pattern observed} 
& \tabincell{l}{Spiral}
& \tabincell{l}{Spiral}\\

\specialrule{0em}{3pt}{3pt}
\tabincell{l}{\textbf {Coverage}\\ \textbf{Method}} 
& \tabincell{l}{Creates RCG \\ incrementally by \\ progressive samp-\\ling, RCG tracks\\ coverage and \\ generates waypoints\\ for navigation}
& \tabincell{l}{A reward function \\ guides waypoint \\ generation in the \\ neighborhood} 
& \tabincell{l}{ETM generates the \\ navigation and \\ tasking commands\\ using the tracking \\ information stored\\  in MAPS} 
& \tabincell{l}{Follows a priority\\ of north-south-\\east-west to pick\\ waypoints} 
& \tabincell{l}{Dynamic neural\\ activity landscape \\ guides waypoint \\ generation} 
& \tabincell{l}{Picks unexplored\\ cell in the neigh-\\borhood as way-\\point to gradually \\ thicken the blocks \\  of inaccessible cells} 
& \tabincell{l}{Follows covered \\ area or obstacles \\ on a specific \\ lateral side }
& \tabincell{l}{Spanning tree \\ guides waypoint \\ generation in the \\ neighborhood}\\

\specialrule{0em}{3pt}{3pt}
\tabincell{l}{\textbf {Waypoint}\\ \textbf{Selection}} 
& \tabincell{l}{Non-myopic \\ due to sparse \\ structure of RCG}
& \tabincell{l}{Myopic} 
& \tabincell{l}{Myopic} 
& \tabincell{l}{Myopic} 
& \tabincell{l}{Myopic} 
& \tabincell{l}{Myopic} 
& \tabincell{l}{Myopic}
& \tabincell{l}{Myopic}\\

\specialrule{0em}{3pt}{3pt}
\tabincell{l}{\textbf{Proactive}\\ \textbf{Coverage Holes}\\ \textbf{Prevention}} 
& \tabincell{l}{Detects coverage\\  holes during back\\  \& forth motion\\ and covers them \textit{in}\\ \textit{situ} by a TSP path}
& \tabincell{l}{None} 
& \tabincell{l}{None} 
& \tabincell{l}{None} 
& \tabincell{l}{None}
& \tabincell{l}{None} 
& \tabincell{l}{None}
& \tabincell{l}{None}\\

\specialrule{0em}{3pt}{3pt}
\tabincell{l}{\textbf{Escape from}\\{\textbf{Dead-end}}} 
& \tabincell{l}{Moves to the \\ nearest retreat\\ node} 
& \tabincell{l}{Moves to the\\ nearest un-\\covered point} 
& \tabincell{l}{Searches MAPS\\ for high-potential\\ uncovered regions} 
& \tabincell{l}{Moves to the\\ nearest back-\\ tracking point} 
& \tabincell{l}{Moves to unco-\\vered area using\\ neural activity   \\ landscape} 
& \tabincell{l}{Picks an explored \\ cell repeatedly\\ until an unexp-\\lored  cell is found} 
& \tabincell{l}{Moves to the \\ nearest  back-\\ tracking point} 
& \tabincell{l}{Moves to parent\\ cell until a new\\ uncovered \\ neighbor is found}\\
\toprule
 \end{tabular}
 \vspace{-6pt}
 \end{table*}

C$^*$ has two important features, as described below.

\vspace{6pt}
\subsubsection{\textbf{Escaping Dead-ends}} It is possible that the robot reaches a point where there are no unvisited nodes on the RCG within the local neighborhood, known as a dead-end. To escape from the dead-ends, C$^*$ utilizes a dead-end escape strategy, which is built upon a concept of retreat nodes. The retreat nodes are all unvisited nodes on the RCG which are adjacent to the visited nodes. The dead-end escape strategy searches for the nearest retreat node using the A* algorithm~\cite{hart1968formal} and sets it as the next waypoint. Upon reaching this node, the robot resumes the back-and-forth coverage of the remaining area. 

\vspace{12pt}
\subsubsection{\textbf{Prevention of Coverage Holes}} 
A coverage hole is an isolated, obstacle-free, and uncovered region that is surrounded by obstacles and covered regions on all sides. While navigating a certain coverage pattern (e.g., back-and-forth or spiral), it is possible that the robot sometimes comes across such uncovered isolated regions, which if bypassed form coverage holes, as shown in Fig.~\ref{fig:bypass_part1}. Most CPP methods cover such holes later by return trajectories from distant regions, thus causing unnecessary overlaps, longer trajectory lengths and higher coverage times. C$^*$ provides a proactive strategy to prevent the formation of such coverage holes.  It detects such isolated regions \textit{in situ} and covers them using TSP-based locally optimal coverage trajectories before resuming the back-and-forth search to cover the remaining area, as shown in Fig.~\ref{fig:bypass_part2}. This proactive strategy prevents the formation of coverage holes, thereby improving the coverage performance and providing user satisfaction and convenience.

\begin{figure}[t]
    \centering
    \subfloat[Coverage holes formed by a typical back-and-forth trajectory.]{
        \includegraphics[width=0.20\textwidth]{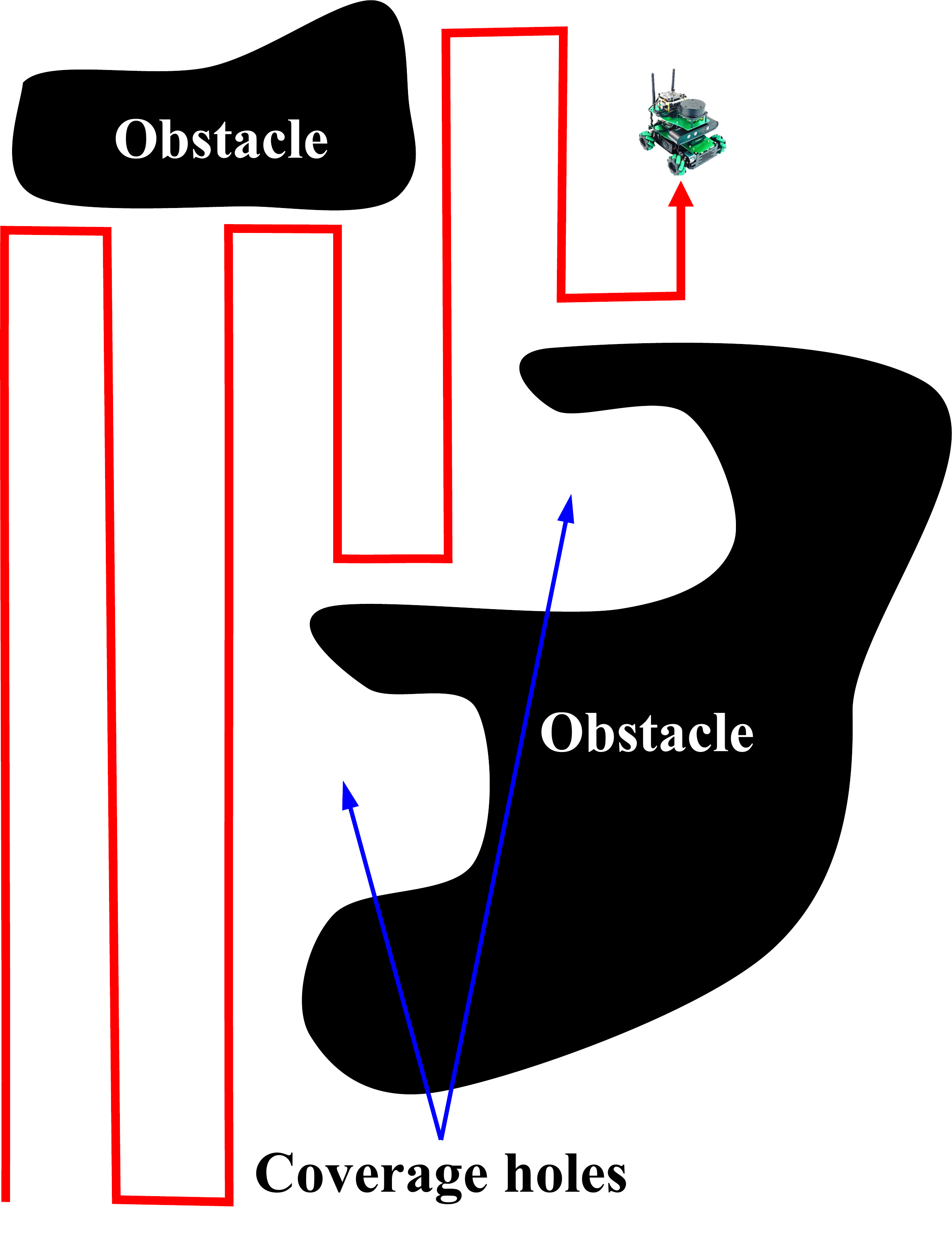}\label{fig:bypass_part1}}\hspace{-2pt}\quad
    \centering
    \subfloat[C$^*$ coverage using the back-and-forth motion with adaptation to TSP-trajectories to prevent the formation of coverage holes.]{
        \includegraphics[width=0.20\textwidth]{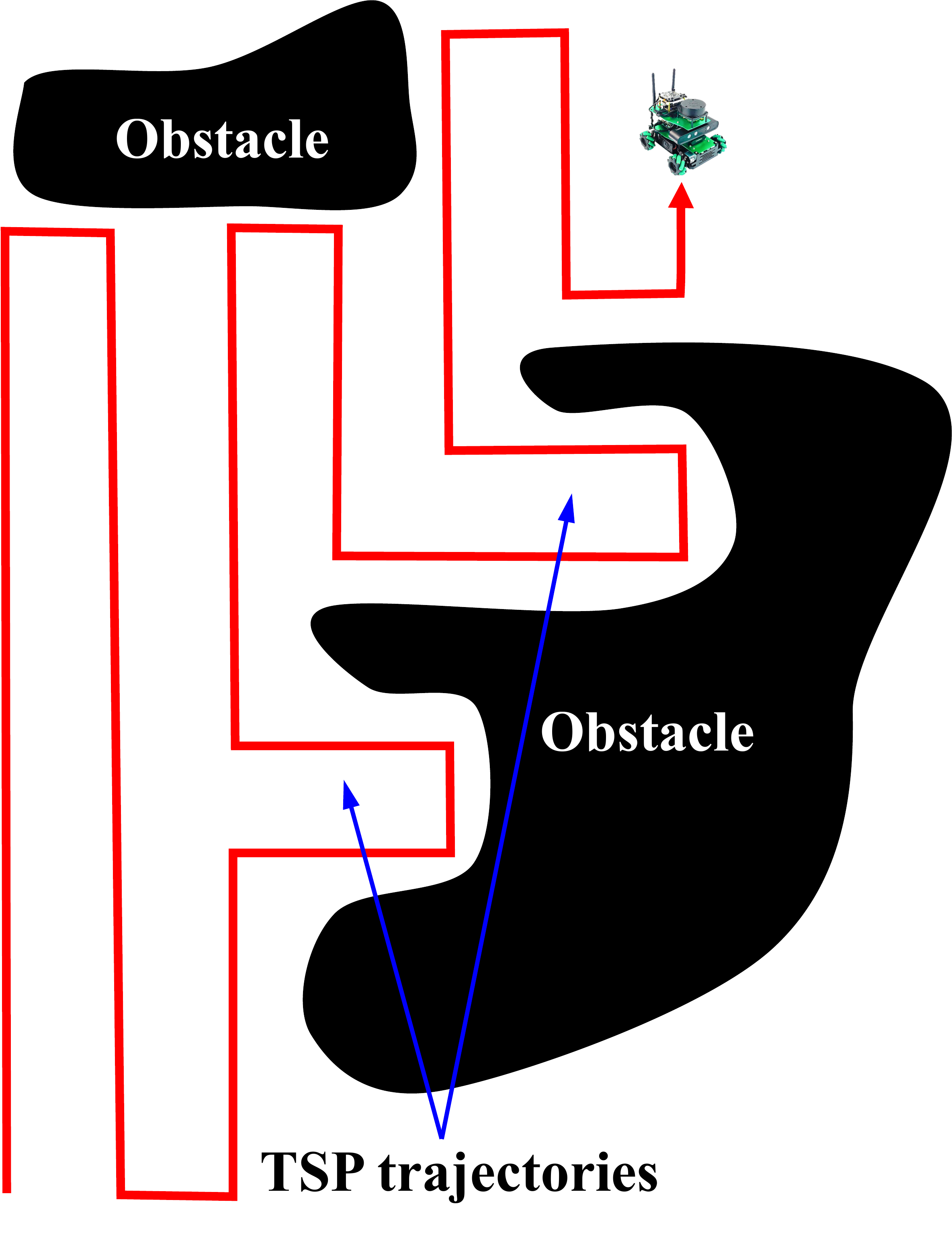}\label{fig:bypass_part2}}\\
          \caption{Coverage holes prevention by  C$^*$ using TSP-trajectories.}\label{fig:coveragehole}
          \vspace{-6pt}
\end{figure}

\vspace{0pt}
\subsection{Comparison of C$^*$ with the Baseline Algorithms}
Table~\ref{tab:feature} presents the salient features of C$^*$ in comparison to seven different baseline algorithms. 
\begin{itemize}
\item The baseline algorithms are grid-based and maintain a fine grid map of the environment. On the other hand, C$^*$ is sample-based and represents the environment using the RCG, which is incrementally built by progressive sampling of the observed area. The RCG is a minimum sufficient graph generated by efficient sampling and pruning techniques to provide high computational and memory efficiency.
\item  The baseline algorithms generate myopic waypoints, i.e., within the local neighborhood on a grid. On the other hand, C$^*$  generates non-myopic waypoints by means of its sparse RCG structure to reduce the total number of algorithm iterations for complete coverage. 
\item The baseline algorithms could produce coverage holes during the coverage progress, which are covered later by return trajectories. This can result in strongly overlapped trajectories and high coverage times. On the other hand, C$^*$ actively detects potential coverage holes during navigation, performs TSP-based optimal coverage of such regions, and then resumes the back-and-forth motion. This not only prevents the formation of undesired coverage holes but also saves coverage time. 
\item The baseline algorithms using spiral paths are limited in their applications because turning is regarded as expensive. In this regard, C$^*$ produces the desired back-and-forth coverage pattern. 
\item The baseline algorithms using the cellular decomposition methods~\cite{acar2002sensor} require the detection of critical points on the obstacles. C$^*$ does not have any such requirement. 
\item Finally, the algorithmic simplicity and computational efficiency of C$^*$ make it easy to implement on onboard processors for real-time applications.
\end{itemize}

\vspace{-6pt}
\subsection{Contributions}
The paper developed a novel sample-based CPP algorithm, called C$^*$, for real-time adaptive coverage of unknown environments. C$^*$ is built upon the concept of an RCG, which is incrementally constructed during robot navigation by progressive sampling along the boundary of discovered area. By virtue of efficient sampling and pruning techniques, the RCG is constructed to be a minimum-sufficient graph, which i) provides computational and memory efficiency for real-time implementation, and ii) generates non-myopic waypoints to reduce the number of iterations for complete coverage. C$^*$ features a dead-end escape strategy to escape from dead-ends using the concept of retreat nodes. In addition, C$^*$ features a coverage hole prevention strategy, which adapts the back-and-forth coverage trajectory to a TSP-based locally optimal trajectory to prevent the formation of coverage holes and thereby minimize the overall coverage time. The above features make C$^*$ highly efficient in generating near optimal trajectories, which is successfully demonstrated by i) extensive simulations on diverse scenarios yielding significantly better performance over several baseline methods, and ii) real-experiments.

\vspace{-0pt}
\subsection{Organization}
The remaining paper is organized as follows. Section~\ref{sec:problem_description} presents the CPP problem and the objectives of C$^*$. Section~\ref{sec:cstarmethodology} describes the details of the C$^*$ algorithm with its iterative steps. Section~\ref{sec:analysis} provides the algorithm complexity and complete coverage proof. Section~\ref{sec:results} shows the comparative evaluation results of C$^*$ with the baseline algorithms along with the experimental validation. This section also presents the application and comparative evaluation of C$^*$ for energy-constrained CPP and multi-robot CPP problems. Finally, Section~\ref{sec:conclusions} concludes the paper with suggested recommendations for future work. Appendices~\ref{appendixA} and \ref{appendixB} show additional results and present scalability analysis for different map sizes, respectively.

\vspace{6pt}
\section{Problem Description}\label{sec:problem_description}
Let $\mathcal{R}$ be a coverage robot, as shown in Fig.~\ref{fig:robotsensing}, which is equipped with the following:
\begin{itemize}
\item A localization device (e.g., GPS, IMU, wheel encoder, and an Indoor Localization System) or SLAM~\cite{Song_sose2015} to locate the position of the robot,
\item A range detector and mapping sensor (e.g., lidar and ultrasonics) of range $r_d \in \mathbb{R}^+$, to detect obstacles within its field of view (FOV) and map the environment, and 
\item  A coverage device/sensor (e.g., cleaning brush) of range $r_c \in \mathbb{R}^+$, to carry out the main coverage task.
\end{itemize} 

Let $\mathcal{A}\subset\mathbb{R}^2$ be the unknown area populated by obstacles of arbitrary shapes that is tasked for coverage by robot $\mathcal{R}$. Let $\mathcal{A}^o\subset\mathcal{A}$ be the space occupied by  obstacles. Then the obstacle-free space $\mathcal{A}^f=\mathcal{A}\setminus\mathcal{A}^o$ is the coverage space that is desired to be covered by the coverage device of robot $\mathcal{R}$ to perform a certain task (e.g., cleaning and demining). It is assumed that the coverage space is connected, i.e., any point in the coverage space is reachable by the robot from any other point, to form the complete coverage trajectory.

\begin{figure}[t]
    \centering
    \subfloat[Sensing and tasking areas.]{
        \includegraphics[width=0.20\textwidth]{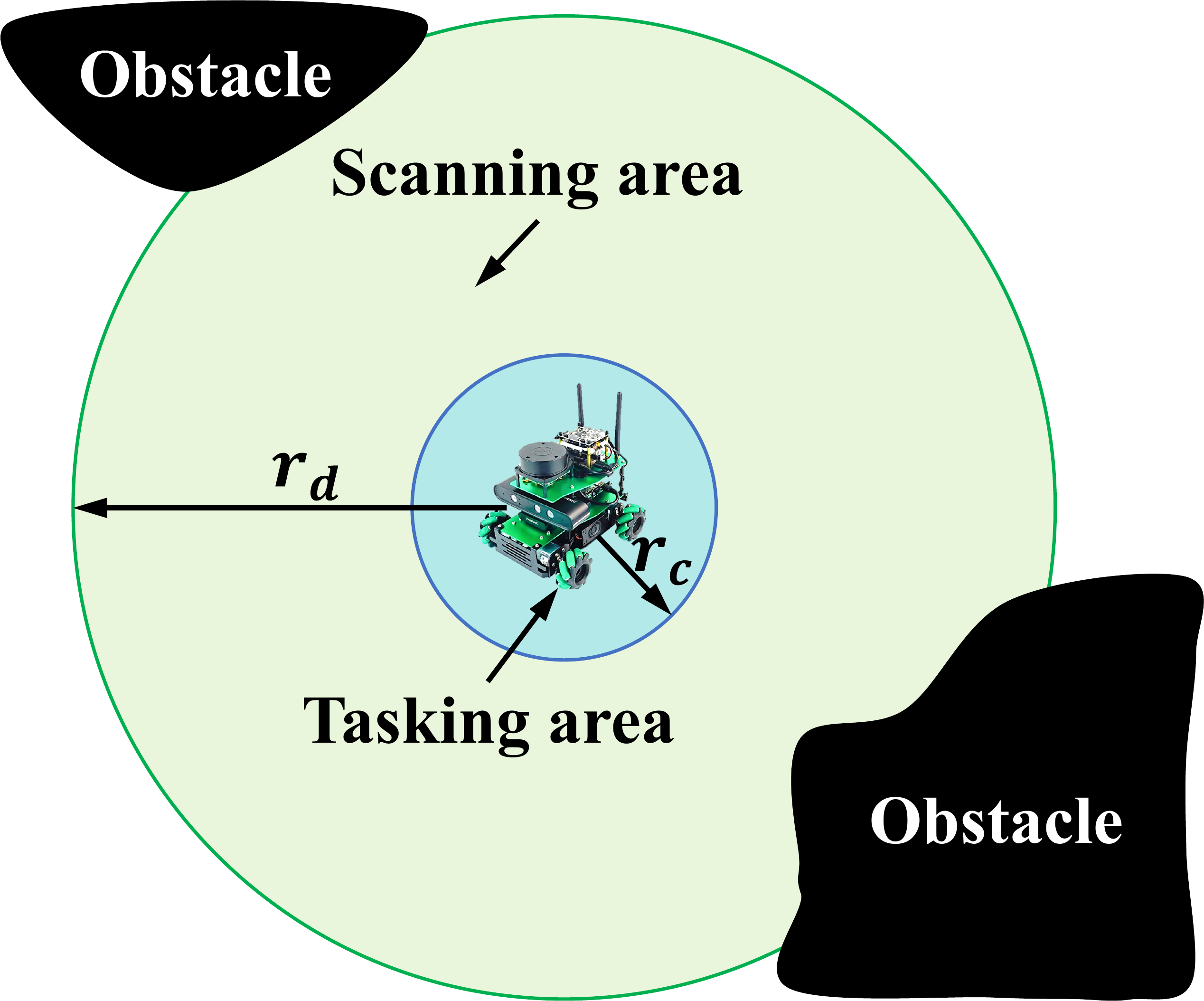}\label{fig:robotsensing}}\hspace{-15pt}\quad
    \subfloat[Coverage trajectory.]{
        \includegraphics[width=0.28\textwidth]{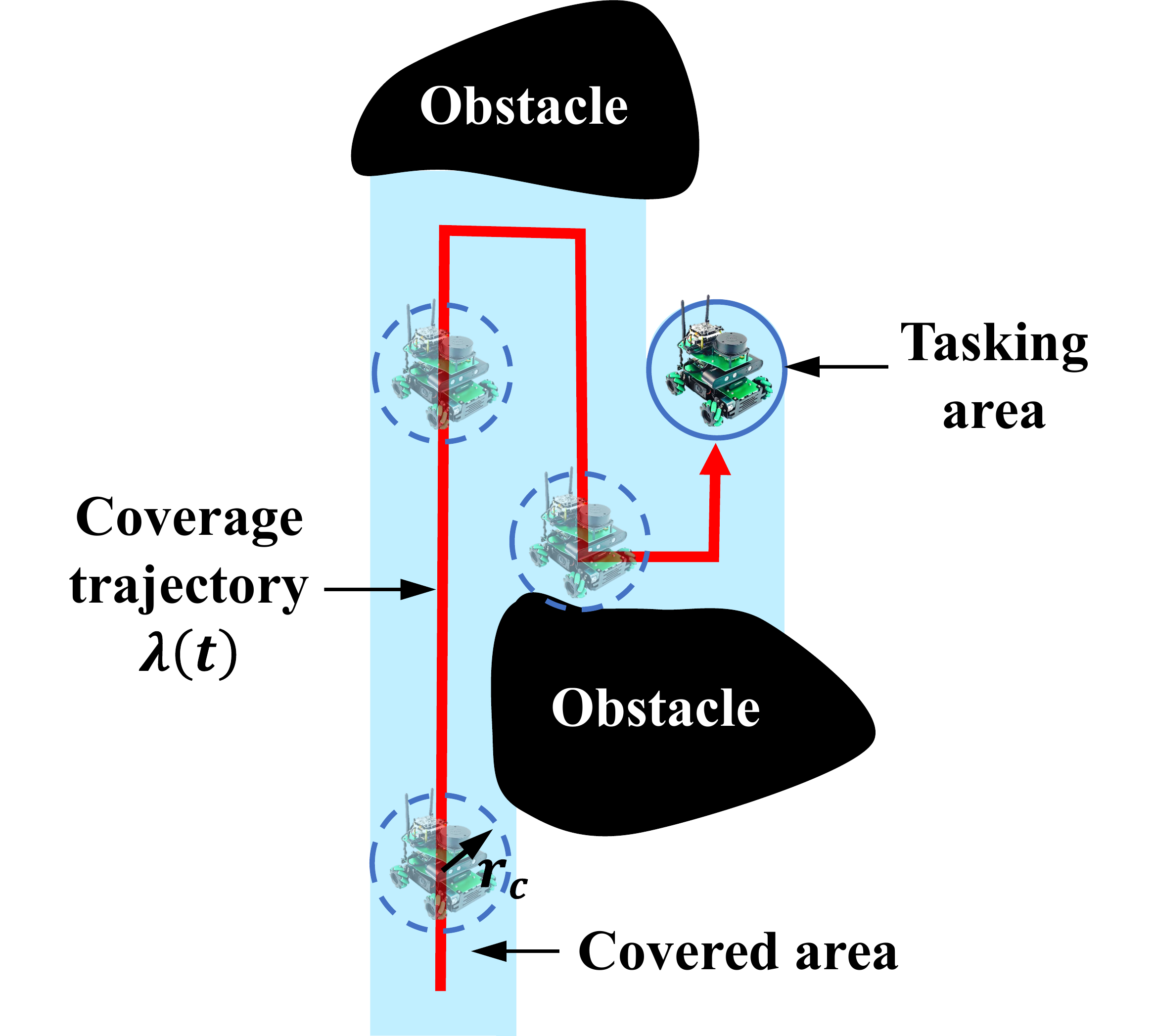}\label{fig:trajectory}}\hspace{-8pt}\quad
    \caption{Robot: a) sensing and tasking areas and b) coverage trajectory.}
    \vspace{6pt}
\end{figure}

\vspace{3pt}
\begin{defn}[\textbf{Coverage Trajectory}] Let $\lambda: [0,T]\rightarrow \mathcal{A}^f$, where $\lambda(t)\in\mathcal{A}^f$ is the point visited by robot $\mathcal{R}$ at time $t\in[0,T]$ and $T\in \mathbb{R}^+$ is the total coverage time. Then, the coverage trajectory is defined as $\Lambda_T\triangleq[\lambda(t)]_{t=0}^T$, with $\lambda(0)$ and $\lambda(T)$ as its start and end points, respectively.  
\end{defn}
Fig.~\ref{fig:trajectory} shows an example of the coverage trajectory and the area covered by the robot's coverage device  during navigation.  Let $\mathcal{A}^{c}(\lambda(t))\subset \mathcal{A}^f$ be the area covered by the robot's coverage device at point $\lambda(t) \in \Lambda_T$ on the coverage trajectory. Then, let $\mathcal{A}^c=\bigcup_{t'\in[0,t]} \mathcal{A}^{c}(\lambda(t'))$ denote the total area covered until time $t$ and $\mathcal{A}^{uc}=\mathcal{A}^f\setminus{\mathcal{A}}^c$ denote the uncovered area. The coverage trajectory has to be generated such that the robot's coverage device covers the entire free space $\mathcal{A}^f$.

\vspace{3pt}
\begin{defn}[\textbf{Complete Coverage}]  The coverage of $\mathcal{A}^f$ is said to be complete in time $T$ if  
\begin{equation}
\mathcal{A}^f \subseteq \bigcup_{t\in[0,T]} \mathcal{A}^{c}(\lambda(t)).
\end{equation}

\vspace{3pt}
\end{defn}
$\bullet$ \textit{\textbf{Objective}}: The objective of C$^*$ is to generate the coverage trajectory $\Lambda_T$ of robot $\mathcal{R}$ to provide complete coverage of $\mathcal{A}^f$ in minimum time. It is essential that C$^*$ does not get stuck at dead-ends and provide uninterrupted coverage. Furthermore, it is envisioned that C$^*$ produces the desired back-and-forth coverage pattern with synergistic adaptations to the TSP-based locally optimal coverage trajectories to prevent the formation of coverage holes. Note that significant coverage time could be saved by preventing coverage holes to avoid return trajectories from distant regions to cover such holes later. Moreover, it is desired that C$^*$ minimizes the coverage time by minimizing the trajectory length, overlaps, and the number of turns. Finally, it is desired  that C$^*$ is scalable to multi-robot CPP problems.

\vspace{-0pt}
\section{C$^*$ Algorithm}
\label{sec:cstarmethodology}
C$^*$ is an adaptive CPP algorithm for unknown environments. It generates the coverage trajectory incrementally during robot navigation based on the environment information collected by sensors. C$^*$ generates the coverage trajectory via the following iterative steps until complete coverage is achieved. 

\begin{itemize}
\item [1.] \textit{Navigation, discovery and coverage}: In this step, the robot moves towards the current waypoint of the coverage  trajectory, discovers and maps the environment using its onboard sensors (e.g., the range detector), and in parallel performs the coverage task using its coverage device.
\item [2.] \textit{Progressive sampling}: Once the robot reaches the current waypoint, this step creates a sampling front in the newly discovered area and systematically generates new samples for the purpose of growing the RCG.
\item [3.] \textit{Progressive RCG construction}: Once the sampling is done, this step grows the current RCG using the new samples via efficient expansion and pruning strategies. 
\item [4.] \textit{Coverage trajectory generation}: Once the RCG is updated, this step determines the next waypoint of the coverage trajectory using the updated RCG.
\end{itemize}

\vspace{-0pt}
\subsection{Overview of the Algorithm Steps} \label{AlgOverview}
Let the iterations of C$^*$ be denoted by $i \in \{0,1...m\}$, where $m \in \mathbb{N}^+$. The total number of iterations is  $m+1$ in which complete coverage is achieved. In each iteration $i$ the above four steps are performed. Let $p_0 \in \mathcal{A}^f$ be the starting point. 

$\bullet$ \textit{Iteration $i=0$}: C$^*$ initializes when the robot arrives at $p_0$ at time $t_0=0$, i.e., $\lambda(t_0)=p_0$. Then, it stays at $p_0$ and senses the environment within the FOV of its range detector. Next, it samples the discovered area and generates the first installment of samples. (Details of sampling are in Section~\ref{Step2:Sampling}). Using these samples, it creates the first portion of RCG by setting samples as nodes and connecting nodes to form edges. (Details of RCG construction are in Section~\ref{Step3:RCGformation}). Then, this first portion of RCG, in turn, selects a node as an intermediate goal and sets its position as the first waypoint $p_1 \in \mathcal{A}^f$. (Details of waypoint generation are in Section~\ref{Step4:waypointgeneration}). This finishes iteration $i=0$ with the output to the robot as waypoint $p_1$. 

$\bullet$ \textit{Iteration $i=1$}: The robot moves from $p_0$ to $p_1$, thus forming the first portion of coverage trajectory. During this motion, the robot continuously discovers the environment using its range detector to incrementally build the map. While navigating the robot also performs the coverage task along the trajectory using its coverage device/sensor. The robot reaches $p_1$ at time $t_1\in(0,T]$, i.e., $\lambda(t_1)=p_1$.  Upon reaching $p_1$, it repeats Steps $2-4$ of sampling, RCG growth, and trajectory generation. This finishes iteration $i=1$ with the output to the robot as the next waypoint $p_2 \in \mathcal{A}^f$.

$\bullet$ \textit{Iterations $i \in \{1,...m-1\}$}: In general, the robot starts at point $p_{i-1} \in \mathcal{A}^f$. The input to  iteration $i$ is the next waypoint $p_i \in \mathcal{A}^f$, which was generated as the output of iteration $i-1$. Thus, at iteration $i$, the robot moves from $p_{i-1}$ to $p_i$ and forms the $i^{th}$ portion of the coverage trajectory. During this motion it discovers the environment using its range detector. It also performs the coverage task along the coverage trajectory using its coverage device/sensor. The robot reaches $p_i$ at time $t_i\in(0,T]$, i.e., $\lambda(t_i)=p_i$. Upon reaching $p_i$, it generates samples in the newly discovered area. Note that the newly discovered area is all the area that has been sensed during motion from $p_{i-1}$ to $p_i$ and that has not been previously observed. Once the samples are created, it grows the existing RCG using these samples. Finally, the updated RCG generates the next waypoint $p_{i+1}\in \mathcal{A}^f$ to continue forming the coverage trajectory until complete coverage is achieved.  Note that it is possible that at a certain iteration no new area is discovered depending on the scenario; thus, no new samples are created and RCG is not grown. In this case, the RCG relies on its existing structure and uses its available nodes to generate the next waypoint. 

$\bullet$ \textit{Iteration $i=m-1$}: The RCG generates the final destination point $p_m  \in \mathcal{A}^f$. This means that at the end of iteration $m-1$, the RCG is a) fully grown, i.e., the coverage area is fully discovered, sampled, and no more eligible samples exist that could be used to grow the RCG, and b) fully exploited, i.e., the last available node is selected to generate $p_m$ as the final waypoint on the coverage trajectory. 

$\bullet$ \textit{Iteration $m$}: Finally, the robot starts at $p_{m-1}$ and moves to $p_{m}$. At this stage, no new area is discovered; thus, no new frontier samples are created and the RCG is not grown any further. Thus, upon reaching $p_m$, the coverage is complete. 

In this manner, C$^*$ generates the waypoint sequence to form the coverage trajectory as described below. 

\vspace{6pt}
\begin{defn}[\textbf{Waypoint Sequence}] The waypoint sequence  $\mathcal{P}=\{p_0,p_1,...p_{m}\}$ is the sequence of intermediate goal points $p_i \in \mathcal{A}^f$ for the robot $\mathcal{R}$ to form the coverage trajectory, where $p_0$ and $p_m$ are its start and end points, respectively. 
\end{defn}

By following the waypoint sequence $\mathcal{P}$, the robot forms the coverage trajectory $\Lambda_T\equiv \lambda(t_o)\rightarrow\lambda(t_1)\rightarrow....\lambda(t_m)$, such that $\lambda(t_i)=p_i, i=0,...m$, where $t_0=0$ and $t_m=T$. Note that $p_i\neq p_j, \forall i\neq j,  i,j \in \{0,1,...m\}$ to avoid any loops. 

\vspace{-3pt}
\subsection{Details of the Algorithm Steps}\label{AlgDetails}
The details of the four steps are presented below.

\vspace{6pt}
\subsubsection{\textbf{Navigation, Discovery and Coverage}}\label{Step1:Sensing}
Since C$^*$ operates in unknown environments, it first collects data using onboard sensors to dynamically discover and map the environment in order to find a coverage trajectory. In each iteration it could only construct a partial map of the environment. 

At iteration $i=0$, the robot arrives at the start point $p_0$, where it senses the environment to make an initial map around $p_0$. Subsequently, at iteration $i \in \{1,...m-1\}$, the robot has a waypoint $p_i$ available. As such, it moves from $\lambda(t_{i-1})=p_{i-1}$ at time $t_{i-1}$ and reaches $\lambda(t_{i})=p_i$ at time $t_{i}$, thus forming the $i^{th}$ portion of the coverage trajectory. During this motion from $p_{i-1}$ to $p_{i}$, the robot performs the coverage task using its coverage device. It also continuously discovers the environment and maps the obstacles within the FOV of its range detector. Fig.~\ref{fig:progressivesampling} shows the dynamic discovery process.

\begin{figure}[t]
    \centering  
    \includegraphics[width=0.32\textwidth]{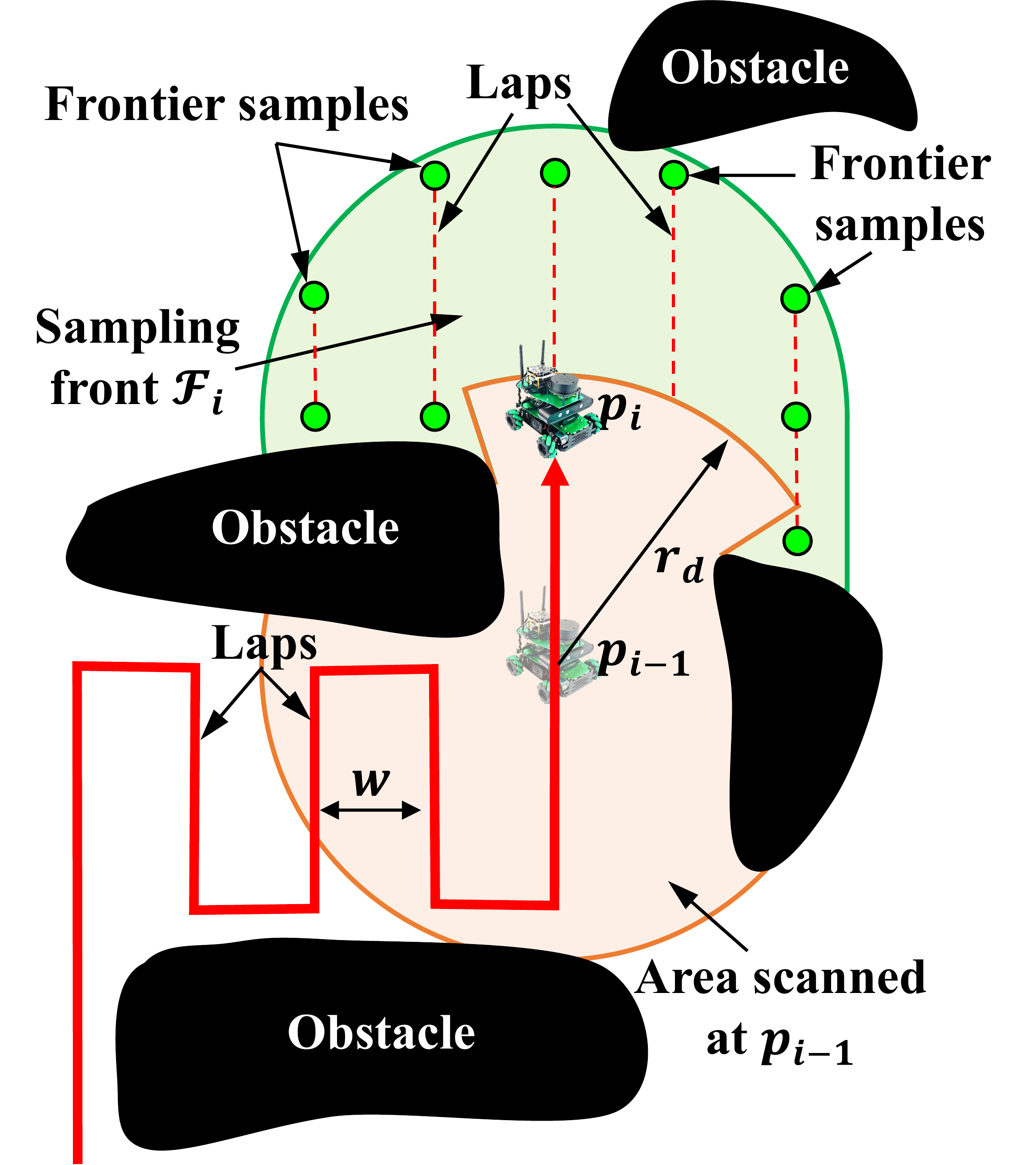}\\ \vspace{-0pt}
    \caption{An example showing i) dynamic discovery of the environment during navigation from $p_{i-1}$ to $p_{i}$ to generate the sampling front $\mathcal{F}_i$, and ii) progressive sampling on $\mathcal{F}_i$ to generate the frontier samples along the boundary of unknown and obstacle regions.}\label{fig:progressivesampling}
    \vspace{-9pt}
\end{figure}

\begin{figure*}[t]
    \centering
    \subfloat[Generation of frontier samples.]{
        \includegraphics[width=0.24\textwidth]{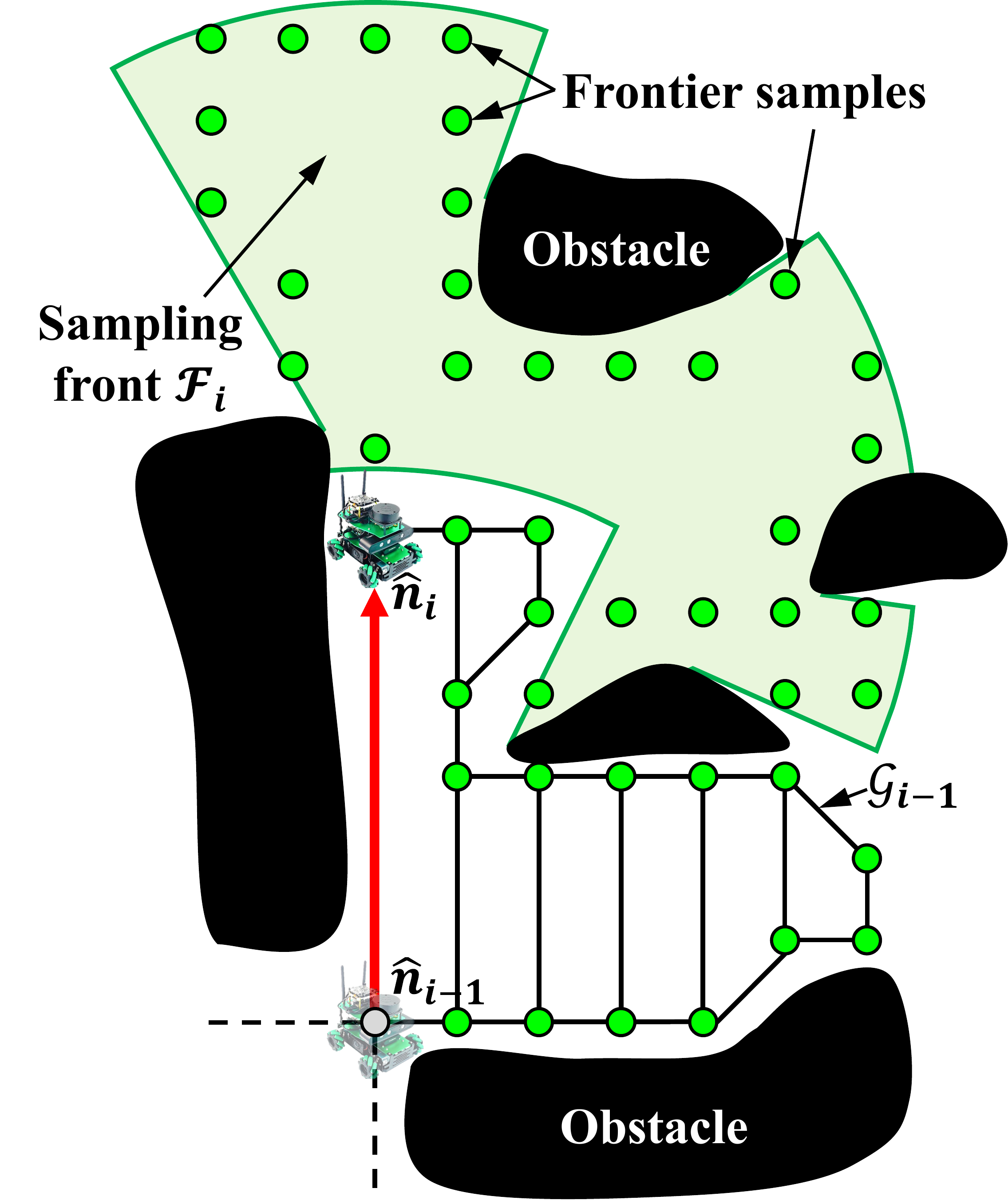}\label{fig:RCG_part1}}\hspace{-8pt}\quad
    \centering
    \subfloat[RCG expansion process.]{
        \includegraphics[width=0.24\textwidth]{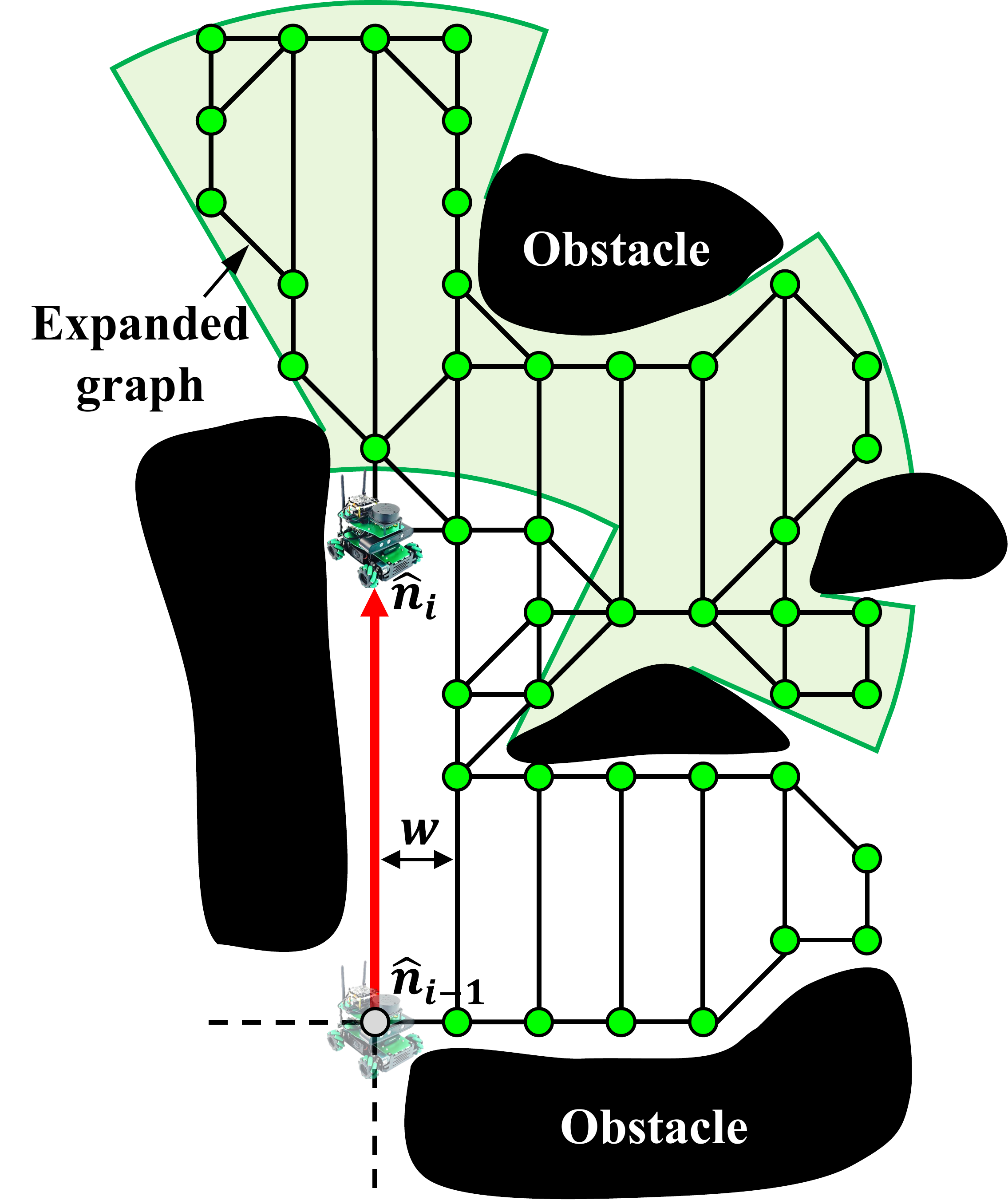}\label{fig:RCG_part2}}\hspace{-8pt}\quad
    \centering
    \subfloat[RCG pruning process.]{
        \includegraphics[width=0.24\textwidth]{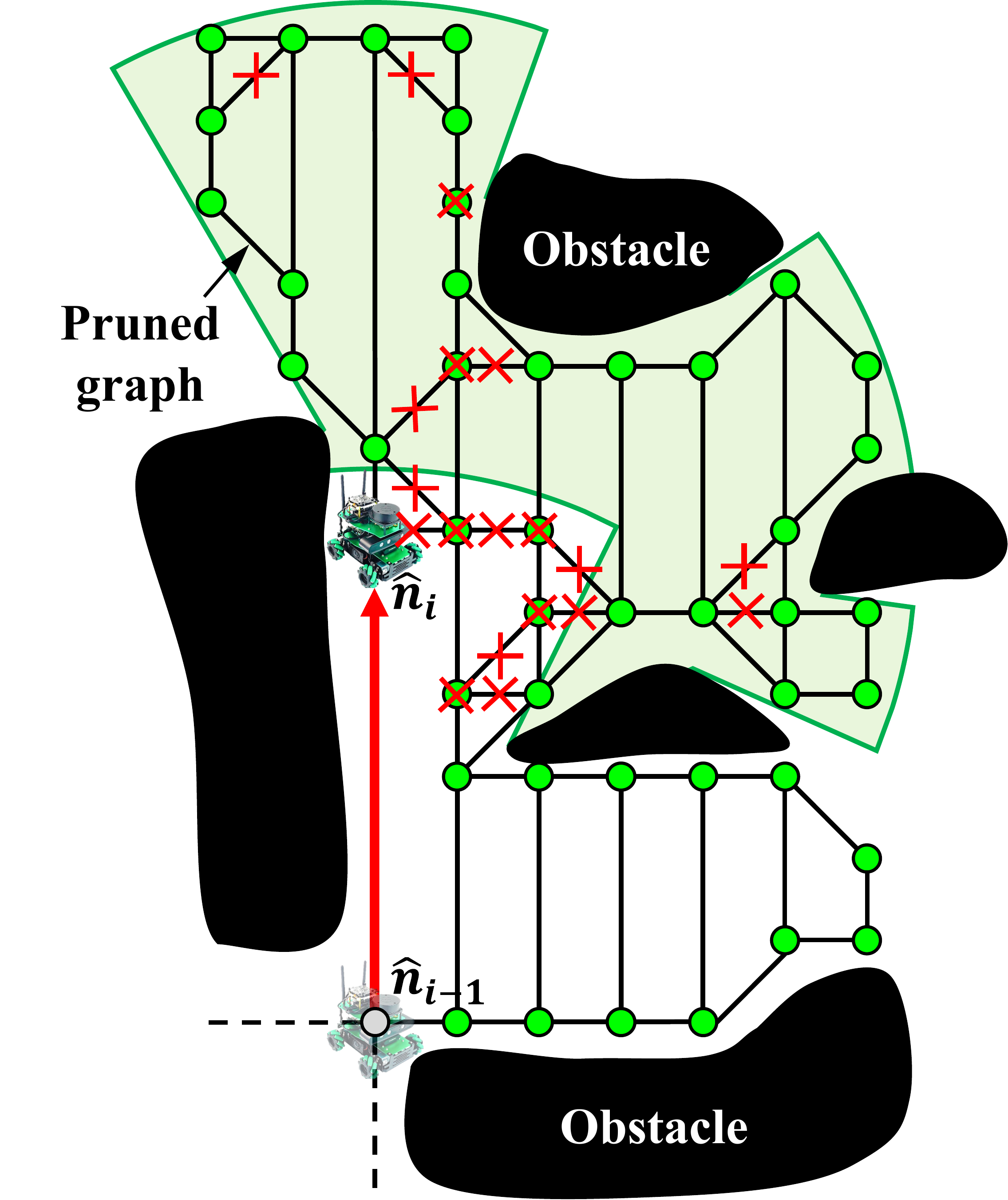}\label{fig:RCG_part3}}\hspace{-8pt}\quad
    \centering
    \subfloat[Updated RCG after pruning.]{
        \includegraphics[width=0.24\textwidth]{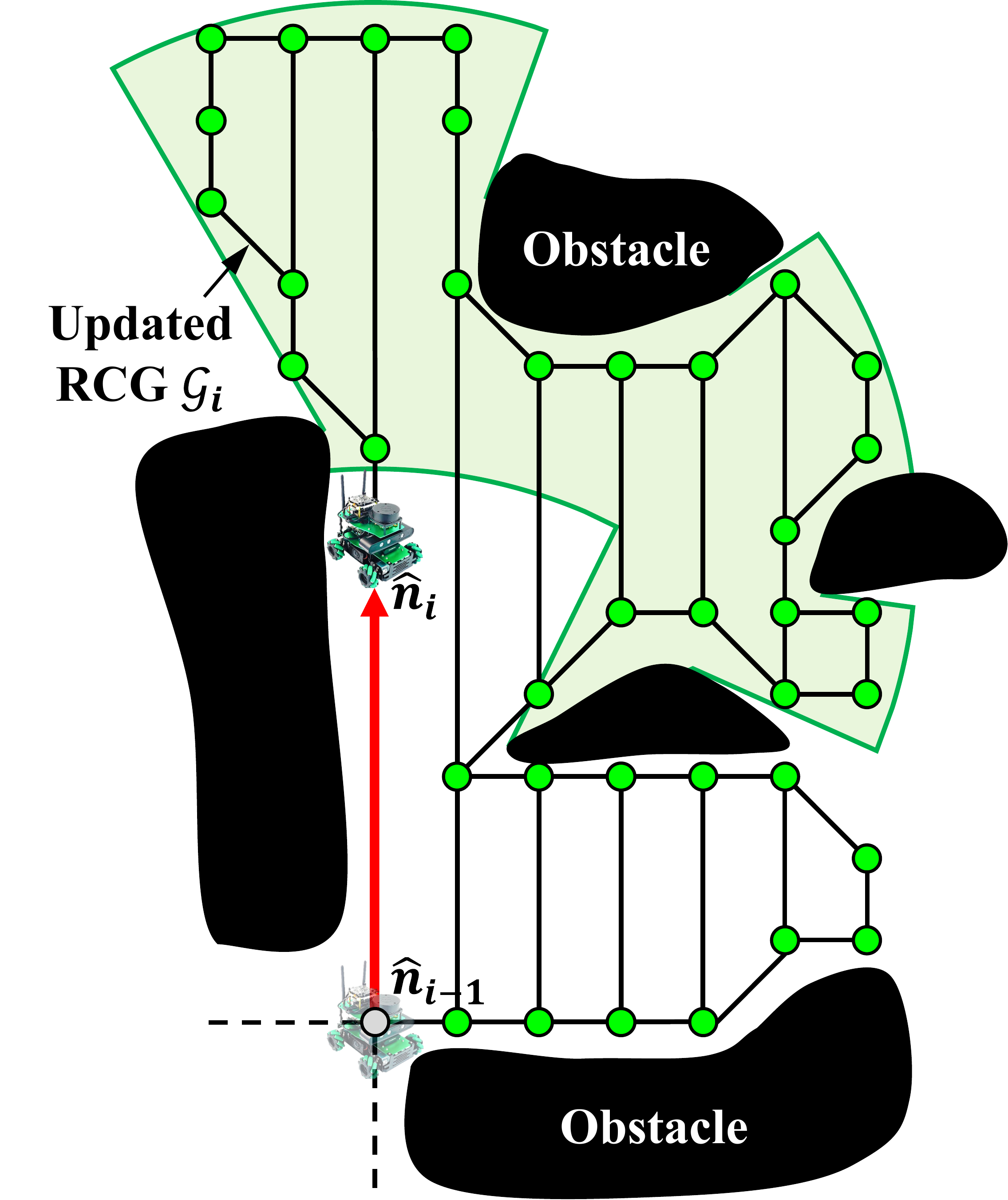}\label{fig:RCG_part4}}\hspace{-8pt}\\
          \caption{RCG growth at iteration $i$ by 1) expansion into $\mathcal{F}_i$ using the frontier samples and 2) pruning the inessential nodes and edges.}\label{fig:RCG_construction}
          \vspace{-6pt}
\end{figure*}

Let $\mathcal{A}^{d}(\lambda(t))\subset \mathcal{A}$ be the area scanned by the range detector when the robot is at a point $\lambda(t) \in \mathcal{A}^f$ at time $t$. Then, the total area discovered and mapped in the $i^{th}$ iteration is the cumulation of the area scanned from time $t_{i-1}$ to $t_i$, given as 

\begin{equation}
\mathcal{A}^d_{i} = \bigcup_{t\in[t_{i-1},t_{i}]}\mathcal{A}^d(\lambda(t)).
\end{equation}
This area $\mathcal{A}^d_{i}$ is used for progressive sampling and growing the RCG in the next steps. The total area discovered from the beginning of the coverage operation is given as $\mathcal{A}^d_{tot}=\bigcup_i\mathcal{A}^{d}_i$. The remaining area $\mathcal{A}^u=\mathcal{A}\setminus \mathcal{A}^d_{tot}$ is still unknown.

At iteration $m$, the robot moves from $p_{m-1}$ to $p_{m}$ and senses the environment; however, since all area has been already mapped until iteration $m-1$, no new area is discovered. 

\vspace{12pt}
\subsubsection{\textbf{Progressive Sampling}}\label{Step2:Sampling}
The progressive sampling is the process of incremental sampling in the newly discovered area $\mathcal{A}^d_i$.  At iteration $i \in \{0,...m-1\}$, once the robot reaches $p_i$ and completes the discovery step, C$^*$ starts the progressive sampling step. The objective of sampling is to grow the RCG in $\mathcal{A}^d_i$. The RCG, in turn, generates the new waypoint $p_{i+1}$ to form the future trajectory of the robot. Note that at the final iteration $m$ there is no need for progressive sampling since the robot reaches the end point $p_m$ of the coverage trajectory.

\vspace{6pt}
a) \textit{\textbf{Creation of the Sampling Front}}: For progressive sampling,  C$^*$ first creates a sampling front. 

\begin{defn}[\textbf{Sampling Front}] The sampling front $\mathcal{F}_{i} \subseteq \mathcal{A}^d_i$, $i\in\{0,...m-1\}$, is the obstacle-free and unsampled portion of the area $\mathcal{A}^d_i$ discovered in the $i^{th}$ iteration.
\end{defn}

Figs.~\ref{fig:progressivesampling} and~\ref{fig:RCG_part1} show examples of the sampling front $\mathcal{F}_{i}$ generated by the robot during navigation from $p_{i-1}$ to $p_{i}$. Note that it is possible that some portion of $\mathcal{A}^d_i$ has been already discovered before and sampled in a previous iteration. Thus, $\mathcal{F}_{i}$ includes only the newly discovered and unsampled area.

\vspace{6pt}
b) \textit{\textbf{Sampling Strategy}}: The sampling strategy is described next. Let $\mathcal{S}$ be the set of all feasible samples that lie in the coverage space $\mathcal{A}^f$. Once $\mathcal{F}_i$ is generated, it is sampled with the frontier samples $\mathcal{S}_i \subseteq \mathcal{S}$ as defined below.   

\begin{defn}[\textbf{Frontier Sample}]\label{define:frontier_node}
A sample $s \in \mathcal{S}$ is said to be a frontier sample if it is adjacent to   
\begin{itemize}
    \item the unknown area, and/or
    \item an obstacle or boundary. 
\end{itemize} 
\end{defn}

Thus, a sample $s \in \mathcal{S}$ is determined to be a frontier sample if a ball $\mathcal{B}(s,w)$ of radius $w$ centered at $s$ contains unknown and/or obstacle area.
The set of frontier samples $\mathcal{S}_i \subseteq \mathcal{S}$ is systematically generated in the sampling front $\mathcal{F}_i$ by sampling on laps as described below. 

\begin{defn}[\textbf{Lap}]
A lap is a virtual straight line segment in the coverage space whose two ends terminate either at an obstacle or the coverage space boundary. 
\end{defn}
The laps are created in the sampling front $\mathcal{F}_i$ for the purpose of efficient, nonuniform and sparse sampling. At iteration $i\in\{0,...m-1\}$, the laps are drawn in the sampling front $\mathcal{F}_i$, such that they are parallel to each other and the distance between any two adjacent laps is equal to the sampling resolution $w\in \mathbb{R}^+$.  
The laps are created along the direction of desired back-and-forth motion. The first lap is drawn in the sampling front $\mathcal{F}_0$ starting from $p_0$. Note that it is possible that only portions of laps are created within $\mathcal{F}_i$, that is their ends may not reach an obstacle or the boundary. The laps are also drawn to be consistent, which means that any two laps separated by an obstacle in between, such that one lap is above and the other below the obstacle, are aligned on the same line whenever possible. Fig.~\ref{fig:progressivesampling} shows the examples of laps on the coverage trajectory and the new ones created in $\mathcal{F}_i$ for sampling.

Once the laps are created in $\mathcal{F}_i$, they are sampled with the frontier samples $\mathcal{S}_i \subseteq \mathcal{S}$. The first sample is generated at $p_0$ in  $\mathcal{F}_0$ at iteration $i=0$. Thereafter, all laps in $\mathcal{F}_0$ are populated with samples. The sampling procedure is as follows. At iteration $i\in\{0,...m-1\}$, the samples are placed on each lap in $\mathcal{F}_i$, such that the spacing between any two adjacent samples on a lap is $\delta w, \ \delta\in\mathbb{N}^+$, where $\delta \geq 1$ is the minimum multiplier that allows the placement of a frontier sample. Note that multiple frontier samples could be placed on a lap within a sampling front depending on the obstacle geometries.   

The progressive sampling produces nonuniform and sparse samples 
which provide computational and memory efficiency and enable non-myopic waypoint selection (Section~\ref{Step4:waypointgeneration}).
Figs.~\ref{fig:progressivesampling} and~\ref{fig:RCG_part1} show examples of progressive sampling.

\vspace{9pt}
\subsubsection{\textbf{Progressive RCG Construction}}
\label{Step3:RCGformation}
The objectives of a RCG are to track the coverage progress, generate adaptive waypoints to produce the coverage trajectory through unexplored regions, and provide escape routes from the dead-ends. First, we define an RCG as follows.

\begin{defn}[\textbf{RCG}]\label{graph}
An RCG $\mathcal{G}=\left(\mathcal{N},\mathcal{E}\right)$ is defined as a simple, connected, and planar graph, where 
\begin{itemize}
\item $\mathcal{N} = \left\{n_j: j = 1,\ldots|\mathcal{N}|\right\}$ is the node set, such that each node corresponds to a frontier sample and represents a waypoint on the coverage trajectory of the robot, and 

\item $\mathcal{E}=\left\{e(n_j,n_k): j = 1,\ldots|\mathcal{N}|, k = 1,\ldots|\mathcal{N}|\right\}$ is the edge set, such that each edge connects two nodes and represents a collision-free traversal path for the robot.
\end{itemize}
\end{defn}

\begin{defn}[\textbf{Neighbor}]
A node $n'\in\mathcal{N}$ is said to be a neighbor of a node $n\in\mathcal{N}$ on RCG and vice versa if there exists an edge $e(n,n')\in \mathcal{E}$ connecting $n$ and $n'$.
\end{defn} 

\begin{defn}[\textbf{Neighborhood}]
The neighborhood of a node $n\in\mathcal{N}$ is the set $\mathcal{N}_b(n)\subset\mathcal{N}$ of all its neighbors.
\end{defn} 

At iteration $i \in \{0,...m-1\}$, the RCG $\mathcal{G}_{i}=\left(\mathcal{N}_{i},\mathcal{E}_{i}\right)$ is grown in the sampling front $\mathcal{F}_i$ using the RCG a) expansion and b) pruning strategies described below.  Fig.~\ref{fig:RCG_construction} shows an example of the RCG expansion and pruning processes. 

\vspace{12pt}
a) \textit{\textbf{RCG Expansion Strategy}}:
The RCG expansion strategy describes the procedure to add new nodes and edges to the existing RCG. Initially, the RCG has empty node and edge sets. Subsequently, at each iteration $i \in \{0,...m-1\}$, once the robot reaches the waypoint $p_i$ and performs the progressive sampling, the RCG is expanded into the sampling front $\mathcal{F}_i$ using the frontier samples. Fig.~\ref{fig:RCG_part2} shows an example of RCG expansion into the sampling front $\mathcal{F}_i$. 

The RCG is expanded as follows. First, all the frontier samples $\mathcal{S}_i$ in $\mathcal{F}_i$ are assigned to be the new nodes $\mathcal{N}_{new}$ of RCG, where $|\mathcal{N}_{new}|=|\mathcal{S}_i|$. The new nodes are added to the existing node set $\mathcal{N}_{i-1}$ to get $\mathcal{N}'_{i-1}= \mathcal{N}_{i-1}\cup \mathcal{N}_{new}$. Then, the new edges $\mathcal{E}_{new}$ are created by making valid connections between new nodes and to the nodes of existing RCG. To add the edges all laps in the coverage space $\mathcal{A}^f$ are considered. Note that a lap in $\mathcal{A}^f$ may contain both new and old nodes of the existing RCG. Thus, for each new node $n \in \mathcal{N}_{new}$ on each lap, new edges are drawn to connect it with 

\begin{itemize}
\item [i.] the adjacent nodes on the same lap. 
\item [ii.] all nodes within a distance of $\sqrt{2}w$ on each adjacent lap.  
\end{itemize}

Note that it is possible that a node has more than two adjacent laps on one side separated by an obstacle. 
Each edge is checked for feasibility before connecting, i.e., if it lies in $\mathcal{A}^f$. In this manner, all possible edges are drawn 
from each new node that connect it to: 1) the nodes above and below on the same lap, and 2) the nodes on the left and right adjacent lap(s). The new edges are added to the existing edge set $\mathcal{E}_{i-1}$ to get $\mathcal{E}'_{i-1}=\mathcal{E}_{i-1}\cup \mathcal{E}_{new}$. In this fashion, by adding new nodes and edges, the RCG is expanded at iteration $i$ into the sampling front $\mathcal{F}_i$ to get $\mathcal{G}'_{i-1}=\{\mathcal{N}'_{i-1},\mathcal{E}'_{i-1}\}$.

\vspace{12pt}
b) \textit{\textbf{RCG Pruning Strategy}}:
Once the RCG is expanded into $\mathcal{F}_i$, it is pruned to maintain a sparse graph structure for computational and memory efficiency. This is done by pruning redundant nodes and edges in the expanded graph $\mathcal{G}'_{i-1}$.

\begin{defn}[\textbf{Essential Node}]\label{define:valid_node}
A node $n \in \mathcal{N}$ is said to be essential if it is 
\begin{itemize}
\item [1.] adjacent to the unknown area, or 
\item [2.] an end node of a lap, or
\item [3.] not an end-node but connected to an end node, say $n_x\in\mathcal{N}$, of an adjacent lap, i.e., $n\in \mathcal{N}_b(n_x)$, and 
\begin{itemize}
\item [a.] $n_x$ has no other neighbor on $n's$ lap, or 
\item [b.] $n_x$ has other neighbors on $n's$ lap which are all non-end nodes but the edge $e(n,n_x)$ connecting $n$ and $n_x$ is closest to an obstacle or the unknown area. 
\end{itemize}
\end{itemize} 
\end{defn}

The first condition above implies that a node is essential if it can enable future connections to the unknown area. The second condition is important to reach the end of a lap and complete its coverage. The third condition is required to enable transition between laps for connectivity and to make sure that the transition edge is closest to the obstacle for maximum coverage of the obstacle-free area. Fig.~\ref{fig:flowchart} shows the flowchart to determine the essential and inessential nodes.

\begin{figure}[t]
    \centering
\includegraphics[width=0.28\textwidth]{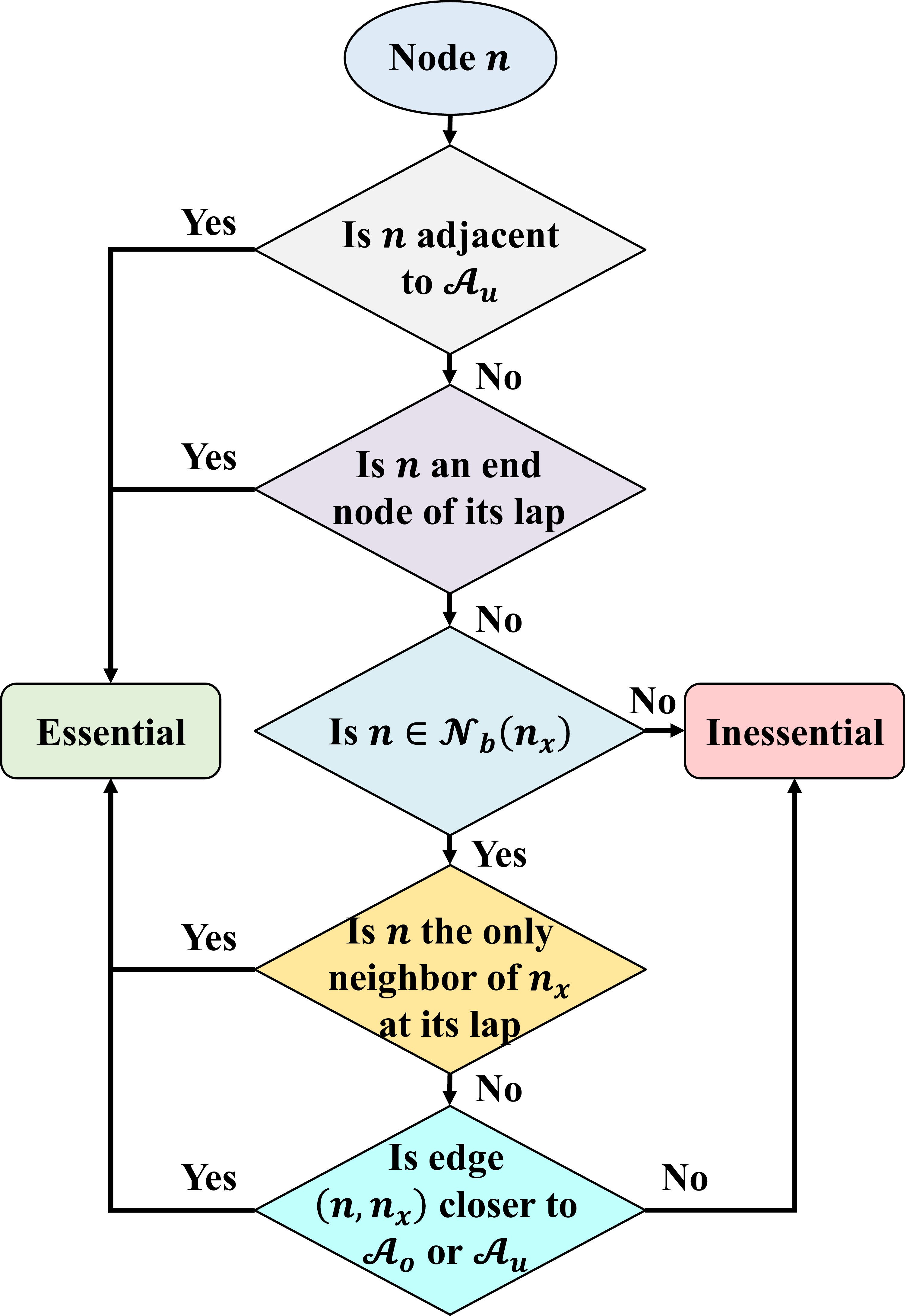}\\
\caption{Flowchart to determine the essential and inessential nodes.} \label{fig:flowchart}
    \vspace{-12pt}
\end{figure}

The node pruning is done as follows. Let $\mathcal{N}_{\partial F_i} \subseteq \mathcal{N}_{i-1}$ be the set of previous nodes that are adjacent to the boundary $\partial F_i$ of the sampling front $\mathcal{F}_i$. For node pruning, each node $n \in \{\mathcal{N}_{new} \cup \mathcal{N}_{\partial F_i}\}$ is checked for the conditions of Def.~\ref{define:valid_node} to find if it is essential or inessential. Then, the set of inessential nodes $\mathcal{N}_{iness}$ is pruned and the remaining nodes are kept in RCG. The next step is to prune the inessential edges. 

\begin{defn}[\textbf{Essential Edge}]\label{define:valid_edge}
An edge $e(n_x,n_y) \in \mathcal{E}$ connecting nodes $n_x$ and $n_y$ is said to be essential if $n_x$ and $n_y$ are both essential nodes and
\begin{itemize}
\item [1.] $n_x$ and $n_y$ are on the same lap, or 
\item [2.] $n_x$ and $n_y$ are on the adjacent  laps, and
\begin{itemize}
\item [a.] $n_x$ and $n_y$ are the end nodes of laps, or
\item [b.] $n_x$ is an end-node and $n_y$ is a non-end node, and 
\begin{itemize}
\item [i] $n_x$ has no other neighbor on $n'_{y}s$ lap or 
\item [ii] $n_x$ has other neighbors on $n'_{y}s$ lap which are all non-end nodes, but the edge $e(n_x,n_y)$ is closest to an obstacle or the unknown area.
\end{itemize}
\end{itemize}
\end{itemize} 
\end{defn}

The edge pruning is done as follows. Let $\mathcal{E}_{\partial F_i} \subseteq \mathcal{E}_{i-1}$ be the set of previous edges connected to $\mathcal{N}_{\partial F_i}$. For edge pruning, each edge $e \in \{\mathcal{E}_{new} \cup \mathcal{E}_{\partial F_i}\}$ is checked for the conditions of Def.~\ref{define:valid_edge} to find if it is essential or inessential. Then, the set of inessential edges $\mathcal{E}_{iness}$ is pruned and the remaining edges are kept in RCG. This is further explained here. First, for each pruned node $n \in \mathcal{N}_{iness}$ the edges connecting it to 
\begin{itemize}
\item  [i.] the two adjacent nodes on its lap are merged to form a single edge, and 
\item [ii.] all nodes on the adjacent laps are pruned. 
\end{itemize}
After the inessential nodes and the corresponding edges are pruned, it is possible that there still exist some inessential edges between essential nodes. This is possible when  some essential nodes have edges connected to more than one essential nodes on an adjacent lap. 
Fig.~\ref{fig:RCG_part2} shows several examples of such cases. In the top-left corner, the top end node of the second lap from the left is connected to two essential nodes on the first lap, which are adjacent to the unknown area. Similar case arises with the top end node of the third lap. Another case arises with the bottom end node of the third lap from the right. This node is connected to three essential nodes in the right adjacent lap. Thus, further pruning of inessential edges is done.
Fig.~\ref{fig:RCG_part3} shows examples of node and edge prunings. Fig.~\ref{fig:RCG_part4} shows the updated RCG after pruning. 

The graph pruning results in the updated RCG, $\mathcal{G}_{i}=\left(\mathcal{N}_{i},\mathcal{E}_{i}\right)$, with a sparse structure that reduces unnecessary computations and unwanted memory usage. It also enables distant (i.e., non-myopic) node selection as waypoints for smooth trajectory generation.

\vspace{6pt}
c) \textit{\textbf{RCG Properties}}:
The RCG has the following properties:

\begin{itemize}

\item [P1.] \textit{Simple}: there is no more than one edge between any two nodes of RCG and no edge loops back on the same node.

\item [P2.] \textit{Planar}: RCG can be embedded on a plane such that no edges cross each other.

\item [P3.] \textit{Connected}: any node of RCG is reachable from any other node by traversing on the  edges. 

\item [P4.] \textit{Essential}: RCG is a minimum sufficient graph consisting of only essential nodes and edges.  

\item [P5.] \textit{Sparse}: the nodes of RCG are distributed only close to the obstacles and unknown area.

\item [P6.] \textit{Scalable}: the size of RCG is $|\mathcal{N}|+|\mathcal{E}| \leq 4|\mathcal{N}|-6.$

\end{itemize}

P1 and P2 directly result from the RCG expansion strategy, where the edges are only drawn from a node to connect the adjacent nodes on the same and side laps. P3 arises from the RCG expansion and pruning strategies, which build and maintain connections between all adjacent nodes on the same lap as well as connections to the ends nodes of adjacent laps, thus making every node reachable from every other node. P4 emerges from the fact that the nodes of RCG are formed from the frontier samples in the sampling front. After each pruning step only essential nodes and edges are left, and it is not possible to add any more frontier samples, thus making the RCG a minimum sufficient graph consisting of essential nodes and edges. Eventually the progressive sampling reaches the free space boundary by exploration until no more sampling front is found and no more frontier samples are created. Therefore, a fully formed RCG is a minimum sufficient graph for complete coverage. P5 appears because the RCG is created by using the frontier samples followed by the pruning process. P6 results from the Euler's Formula~\cite{west1996introduction}, which states that for a simple, connected, and planar graph $|\mathcal{E}| \leq 3|\mathcal{N}|-6$.

\vspace{6pt}
\subsubsection{\textbf{Coverage Trajectory Generation}}\label{Step4:waypointgeneration}
At each iteration $i\in\{0,...m-1\}$, once the robot reaches the waypoint $p_i$, generates the sampling front $\mathcal{F}_i$, and updates the RCG to $\mathcal{G}_{i}=\left(\mathcal{N}_{i},\mathcal{E}_{i}\right)$, the next step is to generate the next waypoint $p_{i+1} \in \mathcal{A}^f$. In order to generate $p_{i+1}$, one of the nodes $\hat{n}_{i+1} \in \mathcal{N}_{i}$ is selected as an intermediate goal node and its position is set as the next waypoint, such that $p_{i+1}=pos({\hat{n}_{i+1}})$, where $pos$ denotes the position of a node. The node selection procedure is described later. First, the node sequence is defined below.

\begin{defn}[\textbf{Node Sequence}] The node sequence $\hat{\mathcal{N}}=\{\hat{n}_0, \hat{n}_1,... \hat{n}_m\}$ is the sequence of intermediate goal nodes $\hat{n}_i \in \mathcal{N}$ for the robot $\mathcal{R}$ to form the coverage trajectory, where $\hat{n}_0$ and $\hat{n}_m$ are its start and end nodes, respectively. 
\end{defn}

The node sequence $\hat{\mathcal{N}}=\{\hat{n}_0, \hat{n}_1,... \hat{n}_m\}$ has one-to-one correspondence to the waypoint sequence $P=\{p_0, p_1...p_m\}$, such that $p_{i}=pos(\hat{n}_i)$, $\forall i\in\{0,...m\}$. The robot follows the waypoint sequence to form the coverage trajectory. In order to select the next goal node, the RCG needs to track the coverage progress by recording which nodes have been visited. Thus, the RCG assigns a state to each node, as follows.

\RestyleAlgo{ruled}
\LinesNumbered
\begin{algorithm}[t]
\footnotesize
Function \textbf{SelectGoalNode}\\
\KwIn{Iteration $i$: current RCG $\mathcal{G}_{i}$; current node $\hat{n}_i$; $\mathcal{N}^{retreat}$.}

\uIf{\ $q(\hat{n}^L_i)= Op$} 
    {
        $\hat{n}_{i+1} = \hat{n}^L_i$; \hspace{0pt} \tcp*[h]{if left node is open, go left}\\
    }
  \uElseIf{\ $q(\hat{n}^U_i)= Op$}
    {
        $\hat{n}_{i+1} = \hat{n}^U_i$; \hspace{0pt} \tcp*[h]{if node above is open, go up}\\
    }
  \uElseIf{\ $q(\hat{n}^D_i)= Op$}
    {
        $\hat{n}_{i+1} = \hat{n}^D_i$; \hspace{0pt} \tcp*[h]{if node down is open, go down} 
    }
  \uElseIf{\ $q(\hat{n}^R_i)= Op$}
    {
        $\hat{n}_{i+1} = \hat{n}^R_i$; \hspace{0pt} \tcp*[h]{if right node is open, go right} 
    }
  \Else(\hspace{30pt} \tcp*[h]{there exists a dead-end at $\hat{n}_{i}$})  
    {   \uIf{$\mathcal{N}_{retreat}\neq \emptyset$ }  
   { $\hat{n}_{i+1} \leftarrow \textbf{EscapeDeadEnd}(\hat{n}_{i},\mathcal{N}_{retreat})$;\\
    }
    \Else {Coverage is complete;}
    }  
     $p_{i+1}$=pos($\hat{n}_{i+1}$);\\
    return $\hat{n}_{i+1}$ and $p_{i+1}$.
\caption{Goal node selection at the current node}
\label{alg:GoalNodeCompute} 
\end{algorithm}
\setlength{\textfloatsep}{12pt}

\RestyleAlgo{ruled}
\LinesNumbered
\begin{algorithm}[t]
\footnotesize
Function \textbf{UpdateState}\\
\KwIn{Iteration $i$: current node $\hat{n}_i$; goal node $\hat{n}_{i+1}$; current node state $q(\hat{n}_i)=Op$; $\mathcal{N}^{Op}_i$; $\mathcal{N}^{Cl}_i$.}
    \If {$q(\hat{n}^U_i)= Cl$ or $q(\hat{n}^D_i)= Cl$}
    {
        $q(\hat{n}_i)=Cl$; \\
       $\mathcal{N}^{Op}_i= \mathcal{N}^{Op}_i\setminus\hat{n}_i$; $\mathcal{N}^{Cl}_i=\mathcal{N}^{Cl}_i\cup\hat{n}_i$;\\
         \If {$\hat{n}_{i+1} = \hat{n}^L_{i}$}
        {
            \If{\ $q(\hat{n}^U_i)=Op$}
            {
                Create a link node above $\hat{n}_i$ at a distance $w$;\\
            }
            \If{\ $q(\hat{n}^D_i)=Op$}
            {
                Create a link node below $\hat{n}_i$ at a distance $w$;\\
            }
        }
    }
    return $q_i(\hat{n}_{i})$, $\mathcal{N}^{Op}_i$, $\mathcal{N}^{Cl}_i$.
\caption{Updating the state of the current node}
\label{alg:StateUpdate} 
\end{algorithm}
\setlength{\textfloatsep}{12pt}

\begin{defn}[\textbf{State Encoding}]
Let $q: \mathcal{N} \rightarrow\left\{Cl,Op\right\}$ be the encoding that assigns each node $n \in \mathcal{N}$ of RCG, a state $q(n) \in \{Cl,Op\}$, where $Cl\; (Op) \equiv Closed\; (Open)$ implies that the node is visited (unvisited) by the robot.
\end{defn}

At iteration $i\in\{0,...m-1\}$, once the robot reaches $\hat{n}_i$ (i.e., waypoint $p_i$) and the RCG is updated to $\mathcal{G}_{i}$, the states of all new nodes in $\mathcal{G}_{i}$ are assigned to be $Op$. Let $\mathcal{N}^{Op}_i \subseteq \mathcal{N}_i$ and $\mathcal{N}^{Cl}_i \subseteq \mathcal{N}_i$ be the sets of \textit{Open} and \textit{Closed} nodes of $\mathcal{G}_i$, respectively, such that $\mathcal{N}^{Op}_i \bigcup \mathcal{N}^{Cl}_i$$=$$\mathcal{N}_i$. Once the node states are updated, the goal node $\hat{n}_{i+1}$ is selected as follows.

\vspace{12pt}
a) \textit{\textbf{Goal Node Selection Strategy}:} \label{goalnodeselect}
Let $\hat{n}_i^L$, $\hat{n}_i^U$, $\hat{n}_i^D$ and $\hat{n}_i^R$ be the neighbors of the current node $\hat{n}_{i}$ on the left lap, the current lap upwards, the current lap downwards, and the right lap, respectively. Note: these directions are defined with respect to a fixed coordinate frame whose vertical axis is parallel to the laps. To obtain $\hat{n}_{i+1}$ (\textbf{Alg.~\ref{alg:GoalNodeCompute} lines 2-9}) an \textit{Open} node is searched from the neighbors of $\hat{n}_{i}$ and selected based on the priority order as follows: $\hat{n}_i^L$ $\rightarrow$ $\hat{n}_i^U$ $\rightarrow$ $\hat{n}_i^D$ $\rightarrow$ $\hat{n}_i^R$. In case there are more than one \textit{Open} neighbors of $\hat{n}_{i}$ on the left or the right adjacent lap, then a random pick is done. 

This simple node selection strategy generates the back-and-forth coverage pattern as follows. At any node, the robot first goes to the leftmost lap where an open node neighbor is available, then it moves on that lap until it finds another open node neighbor on the left or it reaches the end of the lap. If none of the above condition is true then it switches to the right lap and continues the coverage. In this manner, the robot performs the back-and-forth coverage lap by lap while shifting laps from left to right as they are covered.   

\begin{rem} \label{lapcoverage} While moving on a lap, the robot's tasking device covers the region on both sides of the lap, such that the region between any two adjacent laps is covered if 
$w \leq 2r_c$. Thus, the sampling resolution $w$ can be chosen  smaller than $2r_c$ to generate slightly overlapping coverage pattern.
\end{rem}

\vspace{3pt}
b) \textit{\textbf{State Update Strategy}:} 
As soon as $\hat{n}_{i+1}$ is selected, the state of $\hat{n}_i$ is updated as $q(\hat{n}_i)=Cl$ (\textbf{Alg.~\ref{alg:StateUpdate} lines 2-4}). Only in case when both $\hat{n}^U_i \in\mathcal{N}^{Op}_i$ and $\hat{n}^D_i \in\mathcal{N}^{Op}_i$, the state of $\hat{n}_i$ is kept as $q(\hat{n}_i)=Op$. This is to avoid distortion of the back-and-forth trajectory by marking a node as \textit{Closed} in the middle of \textit{Open} nodes on a lap.

If the goal is selected on the left lap, i.e., $\hat{n}_{i+1}=\hat{n}^L_{i}$, and $q(\hat{n}_{i})=Cl$, then it is further checked if either $\hat{n}^U_i \in\mathcal{N}^{Op}_i$ or $\hat{n}^D_i \in\mathcal{N}^{Op}_i$. If this is true then it means that the lap between $\hat{n}_{i}$ and the \textit{Open} neighboring node above or below, if at a distance greater than $w$, is still uncovered. Since $q(\hat{n}_{i})=Cl$, the robot would never travel on this lap. Thus, to cover this lap in the future, a new node, called link node, is created above or below $\hat{n}_{i}$ at a distance of $w$ (\textbf{Alg.~\ref{alg:StateUpdate} lines 5-12}). The link nodes are pruned after they are covered and \textit{Closed}.

\vspace{6pt}
c) \textit{\textbf{Dead-end Escape Strategy}:}
\label{dead-end}
During navigation it is possible that the robot reaches a point where it is unable to find the next goal node because it is surrounded by only obstacles and covered regions. This is known as a dead-end situation. Fig.~\ref{fig:deadEnd_part1} shows an example of a dead-end.

\begin{figure}[t]
    \centering
    \subfloat[Robot reaches a dead-end.]{
        \includegraphics[width=0.42\columnwidth]{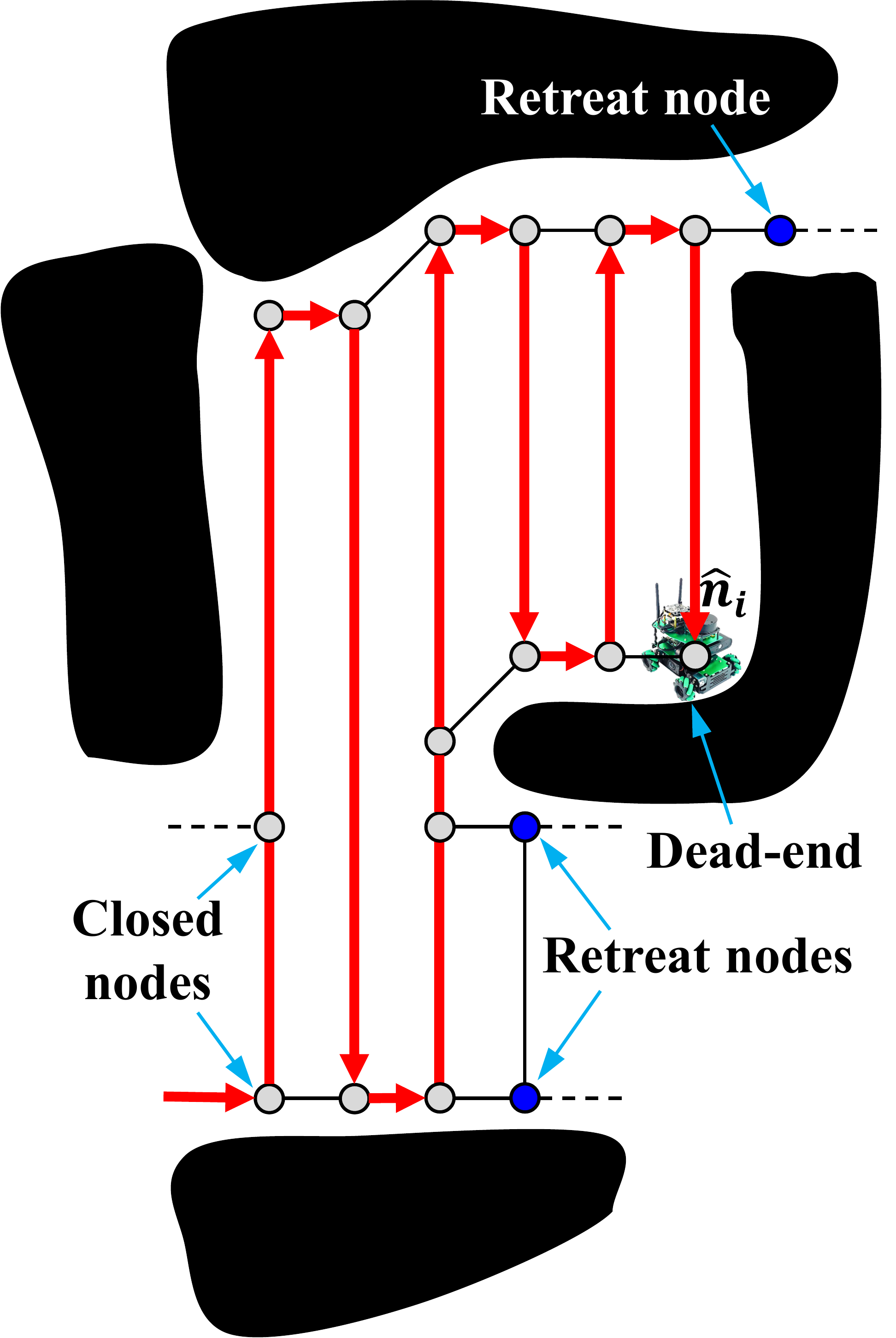}\label{fig:deadEnd_part1}}\hspace{-3pt}\quad
    \centering
    \subfloat[Robot escapes from the dead-end to the nearest retreat node.]{
        \includegraphics[width=0.42\columnwidth]{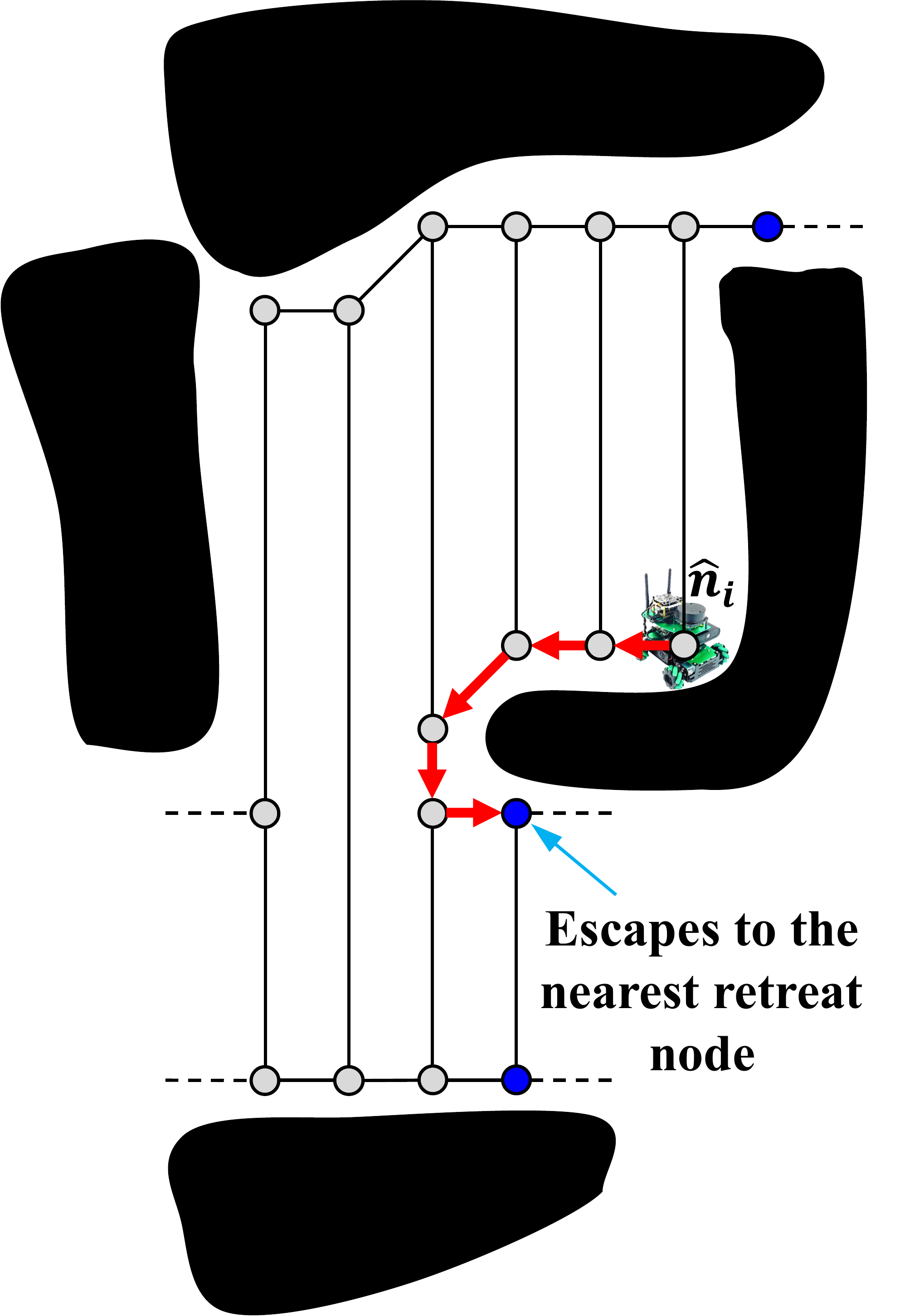}\label{fig:deadEnd_part2}}\\
          \caption{Example of a robot escaping from a dead-end.} \label{fig:deadEnd}
          \vspace{0pt}
\end{figure}

\begin{defn}[\textbf{Dead-end}]
A dead-end is said to occur at node $\hat{n}_i$ if $q(\hat{n}^L_i)=q(\hat{n}^U_i)=q(\hat{n}^D_i)=q(\hat{n}^R_i)=Cl$, which means that there is no Open node in the neighborhood $\mathcal{N}_b(\hat{n}_i)$ that could be selected as the next goal $\hat{n}_{i+1}$. 
\end{defn}

To escape from a dead-end and resume the back-and-forth coverage, an \textit{Open} node needs to be found (\textbf{Alg.~\ref{alg:GoalNodeCompute} line 12}). To select an open node 1) searching the entire graph is computationally inefficient and 2) randomly picking a node in uncovered region could distort the coverage trajectory. As such, C$^*$ uses the concept of retreat nodes as defined below. 

\begin{defn}[\textbf{Retreat Node}]
\label{retreat_node}
A node $n\in\mathcal{N}^{Op}_i$ is defined as a retreat node if it is adjacent to the robot's trajectory.
\end{defn}

\begin{figure*}[t]
    \centering
    \subfloat[The next goal node $\hat{n}_{i+1}$ is determined at the current node $\hat{n}_{i}$.]{
    \includegraphics[width=0.42\columnwidth]{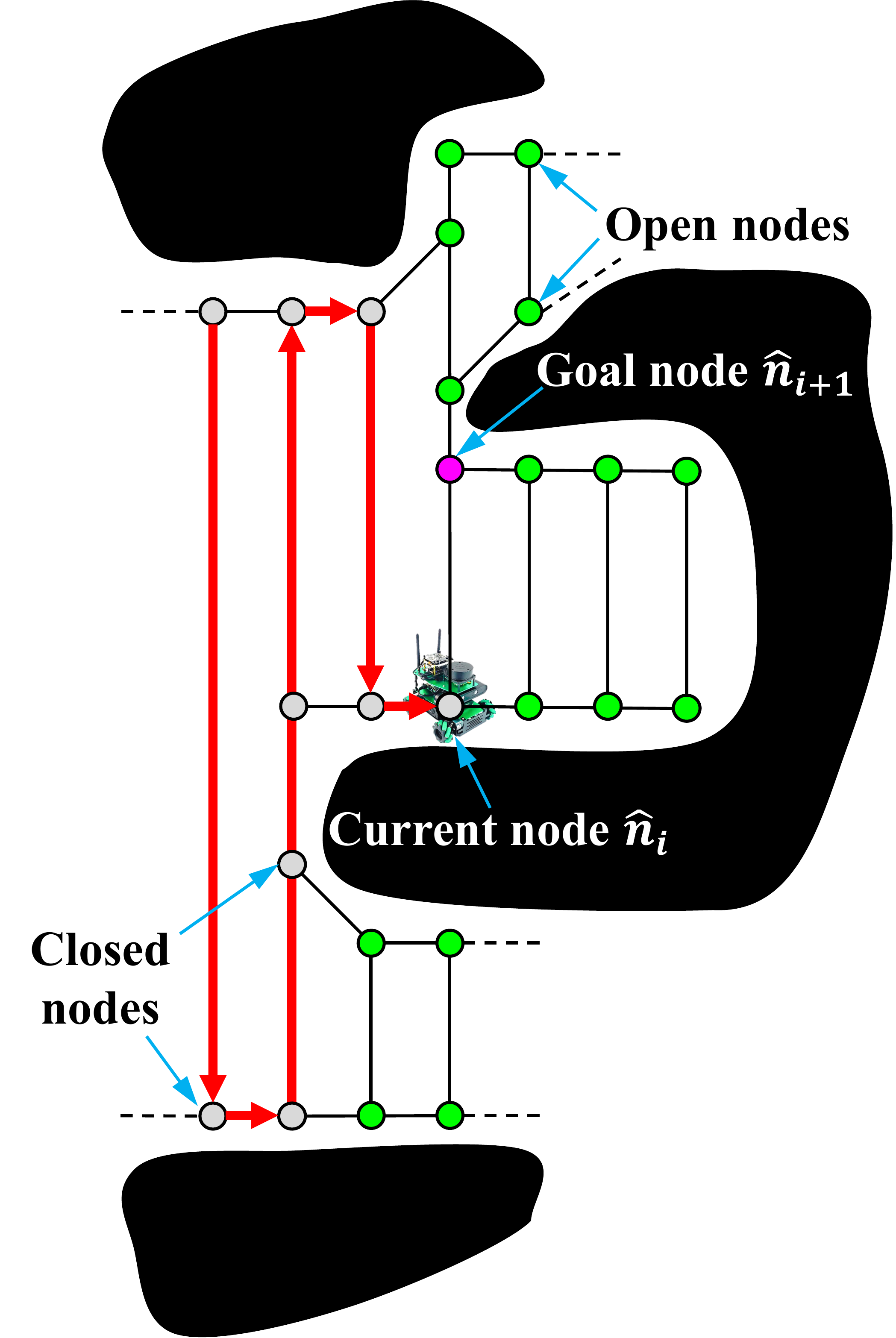}\label{fig:coverageHole_part1}}\hspace{-3pt}\quad
    \centering
    \subfloat[A potential coverage hole is detected near $\hat{n}_{i}$.]{
    \includegraphics[width=0.42\columnwidth]{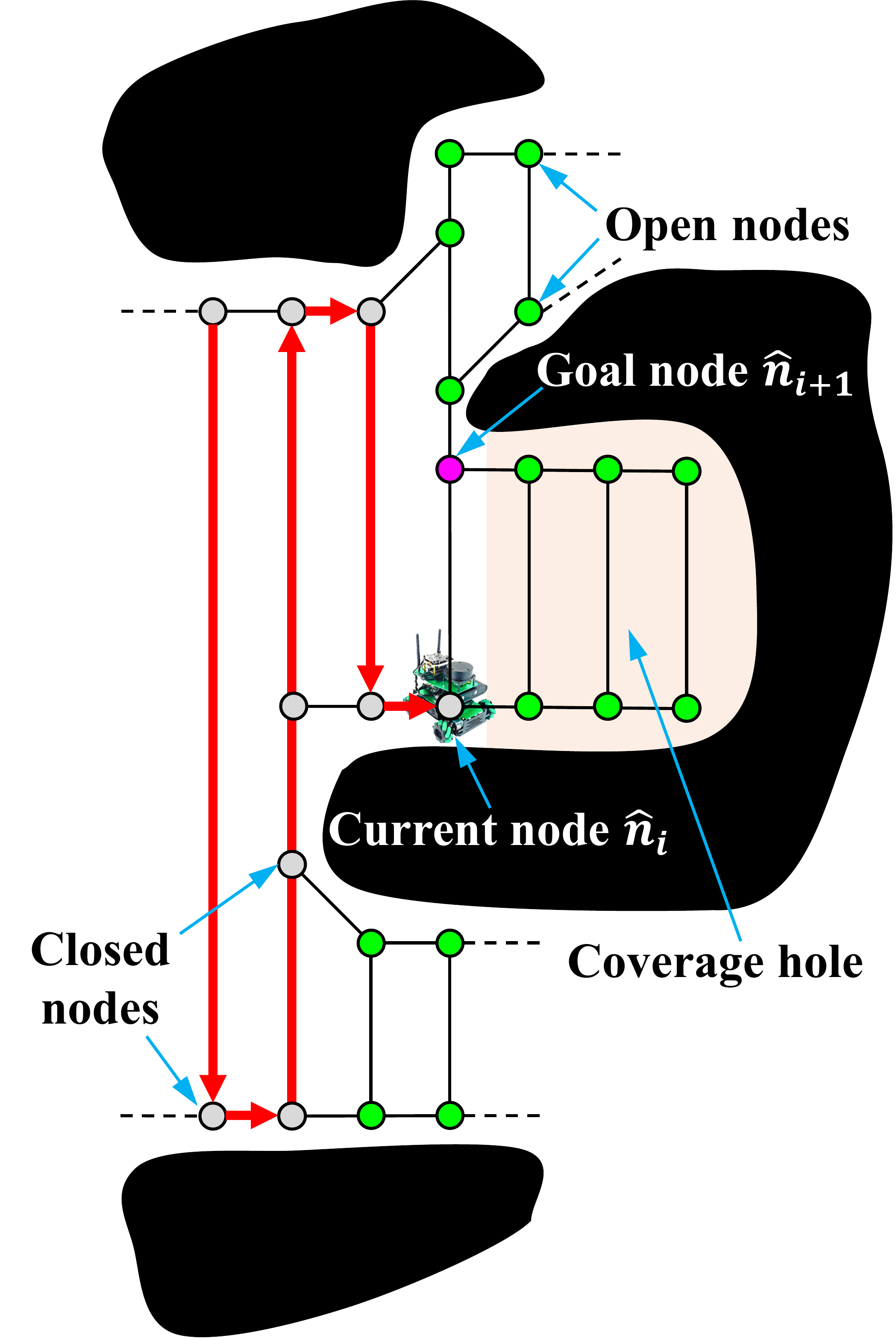}\label{fig:coverageHole_part2}}\hspace{-3pt}\quad
    \centering
    \subfloat[Additional nodes are created in the coverage hole to set up TSP.]{
        \includegraphics[width=0.42\columnwidth]{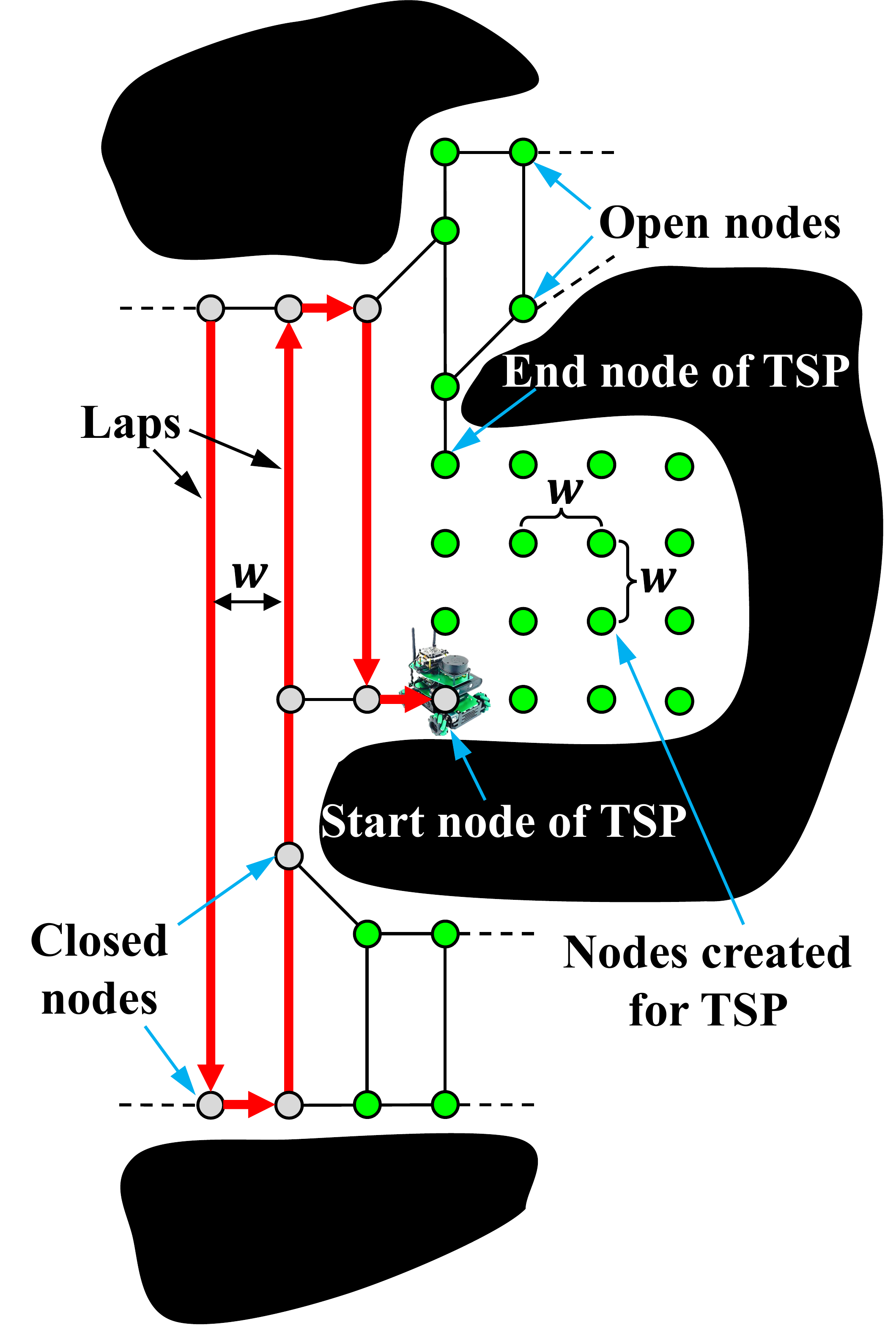}\label{fig:coverageHole_part3}}\hspace{-3pt}\quad
    \centering
    \subfloat[TSP trajectory is executed in the coverage hole.]{
        \includegraphics[width=0.42\columnwidth]{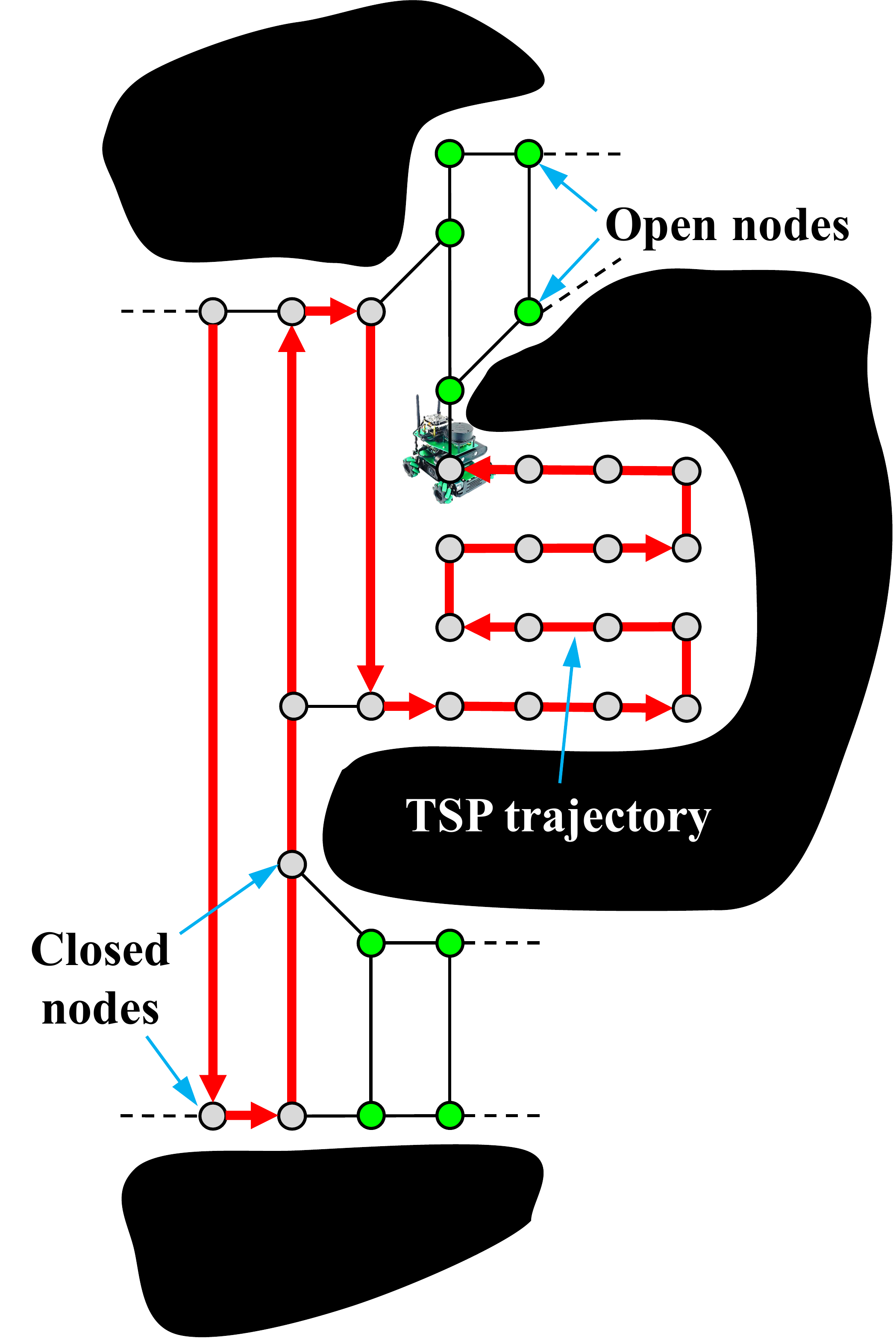}\label{fig:coverageHole_part4}}\\
          \caption{An illustrative example of coverage hole detection and its coverage using the TSP trajectory.}\label{fig:tspcoverage}
          \vspace{3pt}
\end{figure*}

The set $\mathcal{N}_{retreat} \subset \mathcal{N}^{Op}_i$ of retreat nodes is constantly updated during robot navigation. As shown in Fig.~\ref{fig:deadEnd_part1}, this is done by adding any \textit{Open} node to $\mathcal{N}_{retreat}$ that is within a distance of $\sqrt{2}w$ from the robot and removing any \textit{Closed} node. Note that there are always some retreat nodes in the list until coverage is complete because the area is connected. If the robot reaches a dead-end at $\hat{n}_{i}$, then the closest retreat node is chosen as the next goal node $\hat{n}_{i+1}$. Subsequently, the robot moves to $\hat{n}_{i+1}$ on the shortest path obtained by the A$^*$ algorithm~\cite{hart1968formal} and resumes the back-and-forth coverage of the remaining area. Fig.~\ref{fig:deadEnd_part2} shows an example of escaping from a dead-end to the nearest retreat node.

\vspace{9pt}
d) \textit{\textbf{Coverage Holes Prevention Strategy}:}
\label{tsp_coverage}
During navigation it is possible that the robot observes isolated uncovered regions that are surrounded by obstacles and covered regions. If bypassed, these regions form coverage holes, as shown in Fig.~\ref{fig:coverageHole_part1}, which then require long return trajectories to cover them later. This paper presents a proactive coverage hole prevention strategy that consists of: 1) detection of potential coverage holes and 2) coverage of such coverage holes by adaptation to TSP-based locally optimal coverage trajectories.

\begin{defn}[\textbf{Coverage Hole}]
\label{coverage_hole} 
Let $\mathcal{A}^{CH}$$\subseteq$ $\mathcal{A}^{uc}$ be a fully discovered but uncovered region with its boundary denoted as $\partial\mathcal{A}^{CH}$. Then, $\mathcal{A}^{CH}$ forms a coverage hole if $\forall x\in $$\partial\mathcal{A}^{CH}$, $\mathcal{B}(x,w)\setminus \mathcal{A}^{CH}\subseteq \mathcal{A}^o\bigcup\mathcal{A}^{c}$, i.e., it is surrounded by obstacles and covered regions.  
\end{defn}

\vspace{0pt}
$\bullet$ \textit{\textbf{Detection of Potential Coverage Holes}}: In order to detect coverage holes, we use RCG. When the robot reaches node $\hat{n}_i$, it finds the next goal node $\hat{n}_{i+1}$. At this stage, it detects if there are any potential coverage holes near $\hat{n}_i$ that are surrounded by obstacles and covered regions. Let us index a coverage hole around $\hat{n}_i$ with $\alpha \in \mathbb{N}^+$ and denote the set of all nodes within this coverage hole with $\mathcal{N}_{i,\alpha}^{CH}$.

\RestyleAlgo{ruled}
\LinesNumbered
\begin{algorithm}[t]
\footnotesize
Function \textbf{ComputeTSPTrajectory}\\
\KwIn{Iteration $i$: Current node $\hat{n}_i$; Goal node $\hat{n}_{i+1}$; $\mathcal{N}^{CH}_i$.}
 $\overline{\mathcal{N}}^{CH}_i \leftarrow \textbf{AppendNodes}(\mathcal{N}^{CH}_i,\hat{n}_i,\hat{n}_{i+1})$;\\
$n^{TSP}_s=\hat{n}_i$;\\
    \uIf{$(\exists n \in \mathcal{N}_b(\hat{n}_{i+1})$, s.t. $q_i(n)=Op$ and $n \notin \overline{\mathcal{N}}^{CH}_i$) or $\hat{n}_{i+1}$ is adjacent to unknown area}
    {
        $n^{TSP}_e=\hat{n}_{i+1}$; \tcp*[h]{condition 1}\\
    }
    \uElseIf{$\exists n \in \mathcal{N}_b(\hat{n}_{i})$, s.t. $q_i(n)=Op$ and $n \notin \overline{\mathcal{N}}^{CH}_i$} 
    {
         $n^{TSP}_e=\hat{n}_i$; \tcp*[h]{condition 2}\\
    }
    \Else
    {
        $n^{TSP}_e$ is not specified; \tcp*[h]{condition 3}\\
    }
            $\Lambda^{TSP}_i \leftarrow \textbf{SolveTSP}(\overline{\mathcal{N}}^{CH}_i, n_s^{TSP}, n_e^{TSP})$;\\
    return $\Lambda^{TSP}_i$.
\caption{Computation of the TSP trajectory}
\label{alg:ComputeTSP} 
\end{algorithm}
\setlength{\textfloatsep}{12pt}

The process of finding all such coverage holes near $\hat{n}_i$ is as follows. First, an \textit{Open} node is picked up in the neighborhood of $\hat{n}_i$ and assigned an index $\alpha=1$. Then, starting at this node, the neighboring \textit{Open} nodes are recursively searched and labeled with index $\alpha=1$. This is done using the floodfill function. The search continues to expand in all directions until reaching \textit{Closed} nodes or the goal node $\hat{n}_{i+1}$. Once the search is complete then it is further checked if any \textit{Open} node with index $\alpha=1$ is adjacent to the unknown area. If true, then the set of nodes with index $\alpha=1$ is disqualified to form a coverage hole. If not, then it forms a coverage hole. Fig. \ref{fig:coverageHole_part2} shows an example where a coverage hole is detected near $\hat{n}_i$. The above search is repeated for any other unlabeled \textit{Open} neighbor of $\hat{n}_i$ and labeling it and its \textit{Open} neighbors with index $\alpha=2$ until another coverage hole is detected and so on. Note that it is possible to have more than one coverage hole adjacent to $\hat{n}_i$. The search ends when all coverage holes near $\hat{n}_i$ are detected. Then  $\mathcal{N}^{CH}_i=\bigcup_{\alpha}\mathcal{N}_{i,\alpha}^{CH}$ denotes the set of all coverage holes around $\hat{n}_i$.

 \vspace{6pt}
$\bullet$ \textit{\textbf{Set up TSP}}: Once $\mathcal{N}^{CH}_i$ is discovered, it is covered using a locally optimal TSP trajectory $\Lambda^{TSP}_i$. To compute $\Lambda^{TSP}_i$, the TSP is set up as follows. Since RCG is built with sparse nodes, it is important to create additional nodes in $\mathcal{N}^{CH}_i$ to compute $\Lambda^{TSP}_i$. Thus, nodes are added on each lap inside a coverage hole with the inter-node distance equal to the sampling resolution $w$ (\textbf{Alg.~\ref{alg:ComputeTSP} line 2}). Fig. \ref{fig:coverageHole_part3} shows an example of node addition inside a coverage hole to compute $\Lambda^{TSP}_i$. These nodes are then connected to form a fully connected sub-graph $\overline{\mathcal{G}}^{CH}_i=(\overline{\mathcal{N}}^{CH}_i,\overline{\mathcal{E}}^{CH}_i)$. The node set $\overline{\mathcal{N}}^{CH}_i$ includes the current node $\hat{n}_{i}$, the goal node $\hat{n}_{i+1}$, the coverage hole nodes $\mathcal{N}^{CH}_i$ and the appended nodes.   
Each edge in $\overline{\mathcal{E}}^{CH}_i$ is assigned the  A$^*$~\cite{hart1968formal} cost between its end nodes.

The start node of the TSP trajectory is set to $n^{TSP}_s=\hat{n}_{i}$ (\textbf{Alg.~\ref{alg:ComputeTSP} line 3}), while its end node $n^{TSP}_e$ is determined as follows. Condition 1: If $\hat{n}_{i+1}$ is connected to an \textit{Open} node which is not in the coverage hole then $n^{TSP}_e=\hat{n}_{i+1}$ (\textbf{Alg.~\ref{alg:ComputeTSP} lines 4-5}). This allows for synergistic merging of $\Lambda^{TSP}_i$ to the back-and-forth trajectory through $\hat{n}_{i+1}$. The above also applies when $\hat{n}_{i+1}$ is adjacent to unknown area; i.e., 
$\mathcal{B}(\hat{n}_{i+1},w)\cap\mathcal{A}^u\neq \emptyset$. Condition 2: Else, if $\hat{n}_{i}$ is connected to an \textit{Open} node which is not in the coverage hole then $n^{TSP}_e=\hat{n}_{i}$ (\textbf{Alg.~\ref{alg:ComputeTSP} lines 6-7}). This allows for synergistic merging of $\Lambda^{TSP}_i$ to the back-and-forth trajectory through $\hat{n}_{i}$. Note that $\hat{n}_{i}$ cannot be adjacent to unknown area because it is visited by the robot. Condition 3: If none of the above conditions are true, then  $n^{TSP}_e$ is not specified (\textbf{Alg.~\ref{alg:ComputeTSP} lines 8-9}).

\vspace{6pt}
$\bullet$ \textit{\textbf{Solve TSP}}: Once the TSP is set up it is solved (\textbf{Alg.~\ref{alg:ComputeTSP} line 11}) in an efficient manner to get the TSP trajectory $\Lambda^{TSP}_i$. For conditions $1$ and $3$ above (\textbf{Alg.~\ref{alg:ComputeTSP} lines 4-5 and 8-9}, respectively), since $n^{TSP}_e \neq n^{TSP}_s$, a dummy node $\eta$ is added to convert the problem into a closed-loop TSP that starts and ends at $\eta$. The edge cost between $\eta$ and $\hat{n}_i$ is set as  $u_{\eta,\hat{n}_i}=0$. For condition 1, the edge cost between $\eta$ and $\hat{n}_{i+1}$ is set as  $u_{\eta,\hat{n}_{i+1}}=0$, while for condition 3, it is set as $u_{\eta,\hat{n}_{i+1}}=\infty$. The edge costs between $\eta$ and all the remaining nodes is set as $u_{\eta,n}=\infty, \forall n\in \overline{\mathcal{N}}^{CH}_i\setminus \{\hat{n}_i, \hat{n}_{i+1}\}$. Note that $\eta$ is removed once the optimized node sequence $\Lambda^{TSP}_i$ is obtained. For condition 2 (\textbf{Alg.~\ref{alg:ComputeTSP} lines 6-7}), since $n^{TSP}_e = n^{TSP}_s$, the TSP is closed-loop; thus no dummy node is needed.

A heuristic approach is used to solve for TSP as it is NP hard~\cite{aarts2003}. First, the nearest neighbor algorithm~\cite{aarts2003} is used to obtain an initial node sequence. Then, the 2-opt algorithm~\cite{aarts2003} is used to improve this initial solution and obtain the optimized node sequence. Fig. \ref{fig:coverageHole_part4} shows an example of the TSP trajectory executed inside a coverage hole.

\section{Algorithm Analysis}
\label{sec:analysis}

Algorithm \ref{alg:Cstar} shows the iterative steps of C$^*$. 
The complexity of C$^*$ is evaluated for each iteration $i$ when the robot reaches a node $\hat{n}_i$. Since the environment mapping is done continuously during robot navigation, the complexity of C$^*$ is evaluated for the steps of progressive sampling, RCG expansion, RCG pruning, goal node selection and RCG state updating. The complexities of dead-end escape and coverage holes prevention strategies are evaluated separately.

\begin{lem}
    The complexity of sampling is $O(|\mathcal{S}_{\mathcal{F}_i}|)$. 
    \label{lem:sampling}
\end{lem}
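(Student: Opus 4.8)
The plan is to bound the number of elementary operations performed during the progressive sampling step at iteration $i$ in terms of the number of frontier samples $|\mathcal{S}_{\mathcal{F}_i}|$ created in the sampling front $\mathcal{F}_i$. Recall from Section~\ref{Step2:Sampling} that progressive sampling consists of two sub-tasks: (a) drawing the laps in $\mathcal{F}_i$, which are parallel line segments spaced by the sampling resolution $w$ and aligned to the desired back-and-forth direction, and (b) placing frontier samples on each lap, where consecutive samples on a lap are separated by $\delta w$ with $\delta\ge 1$ the smallest integer multiplier admitting a valid frontier sample.

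First I would argue that the number of laps intersecting $\mathcal{F}_i$ is $O(|\mathcal{S}_{\mathcal{F}_i}|)$: each lap that lies inside the (newly discovered, obstacle-free, unsampled) sampling front must, by construction, receive at least one frontier sample — a lap segment terminating at an obstacle or the coverage-space boundary is by Def.~\ref{define:frontier_node} adjacent to an obstacle/boundary and therefore hosts a frontier sample at (at least) its endpoint. Hence the count of laps is at most the count of samples. For sub-task (a), drawing a lap amounts to a constant-cost ray/segment extension within the current map, so the cost of drawing all laps is $O(|\mathcal{S}_{\mathcal{F}_i}|)$. Next I would handle sub-task (b): on each lap the samples are laid down sequentially, and each candidate position requires a constant-time check — testing whether the ball $\mathcal{B}(s,w)$ meets the unknown region or an obstacle (Def.~\ref{define:frontier_node}) and whether the spacing $\delta w$ is admissible. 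Summing over all laps, the total number of such checks is proportional to the total number of frontier samples actually placed, i.e. $O(|\mathcal{S}_{\mathcal{F}_i}|)$, since the number of discarded intermediate candidate positions per accepted sample is bounded by the obstacle geometry (the multiplier $\delta$ advances by a bounded amount before a valid sample is found or the lap ends).

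Combining the two sub-tasks gives a total cost of $O(|\mathcal{S}_{\mathcal{F}_i}|)$ for the sampling step, which is the claimed bound. I would then note that the implicit cost of identifying $\mathcal{F}_i$ itself (the unsampled obstacle-free part of $\mathcal{A}_{d,i}$) is absorbed into the navigation/discovery step of Section~\ref{Step1:Sensing}, which is performed continuously and is not charged to sampling — so the lemma is stated precisely for the sampling operations on $\mathcal{F}_i$.

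The main obstacle I anticipate is making rigorous the claim that the per-sample work — both the number of rejected candidate positions before a valid frontier sample is accepted, and the cost of the $\mathcal{B}(s,w)$ membership test against the discovered map — is genuinely $O(1)$ and not hiding a dependence on the map size or on obstacle complexity. A clean way around this is to appeal to the discretized/resolution-$w$ structure of the map used in practice: the membership test reduces to inspecting a bounded number of map cells around $s$, and the spacing rule guarantees that between two consecutive accepted samples the lap parameter advances by a bounded number of resolution steps. With that modeling assumption stated, the amortized per-sample cost is constant and the sum telescopes to $O(|\mathcal{S}_{\mathcal{F}_i}|)$.
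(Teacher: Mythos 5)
Your proof does not go through as written, and the root cause is a misidentification of what $\mathcal{S}_{\mathcal{F}_i}$ denotes. In the paper, $\mathcal{S}_{\mathcal{F}_i}$ is \emph{not} the set of frontier samples actually created in $\mathcal{F}_i$ (that set is $\mathcal{S}_i$); it is the set of \emph{all candidate positions} that could be placed uniformly in the sampling front at resolution $w$, with $\mathcal{S}_i \subseteq \mathcal{S}_{\mathcal{F}_i}$. With that reading, the lemma is almost immediate: every candidate $s \in \mathcal{S}_{\mathcal{F}_i}$ is subjected to exactly one constant-cost test of the ball $\mathcal{B}(s,w)$ to decide whether it is adjacent to unknown area or an obstacle, so the total work is $O(|\mathcal{S}_{\mathcal{F}_i}|)$. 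No amortization over accepted samples is needed.

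Because you read $|\mathcal{S}_{\mathcal{F}_i}|$ as the count of \emph{accepted} frontier samples, you are forced to prove the strictly stronger bound $O(|\mathcal{S}_i|)$, and the step you flag yourself as the main obstacle is in fact where the argument fails: the claim that ``the number of discarded intermediate candidate positions per accepted sample is bounded'' is not guaranteed by the algorithm. The spacing multiplier $\delta$ is defined as the \emph{minimum} integer such that the next position $\delta w$ away admits a frontier sample, and in a large open sampling front whose interior candidates are far from any obstacle or unknown area, $\delta$ can grow like the lap length divided by $w$ — all of those interior candidates are examined and rejected, so the work per accepted sample is unbounded. This is precisely why the paper charges the cost to the full candidate set $\mathcal{S}_{\mathcal{F}_i}$ (bounded per Remark~\ref{rem_sampres} by $\lfloor 2r_d/w\rfloor\lfloor \ell_i^{max}/w\rfloor$) rather than to $\mathcal{S}_i$. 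Your ancillary observations — that the number of laps is dominated by the sample count, and that the ball test touches only a bounded number of resolution-$w$ map cells — are fine and consistent with the paper, but the core accounting needs to be redone against the candidate set rather than the accepted set.
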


\begin{proof}  
Let $\mathcal{S}_{\mathcal{F}_i}$ be the set of all possible samples that could be placed uniformly in the sampling front $\mathcal{F}_i$ with the sampling resolution of $w$. In order to pick the set of frontier samples $\mathcal{S}_i \subseteq \mathcal{S}_{\mathcal{F}_i}$, it is required to check 
the ball $\mathcal{B}(s,w)$ around each potential sample $s \in \mathcal{S}_{\mathcal{F}_i}$ to verify if $s$ is adjacent to unknown area and/or obstacle. Thus, the complexity of progressive sampling is
$O(|\mathcal{S}_{\mathcal{F}_i}|)$.  
\end{proof}

\begin{rem}$\label{rem_sampres} |\mathcal{S}_{\mathcal{F}_i}| < \left\lfloor \frac{2r_d}{w} \right\rfloor \left \lfloor \frac{\ell_i^{max}}{w} \right\rfloor$, where  $\lfloor\rfloor$ is the floor operation, $\left\lfloor \frac{2r_d}{w} \right\rfloor$ is the maximum number of laps in a sampling front, and $\left \lfloor \frac{\ell_i^{max}}{w} \right\rfloor$ is the maximum number of samples on the longest lap in $\mathcal{F}_i$ with length $\ell_i^{max} \in \mathbb{R}^+$. Note that since $\mathcal{F}_i$ is generated by sensing during navigation from $p_{i-1}$ to $p_{i}$, $\ell^{max}_i$ could be greater than $2r_d$.
\end{rem}

\vspace{-6pt}
\begin{lem}
    The complexity of RCG expansion is $O(|\mathcal{S}_i|)$. 
    \label{lem:expansion}
\end{lem}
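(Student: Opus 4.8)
The plan is to split the expansion into its two constituent operations and bound each by $O(|\mathcal{S}_i|)$: (i) turning the frontier samples of $\mathcal{F}_i$ into new nodes and appending them to $\mathcal{N}_{i-1}$, and (ii) generating the new edges $\mathcal{E}_{new}$ together with their feasibility (collision) checks. Operation (i) is immediate, since $|\mathcal{N}_{new}| = |\mathcal{S}_i|$ by construction, so the insertions cost $O(|\mathcal{S}_i|)$. For operation (ii), recall that the expansion rule iterates over each new node $n \in \mathcal{N}_{new}$ and, for that $n$, draws candidate edges only to (a) the adjacent nodes on $n$'s own lap and (b) the nodes lying within Euclidean distance $\sqrt{2}w$ of $n$ on each adjacent lap. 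It therefore suffices to show that each $n$ produces only $O(1)$ candidate edges, that the relevant old nodes can be retrieved in $O(1)$ time using the indexed lap structure (so one does not scan all of $\mathcal{N}_{i-1}$), and that each candidate is feasibility-checked in $O(1)$ time; summing over the $|\mathcal{S}_i|$ new nodes then gives $O(|\mathcal{S}_i|)$.

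The crux is the $O(1)$-candidates-per-node bound. Type (a) contributes at most two edges — to the node immediately above and the node immediately below $n$ on its lap — irrespective of distance. For type (b), I would invoke a packing argument rather than trying to count adjacent laps directly: any two distinct RCG nodes lie at distance at least $w$ (adjacent laps are exactly $w$ apart, non-adjacent laps at least $2w$, and two samples on a common lap are spaced by $\delta w \ge w$), so the open disks of radius $w/2$ centered at the nodes are pairwise disjoint. All such disks for nodes inside the disk of radius $\sqrt{2}w$ about $n$ sit inside the disk of radius $\sqrt{2}w + w/2$ about $n$, so a ratio-of-areas count gives at most $(2\sqrt{2}+1)^2 < 15$ nodes, hence $O(1)$ type-(b) candidates. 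Each candidate edge has length $O(w)$, so testing containment in $\mathcal{A}_f$ is a single short-segment collision test, i.e.\ $O(1)$. Combining, the expansion does $|\mathcal{S}_i|$ node insertions plus $O(|\mathcal{S}_i|)$ edge generations-and-tests, for a total of $O(|\mathcal{S}_i|)$, as claimed.

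I expect the \textbf{main obstacle} to be exactly the type-(b) count, because the excerpt explicitly permits a node to have more than two adjacent laps on one side when obstacles intervene, which defeats the naive ``one left lap, one right lap, at most three nodes each'' reasoning; the packing argument is what makes the constant independent of the obstacle geometry. The one point requiring a careful sentence is the minimum node separation near thin obstacles — justifying that two samples placed on the same lap line but on opposite sides of a narrow obstacle are still $\Theta(w)$ apart (this holds because samples are placed at lap positions on a $w$-spaced grid, not flush against obstacle edges) — since the whole packing bound, and hence the linearity in $|\mathcal{S}_i|$, rests on it. A further implicit assumption worth stating is that the structure storing the RCG supports $O(1)$ lookup of a node's lap-neighbors and of the nodes on adjacent laps within a fixed radius, so that step (ii) truly costs $O(1)$ per new node rather than $O(|\mathcal{N}_{i-1}|)$.
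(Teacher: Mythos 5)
Your proof is correct and follows the same overall decomposition as the paper's: node insertion costs $O(|\mathcal{N}_{new}|)=O(|\mathcal{S}_i|)$, and the edge-creation cost is bounded by showing each new node spawns only $O(1)$ candidate edges, each checked for feasibility in $O(1)$ time. Where you diverge is in how that per-node constant is justified. The paper simply counts ``at most $2$ adjacent nodes on the same lap and at most $3$ nodes on each adjacent lap,'' arriving at $O(8|\mathcal{N}_{new}|)$ --- a tally that implicitly assumes exactly two adjacent laps, even though the paper's own expansion description concedes that a node may have more than two adjacent laps on one side when an obstacle intervenes. You replace that count with a packing argument: a minimum inter-node separation of $w$ makes the radius-$w/2$ disks pairwise disjoint, so at most $(2\sqrt{2}+1)^2<15$ nodes can lie within the $\sqrt{2}w$ connection radius, a bound independent of the obstacle geometry. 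This is a genuinely more robust route to the same $O(1)$ constant, and your flagged caveats --- that the minimum separation must be verified for samples on a common lap line straddling a thin obstacle, and that the RCG data structure must support $O(1)$ retrieval of lap-neighbors --- are real assumptions that the paper's proof leaves unstated. In short: same skeleton, but your key counting step is stronger than the paper's, which buys correctness in exactly the degenerate configurations the paper itself admits can occur.
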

\begin{proof}  
First, the new nodes $\mathcal{N}_{new}$ of RCG are created from the frontier samples $\mathcal{S}_i$ and added to its node set $\mathcal{N}_{i-1}$. This has the complexity of $O(|\mathcal{N}_{new}|)$. Then, the new edges are created and added to the edge set $\mathcal{E}_{i-1}$ by connecting the new nodes to each other and to the neighboring nodes of existing RCG. For each new node, new edges are drawn to connect it to at most $2$ adjacent nodes on the same lap and at most $3$ nodes on each adjacent lap that are within a distance of $\sqrt{2}w$. This has the complexity of $O(8(|\mathcal{N}_{new}|))$. Thus, the complexity of RCG expansion is $O(9|\mathcal{N}_{new}|)= O(|\mathcal{N}_{new}|)=O(|\mathcal{S}_{i}|)$.
\end{proof}

\RestyleAlgo{ruled}
\LinesNumbered
\begin{algorithm}[t]
\footnotesize
\KwIn{Iteration $i$: starting location $p_{i-1}$; existing RCG $\mathcal{G}_{i-1}$; 
      next waypoint $p_{i}$; goal node $\hat{n}_{i}$.}

\While{$\mathcal{N}^{\text{Op}}_{i-1} \neq \emptyset$}{
    \tcp{Navigate from $p_{i-1}$ to $p_i$ and discover the environment}
    $\mathcal{A}_{d,i} \leftarrow \textbf{NavigateSense}(p_{i-1},p_i)$; \\

    \If{$Reached(p_i)$}{
        \tcp{Create a sampling front at $p_i$}
        $\mathcal{F}_{i} \leftarrow \textbf{CreateSamplingFront}(\mathcal{A}_{d,i})$; \\
    
        \tcp{Generate frontier samples on laps of $\mathcal{F}_{i}$}
        $\mathcal{S}_{i} \leftarrow \textbf{GenerateFrontierSamples}(\mathcal{F}_{i})$; \\
    
        \tcp{Assign $\mathcal{S}_i$ as $\mathcal{N}_{\text{new}}$ and add connecting edges to $\mathcal{G}_{i-1}$}
        $\mathcal{G}'_{i-1} \leftarrow \textbf{ExpandRCG}(\mathcal{G}_{i-1},\mathcal{S}_{i})$; \\
    
        \tcp{Prune inessential nodes and edges}
        $\mathcal{G}_{i} \leftarrow \textbf{PruneRCG}(\mathcal{G}'_{i-1})$; \\
    
        \tcp{Select goal node or escape dead-end (Alg.~\ref{alg:GoalNodeCompute})}
        $(\hat{n}_{i+1},p_{i+1}) \leftarrow \textbf{SelectGoalNode}(\mathcal{G}_{i},\hat{n}_{i})$; \\
    
        \tcp{Update the state of $\hat{n}_{i}$ (Alg.~\ref{alg:StateUpdate})}
        $(q_i(\hat{n}_{i}),\mathcal{N}^{\text{Op}}_{i},\mathcal{N}^{\text{Cl}}_{i}) 
          \leftarrow \textbf{UpdateState}(q_i(\hat{n}_{i}),\mathcal{N}^{\text{Op}}_{i},\mathcal{N}^{\text{Cl}}_{i})$; \\
    
        \tcp{Detect coverage holes}
        $\mathcal{N}^{\text{CH}}_i \leftarrow \textbf{DetectCoverageHoles}(\mathcal{G}_{i}, \hat{n}_{i}, \hat{n}_{i+1})$; \\
    
        \If{$\mathcal{N}^{\text{CH}}_i \neq \emptyset$}{
            \tcp{Plan and execute TSP trajectory}
            $\Lambda^{\text{TSP}}_i \leftarrow \textbf{ComputeTSPTrajectory}(\mathcal{N}^{\text{CH}}_i,\hat{n}_{i},\hat{n}_{i+1})$; \\
            $n^{\text{TSP}}_e \leftarrow \textbf{ExecuteTSPTrajectory}(\Lambda^{\text{TSP}}_i)$ \\
            $\hat{n}_{i} \leftarrow n^{\text{TSP}}_e$; $\hat{n}_{i+1} \leftarrow n^{\text{TSP}}_e$; \\
        }
        $i \leftarrow i+1$; \\
    }
}
\caption{C$^*$}
\label{alg:Cstar}
\end{algorithm}
\setlength{\textfloatsep}{18pt}

\vspace{-3pt}
\begin{lem}
    The complexity of node pruning is $O(|\mathcal{S}_i|)$. 
    \label{lem:nodeprune}
\end{lem}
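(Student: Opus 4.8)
The plan is to bound the work done when pruning inessential nodes by the number of nodes that must be examined, which by construction is $O(|\mathcal{S}_i|)$. Recall from the pruning strategy that node pruning is performed only on the set $\mathcal{N}_{new} \cup \mathcal{N}_{\partial F_i}$, i.e., the newly added frontier samples together with the previously existing nodes adjacent to the boundary $\partial F_i$ of the current sampling front. For each such node, we must evaluate the conditions of Definition~\ref{define:valid_node} to classify it as essential or inessential. So I would first argue that $|\mathcal{N}_{new}| = |\mathcal{S}_i|$ (directly from the expansion strategy) and that $|\mathcal{N}_{\partial F_i}| = O(|\mathcal{S}_i|)$, since the boundary nodes sit along the frontier edge of $\mathcal{F}_i$ and their count is at most proportional to the number of laps times a constant, which is dominated by the number of fresh samples placed in $\mathcal{F}_i$.

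Next I would show that checking Definition~\ref{define:valid_node} for a single node takes $O(1)$ time. Condition~1 (adjacency to the unknown area) is a local test on the ball $\mathcal{B}(n,w)$, analogous to the frontier-sample check, hence constant time. Condition~2 (being an end node of a lap) is a local structural query on the node's lap neighbors, constant time. Condition~3 requires inspecting, for an end node $n_x$ on an adjacent lap, whether $n_x$ has other neighbors on $n$'s lap and whether the edge $e(n,n_x)$ is the closest to an obstacle among those; since each node has at most a constant number of neighbors on any adjacent lap (at most $3$, by the expansion rule and the $\sqrt{2}w$ connection radius), this comparison is over a constant-size set and therefore $O(1)$. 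Summing the constant per-node cost over the $O(|\mathcal{S}_i|)$ candidate nodes yields the claimed $O(|\mathcal{S}_i|)$ bound. I would also note that actually removing the inessential nodes $\mathcal{N}_{iness}$ and rewiring their incident edges (merging the two same-lap edges into one and deleting the adjacent-lap edges) costs $O(1)$ per removed node, again because the degree is bounded by a constant, so this does not change the asymptotics.

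The main obstacle I anticipate is pinning down rigorously that $|\mathcal{N}_{\partial F_i}| = O(|\mathcal{S}_i|)$ rather than something larger in pathological obstacle geometries — one has to be careful that the boundary of a sampling front cannot run alongside an arbitrarily large number of previously-existing nodes without those nodes having been adjacent to newly sampled laps. The cleanest way around this is to observe that $\partial F_i$ borders the new region $\mathcal{F}_i$, each unit length of $\partial F_i$ adjoins $O(1)$ old nodes (spaced at least $w$ apart along laps), and the length of $\partial F_i$ is itself controlled by the number of laps and samples generated in $\mathcal{F}_i$; alternatively, one can simply absorb $|\mathcal{N}_{\partial F_i}|$ into the bound of Remark~\ref{rem_sampres} on $|\mathcal{S}_{\mathcal{F}_i}|$ and note $|\mathcal{S}_i| \le |\mathcal{S}_{\mathcal{F}_i}|$ together with the fact that the boundary nodes are a constant-thickness layer. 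A secondary, minor point to state carefully is the constant bound on node degree, which follows from properties P1--P2 (simple, planar) together with the explicit expansion rule connecting each node to at most $2 + 3 + 3$ others; this is what makes every per-node check and every per-node rewiring $O(1)$.
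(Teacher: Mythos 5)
Your proposal is correct and follows essentially the same route as the paper: iterate over $\mathcal{N}_{new}\cup\mathcal{N}_{\partial F_i}$, argue that each check against Definition~\ref{define:valid_node} touches only a constant number of neighbors (the paper counts at most $6$ adjacent-lap neighbors and uses a fixed number $k$ of intermediate points on an edge for the closest-to-obstacle test), and then invoke $|\mathcal{N}_{\partial F_i}|\leq|\mathcal{N}_{new}|=|\mathcal{S}_i|$. Your extra care in justifying the bound on $|\mathcal{N}_{\partial F_i}|$ is a welcome refinement of a step the paper states only as "typically" holding, but it does not change the argument.
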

\begin{proof}  
For each node $n \in \{\mathcal{N}_{new} \cup \mathcal{N}_{\partial F_i}\}$, we follow Def.~\ref{define:valid_node} to check if its essential or inessential. Condition 1 of Def.~\ref{define:valid_node} checks 
around node $n$ to verify if it is adjacent to the unknown area. Condition 2 checks
if $n$ is an end node of its lap. Condition 3 checks at most $6$ neighbors of $n$ on adjacent laps to verify if it is connected to an end node $n_x$ of the adjacent lap. Condition 3a checks if $n_x$ has other neighbor(s) on $n's$ lap. Condition 3b checks if the other neighbors (at most $2$) of $n_x$ on $n's$ lap are non-end nodes. Condition 3b further checks if the edge $e(n,n_x)$ connecting $n$ and $n_x$ is closest to an obstacle or the unknown area. To check this, $k$ intermediate points are created on the edge and their neighborhoods are checked to compute the distance to the nearest obstacle. Based on the above procedure,  the set of inessential
nodes $\mathcal{N}_{iness}$ is determined and pruned from RCG. Thus, a fixed number of computations are done for each node $n \in \{\mathcal{N}_{new} \cup \mathcal{N}_{\partial F_i}\}$. Thus, the node pruning has a complexity of $O(|\mathcal{N}_{new}|+|\mathcal{N}_{\partial F_i}|)$. Typically, $|\mathcal{N}_{\partial F_i}|\leq |\mathcal{N}_{new}|=|\mathcal{S}_{i}|$, thus, the node pruning complexity is $\sim O(|\mathcal{S}_i|)$.
\end{proof}

\begin{lem}
The complexity of edge pruning is $O(|\mathcal{S}_i|)$.
    \label{lem:edgeprune}
\end{lem}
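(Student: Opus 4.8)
The plan is to follow the same template as the proof of Lemma~\ref{lem:nodeprune}: isolate the set of edges that the pruning step must inspect, bound its cardinality by $O(|\mathcal{S}_i|)$, and show that each inspected edge, as well as each edge operation triggered by node pruning, costs only $O(1)$.

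First I would bound the candidate edge set $\mathcal{E}_{new}\cup\mathcal{E}_{\partial F_i}$. By the RCG expansion strategy each new node has at most $8$ incident edges (at most $2$ on its own lap and at most $3$ on each adjacent lap within distance $\sqrt{2}w$), so $|\mathcal{E}_{new}| \le 8|\mathcal{N}_{new}| = O(|\mathcal{S}_i|)$; the same degree bound applied to the previous boundary nodes gives $|\mathcal{E}_{\partial F_i}| = O(|\mathcal{N}_{\partial F_i}|)$, and since the subgraph touched at iteration $i$ is simple and planar, Property P6 independently confirms that the edge count is linear in the node count. For each edge $e(n_x,n_y)$ in this set, checking the conditions of Def.~\ref{define:valid_edge} is constant-time: whether $n_x,n_y$ are both essential is an $O(1)$ lookup of flags already computed during node pruning; whether they lie on the same or an adjacent lap is $O(1)$; in the end-node case $n_x$ has at most $2$ other neighbors on $n_y$'s lap, and the ``closest to an obstacle or the unknown area'' tie-break is resolved, exactly as in Lemma~\ref{lem:nodeprune}, by sampling a fixed number $k$ of intermediate points on the edge and inspecting their neighborhoods. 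Hence this pass costs $O(|\mathcal{N}_{new}| + |\mathcal{N}_{\partial F_i}|)$.

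Next I would account for the edge operations induced by node pruning: for every inessential node $n\in\mathcal{N}_{iness}$ removed in Lemma~\ref{lem:nodeprune}, its two lap-aligned incident edges are merged into one and its at most $6$ cross-lap incident edges are deleted, all constant work, contributing $O(|\mathcal{N}_{iness}|)=O(|\mathcal{N}_{new}| + |\mathcal{N}_{\partial F_i}|)$. Finally there is the residual pass that removes inessential edges persisting between two essential nodes (e.g., when an end node of a lap connects to several essential nodes on an adjacent lap, as in Fig.~\ref{fig:RCG_part2}); since every essential node retains bounded degree, the number and resolution cost of such residual edges is again $O(|\mathcal{N}_{new}| + |\mathcal{N}_{\partial F_i}|)$. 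Summing the three contributions and using $|\mathcal{N}_{\partial F_i}| \le |\mathcal{N}_{new}| = |\mathcal{S}_i|$ as in Lemma~\ref{lem:nodeprune} yields the claimed $O(|\mathcal{S}_i|)$. I expect the main obstacle to be the residual-edge pass: one must argue rigorously that, even before pruning those edges, no essential node can accumulate more than a constant number of incident edges on a single adjacent lap, so that the closest-to-obstacle comparison in Def.~\ref{define:valid_edge} runs over a bounded list — this follows from the lap spacing $w$ together with the $\sqrt{2}w$ connection radius, but it is the easiest step to gloss over. The remaining pieces (the degree/planarity bound and the constant per-edge cost) are routine once Lemmas~\ref{lem:expansion} and~\ref{lem:nodeprune} are in hand.
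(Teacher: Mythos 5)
Your proposal is correct and follows essentially the same route as the paper's proof: iterate over $\mathcal{E}_{new}\cup\mathcal{E}_{\partial F_i}$, charge $O(1)$ per edge for the checks of Def.~\ref{define:valid_edge} (including the $k$-intermediate-point test for the closest-to-obstacle condition), fold in the merge/delete operations induced by node pruning, and bound the edge count linearly in $|\mathcal{N}_{new}|+|\mathcal{N}_{\partial F_i}|\le 2|\mathcal{S}_i|$. The only cosmetic difference is that you lead with the degree-$8$ bound from the expansion strategy where the paper invokes the planarity bound $|\mathcal{E}|\le 3|\mathcal{N}|-6$; you mention both, and your extra care about the residual pass between essential nodes is a welcome tightening rather than a departure.
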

\begin{proof}  
 For each edge $e(n_x,n_y) \in \{\mathcal{E}_{new}\cup\mathcal{E}_{\partial \mathcal{F}_i}\}$, we follow Def.~\ref{define:valid_edge} to check if its essential or inessential. First, for each pruned node $n \in \mathcal{N}_{iness}$, the edges connecting it to the two adjacent nodes on the same lap are merged and the edges connecting it to the nodes on adjacent laps are pruned. This satisfies Conditions 1 and 2a of Def.~\ref{define:valid_edge}. Then, Condition 2b-i checks if $n_x$ has no other neighbor on $n'_{y}s$ lap. Condition 2b-ii checks if all other neighbors on $n'_{y}s$ lap are non-end nodes. It further checks if the edge $e(n_x,n_y)$ is closest to an obstacle or the unknown area. To check this, $k$ intermediate points are created on $e(n_x,n_y)$ and their neighborhoods are checked to compute the distance to the nearest obstacle. Based on the above procedure, the inessential edges are pruned from RCG. Thus, a fixed number of computations are done for each edge $e(n_x,n_y) \in \{\mathcal{E}_{new}\cup\mathcal{E}_{\partial \mathcal{F}_i}\}$. Thus, the edge pruning has a complexity of
$O(|\mathcal{E}_{new}|+|\mathcal{E}_{\partial \mathcal{F}_i}|)$. Typically, $|\mathcal{E}_{\partial F_i}|\leq |\mathcal{E}_{new}| \leq 3(|\mathcal{N}_{new}|+|\mathcal{N}_{\partial \mathcal{F}_i}|)-6$, and $|\mathcal{N}_{\partial F_i}|\leq |\mathcal{N}_{new}|=|\mathcal{S}_{i}|$, thus, the edge pruning complexity is $\sim O(|\mathcal{S}_i|)$.
\end{proof}

\begin{lem}
    The complexity of goal node selection is $O(1)$. 
    \label{lem:goalnodeselect}\vspace{-6pt}
\end{lem}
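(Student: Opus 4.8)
The plan is to show that the routine \textbf{SelectGoalNode} (Alg.~\ref{alg:GoalNodeCompute}) performs only a bounded number of elementary operations, where the bound is an absolute constant independent of the current RCG size $|\mathcal{N}_i|+|\mathcal{E}_i|$ and of the discovered area. First I would recall that the routine inspects the designated neighbors of the current node $\hat{n}_i$ in the fixed priority order $\hat{n}_i^L \rightarrow \hat{n}_i^U \rightarrow \hat{n}_i^D \rightarrow \hat{n}_i^R$, i.e., at most a constant number $c$ of candidate directions (at most four in the generic case, and still a fixed constant when an intervening obstacle within the sensing footprint produces finitely many adjacent laps on one side). For each candidate the only work is a table lookup of the state $q(\cdot)\in\{Cl,Op\}$; once an \textit{Open} one is found (or a tie is broken by a random pick among the \textit{Open} neighbors on one adjacent lap), a single assignment $\hat{n}_{i+1}\leftarrow(\cdot)$ and one position read $p_{i+1}=pos(\hat{n}_{i+1})$ are done, each $O(1)$. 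Summing these constant costs over the constantly many lines of Alg.~\ref{alg:GoalNodeCompute} yields $O(1)$.

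The supporting sub-claim I would then establish is that the number of neighbors a node can have on a single adjacent lap — and hence, with planarity, its total degree — is bounded by an absolute constant. This follows from the RCG pruning strategy: after pruning (Def.~\ref{define:valid_node} and Def.~\ref{define:valid_edge}), a node retains at most two neighbors on its own lap and only the transition edge(s) to the end node(s) of each adjacent lap, so the \textit{Open}-neighbor tie-break, if it occurs, is also $O(1)$. I would also note that the dead-end branch (Alg.~\ref{alg:GoalNodeCompute}, line~13) only invokes \textbf{EscapeDeadEnd}, and that coverage-hole handling is a separate subroutine; since the lemma statement explicitly treats dead-end escape and coverage-hole prevention separately, the branch executed inside \textbf{SelectGoalNode} is one of the $O(1)$ branches and carries no hidden cost here.

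The only step needing care — the ``hard part'' — is the degree bound: a priori a node could acquire edges to arbitrarily many samples, which would make the random pick (and thus the whole routine) scale with the environment. I would close this gap by appealing to the fixed sampling resolution $w$, the finite sensing range $r_d$, and the pruning rules, which together force each node to keep only $O(1)$ incident edges after pruning; making this bound explicit (at most two same-lap neighbors, at most a fixed few transition edges per side) is what turns the informal ``fixed number of checks'' into the rigorous $O(1)$ claim.
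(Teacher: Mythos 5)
Your proposal is correct and follows essentially the same argument as the paper, which simply observes that the routine checks the four neighbors $\hat{n}_i^L, \hat{n}_i^U, \hat{n}_i^D, \hat{n}_i^R$ in a fixed priority order and concludes $O(1)$. Your additional justification of the constant degree bound via the pruning rules (to make the tie-break among multiple \textit{Open} neighbors on an adjacent lap rigorously $O(1)$) is a useful elaboration that the paper leaves implicit, but it does not change the underlying approach.
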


\begin{proof}  
To select the goal node at $\hat{n}_i$, an open node is searched in the neighborhood of $\hat{n}_{i}$ based on the priority order of 1) $\hat{n}_i^L$, 2) $\hat{n}_i^U$, 3) $\hat{n}_i^D$, and 4) $\hat{n}_i^R$. This has a complexity of $O(1)$.
\end{proof}

\begin{lem}
    The complexity of node state updating is $O(1)$. 
    \label{lem:stateupdate}\vspace{-6pt}
\end{lem}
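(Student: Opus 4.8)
The plan is to verify directly from Algorithm~\ref{alg:StateUpdate} that the state-update routine performs only a constant number of elementary operations, independent of $|\mathcal{N}_i|$. First I would observe that the routine touches only the current node $\hat{n}_i$ together with its fixed distinguished neighbors $\hat{n}^L_i$, $\hat{n}^U_i$, and $\hat{n}^D_i$. Because each RCG node stores direct references to its bounded number of same-lap and adjacent-lap neighbors (the degree being bounded in view of the expansion rule and property P6), the state look-ups $q(\hat{n}^U_i)$, $q(\hat{n}^D_i)$ and the identification of $\hat{n}^L_i$ each cost $O(1)$; hence the conditional tests in the algorithm are $O(1)$.

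Next I would argue that the bookkeeping on lines~3--5 is constant time: setting $q(\hat{n}_i)=Cl$ is a single assignment, and the updates $\mathcal{N}^{Op}_i = \mathcal{N}^{Op}_i\setminus\hat{n}_i$, $\mathcal{N}^{Cl}_i = \mathcal{N}^{Cl}_i\cup\hat{n}_i$ amount to a single deletion and a single insertion, which are $O(1)$ provided the two sets are maintained with an $O(1)$-update structure (for instance a hash set, or a doubly linked list with a back-pointer stored at each node). Pinning down this data structure is the only implementation choice that needs to be made explicit, and it is exactly the kind of structure already implicit in the algorithm since each node carries its own state flag.

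Finally I would treat the link-node creation on lines~6--13: at most one link node is created above $\hat{n}_i$ and at most one below, each placed at the fixed offset $w$, and splicing such a node into the RCG modifies only $\hat{n}_i$ and its one same-lap neighbor on that side, i.e., a constant number of incident edges, with no cascading or recursive update (pruning of link nodes is deferred until after they are covered and closed). Summing the three bounded contributions gives $O(1)$ work per invocation, which is the claim. The only point requiring a word of care, and it is minor, is the justification that the open/closed node sets admit $O(1)$ updates; everything else follows immediately from the bounded degree of the RCG and the fact that at most two link nodes are ever added per call.
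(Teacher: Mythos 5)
Your proof is correct and follows essentially the same route as the paper: walk through Algorithm~\ref{alg:StateUpdate} and observe that each branch touches only the current node and a bounded number of its neighbors, so the total work is a fixed number of operations. You add a bit more rigor than the paper by making explicit the $O(1)$ data-structure assumption for updating $\mathcal{N}^{Op}_i$ and $\mathcal{N}^{Cl}_i$ and the bounded-degree justification for the neighbor look-ups, but these are refinements of the same argument rather than a different approach.
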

\begin{proof}  
Once the goal node $\hat{n}_{i+1}$ is selected, the algorithm checks if the nodes adjacent to the current node $\hat{n}_i$ on the same lap are both \textit{Open}. If yes, then the state of  $\hat{n}_i$ is kept as \textit{Open}; otherwise, it is updated to \textit{Closed}. Further, if $\hat{n}_{i+1}$ is selected on the left lap and the state of $\hat{n}_i$ is \textit{Closed}, then it is checked if either $\hat{n}^U_i$ or $\hat{n}^D_i$ if farther than $w$ is \textit{Open}. If this is true, then a link node is created above or below $\hat{n}_i$ at a distance of $w$. The above process requires a fixed number of operations, thus the complexity of state updating is $O(1)$.
\end{proof}

\begin{thm}
The complexity of C$^*$ is $O(|\mathcal{S}_{\mathcal{F}_i}|)$.
    \label{thm:complexity}
\end{thm}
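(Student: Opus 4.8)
The plan is to obtain the per-iteration complexity of C$^*$ by simply summing the costs of the constituent steps executed at iteration $i$ — progressive sampling, RCG expansion, node pruning, edge pruning, goal node selection, and node state updating — each of which has already been bounded in Lemmas~\ref{lem:sampling}--\ref{lem:stateupdate}. Since the environment mapping (\textbf{NavigateSense}) runs continuously during motion and is therefore excluded from this count, and since the dead-end escape and coverage-hole prevention routines are only conditionally invoked and analyzed separately, the theorem concerns exactly this fixed list of six steps.

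First I would recall the individual bounds: sampling costs $O(|\mathcal{S}_{\mathcal{F}_i}|)$; expansion, node pruning, and edge pruning each cost $O(|\mathcal{S}_i|)$; and goal node selection and state updating each cost $O(1)$. The key elementary observation is that the frontier samples actually placed are a subset of all samples that could be placed uniformly on the sampling front, i.e. $\mathcal{S}_i \subseteq \mathcal{S}_{\mathcal{F}_i}$, hence $|\mathcal{S}_i| \le |\mathcal{S}_{\mathcal{F}_i}|$. Consequently every step in the list is $O(|\mathcal{S}_{\mathcal{F}_i}|)$, and the total per-iteration cost is a sum of a constant number of terms each of order at most $|\mathcal{S}_{\mathcal{F}_i}|$, which is $O(|\mathcal{S}_{\mathcal{F}_i}|)$. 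If a closed form is desired, one can then invoke Remark~\ref{rem_sampres} to write this as $O\!\left(\lfloor 2r_d/w \rfloor\,\lfloor \ell_i^{max}/w \rfloor\right)$.

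The one point that needs a careful word rather than a one-line appeal to the lemmas is the treatment of the boundary contributions $|\mathcal{N}_{\partial F_i}|$ and $|\mathcal{E}_{\partial F_i}|$ that appear in the pruning lemmas: I would note, as argued there, that $|\mathcal{N}_{\partial F_i}| \le |\mathcal{N}_{new}| = |\mathcal{S}_i|$ and $|\mathcal{E}_{\partial F_i}| \le |\mathcal{E}_{new}| \le 3(|\mathcal{N}_{new}| + |\mathcal{N}_{\partial F_i}|) - 6$, so these terms are themselves $O(|\mathcal{S}_i|) = O(|\mathcal{S}_{\mathcal{F}_i}|)$ and do not change the order.

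The main obstacle — such as it is — is bookkeeping rather than analysis: making sure that no step conceals a super-linear subroutine. The place I would double-check is the $k$ intermediate-point proximity tests used inside the pruning conditions (Def.~\ref{define:valid_node} condition 3b and Def.~\ref{define:valid_edge} condition 2b-ii) to decide whether a transition edge is closest to an obstacle or the unknown area; these amount to a constant number of neighborhood lookups per node/edge and are absorbed into the per-element $O(1)$ work already counted in Lemmas~\ref{lem:nodeprune} and~\ref{lem:edgeprune}. With that confirmed, the result follows directly from the preceding lemmas and the subset relation $\mathcal{S}_i \subseteq \mathcal{S}_{\mathcal{F}_i}$.
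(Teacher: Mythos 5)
Your proposal is correct and follows exactly the paper's argument: sum the bounds from Lemmas~\ref{lem:sampling}--\ref{lem:stateupdate} to get $O(|\mathcal{S}_{\mathcal{F}_i}|+3|\mathcal{S}_i|+2)$ and then use $\mathcal{S}_i \subseteq \mathcal{S}_{\mathcal{F}_i}$ to absorb everything into $O(|\mathcal{S}_{\mathcal{F}_i}|)$. The extra bookkeeping you do on the boundary terms and the $k$ intermediate-point checks is already contained in the pruning lemmas, so nothing new is needed.
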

\begin{proof}
From Lemmas~\ref{lem:sampling}-\ref{lem:stateupdate}, the complexity of C$^*$ is  $O(|\mathcal{S}_{\mathcal{F}_i}|+3|\mathcal{S}_i|+1+1)$. Since $\mathcal{S}_i \subseteq \mathcal{S}_{\mathcal{F}_i}$, the overall complexity of C$^*$ is $O(|\mathcal{S}_{\mathcal{F}_i}|)$. Note that from Remark~\ref{rem_sampres}, the complexity increases if the sampling resolution $w\in \mathbb{R}^+$ is decreased. 
\end{proof}

Since the dead-ends and coverage holes do not occur at all iterations, the complexities of dead-end escape and coverage hole prevention strategies are evaluated separately. 

\vspace{6pt}
\begin{lem}
    The complexity of dead-end escape strategy is $O(|\mathcal{N}_i|log|\mathcal{N}_i|)$.
    \label{lem:deadend}
\end{lem}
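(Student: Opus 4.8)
The plan is to decompose the dead-end escape routine into its two constituent operations and bound each. Recall that when a dead-end is detected at $\hat{n}_i$, the routine (i) identifies the closest retreat node among $\mathcal{N}_{retreat}\subset\mathcal{N}^{Op}_i$, and (ii) computes the shortest path from $\hat{n}_i$ to that node on the current RCG $\mathcal{G}_{i}=\left(\mathcal{N}_{i},\mathcal{E}_{i}\right)$ using the A$^*$ algorithm. The incremental maintenance of the set $\mathcal{N}_{retreat}$ during navigation is an $O(1)$-per-step bookkeeping operation (adding \textit{Open} nodes within $\sqrt{2}w$ of the robot, removing \textit{Closed} ones) and is therefore not charged to this bound.

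First I would observe that operations (i) and (ii) can be realized by a \emph{single} graph search rather than one search per retreat candidate. Running Dijkstra's algorithm (equivalently, A$^*$ with the A$^*$/shortest-path edge costs already assigned on $\mathcal{E}_i$) rooted at $\hat{n}_i$ and terminating as soon as some node of $\mathcal{N}_{retreat}$ is extracted from the priority queue simultaneously yields the nearest retreat node \emph{and} the shortest path to it. Hence the entire routine costs no more than one such single-source search, and it suffices to bound that search.

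Next I would invoke the structure of the RCG. By Property P6, which follows from Euler's formula for simple, connected, planar graphs, $|\mathcal{E}_i|\le 3|\mathcal{N}_i|-6$, so $|\mathcal{E}_i|=O(|\mathcal{N}_i|)$. A binary-heap implementation of Dijkstra/A$^*$ on a graph with $|\mathcal{N}_i|$ vertices and $|\mathcal{E}_i|$ edges runs in $O\big((|\mathcal{N}_i|+|\mathcal{E}_i|)\log|\mathcal{N}_i|\big)$ time; substituting the edge bound gives $O(|\mathcal{N}_i|\log|\mathcal{N}_i|)$, which combined with the previous paragraph establishes the lemma.

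The main obstacle I anticipate is the accounting for step (i): a naive implementation that scans all of $\mathcal{N}_{retreat}$ and then issues a separate A$^*$ query per candidate would be asymptotically worse, so the argument must make explicit that the nearest retreat node is obtained as a byproduct of the same search that produces the escape path. A secondary point worth stating carefully is that the bound genuinely relies on the sparsity/planarity of the RCG (Properties P2 and P6) to replace $|\mathcal{E}_i|$ by $O(|\mathcal{N}_i|)$; without it one would only be able to write $O\big((|\mathcal{N}_i|+|\mathcal{E}_i|)\log|\mathcal{N}_i|\big)$.
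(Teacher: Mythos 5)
Your proof is correct, but it takes a genuinely different route from the paper's. The paper analyzes the strategy exactly as described in Section~IV: it runs a separate A$^*$ query from $\hat{n}_i$ to \emph{each} retreat node, obtaining a cost of $O\big(|\mathcal{N}_{retreat}|\,(|\mathcal{E}_i|\log|\mathcal{N}_i|+1)\big)$, and then reaches the stated bound by combining the planarity bound $|\mathcal{E}_i|\le 3|\mathcal{N}_i|-6$ with the additional (empirical) assumption that $|\mathcal{N}_{retreat}|\ll|\mathcal{N}_i|$, i.e.\ the retreat-set size is absorbed as if it were a constant. You instead replace the per-candidate queries with a single multi-target shortest-path search rooted at $\hat{n}_i$ that halts when the first retreat node is extracted from the priority queue, which yields the nearest retreat node and its path simultaneously in $O\big((|\mathcal{N}_i|+|\mathcal{E}_i|)\log|\mathcal{N}_i|\big)=O(|\mathcal{N}_i|\log|\mathcal{N}_i|)$. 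What your approach buys is an unconditional bound that does not depend on any assumption about $|\mathcal{N}_{retreat}|$; what it costs is fidelity to the algorithm as literally stated (you are analyzing a better implementation of the same task rather than the stated one, a point worth flagging explicitly). Your anticipated ``obstacle'' is in fact exactly where your argument diverges from the paper: the paper accepts the naive per-candidate accounting and papers over it with the $|\mathcal{N}_{retreat}|\ll|\mathcal{N}_i|$ assumption, whereas you eliminate the issue algorithmically. Both proofs rely on the same planarity fact (Property P6) to bound $|\mathcal{E}_i|$ by $O(|\mathcal{N}_i|)$.
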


\begin{proof}  
To escape from a dead-end, the nearest retreat node with the shortest path is selected from  $\mathcal{N}_{retreat}$. Thus, the path length to each retreat node is calculated using A* and then the node with the shortest path length is selected. Since A* has a complexity of $O(|\mathcal{E}_i|log|\mathcal{N}_i|)$, the overall complexity of this process is $O(|\mathcal{N}_{retreat}|(|\mathcal{E}_i|log|\mathcal{N}_i|+1))$. Since $|\mathcal{E}_i| \leq 3|\mathcal{N}_i|-6$ and usually $|\mathcal{N}_{retreat}|<<  |\mathcal{N}_i|$,   
the complexity of dead-end escape strategy is $O(|\mathcal{N}_i|log|\mathcal{N}_i|)$.
\end{proof}

\vspace{6pt}
\begin{lem} The complexity of coverage hole prevention strategy is $O(|\mathcal{N}_{i}|)$.
    \label{lem:coverageholeprev}
\end{lem}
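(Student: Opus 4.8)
The plan is to split the coverage hole prevention strategy into the three sub-routines it is built from --- (i) detection of the coverage holes around $\hat{n}_i$ by floodfill on the RCG, (ii) set-up of the TSP sub-graph $\overline{\mathcal{G}}^{CH}_i$ together with its A$^*$ edge costs, and (iii) solving that TSP by the nearest-neighbor construction followed by 2-opt --- to bound the cost of each, and then take the maximum.

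I would dispose of the detection step first, since I expect it to be the dominant term. The floodfill starts from an \textit{Open} neighbor of $\hat{n}_i$ and recursively labels \textit{Open} nodes, halting at \textit{Closed} nodes or at $\hat{n}_{i+1}$; this is a textbook graph traversal that visits each node and incident edge of $\mathcal{G}_i$ a bounded number of times. Since $\hat{n}_i$ has $O(1)$ neighbors (bounded by the lap structure), the search is launched $O(1)$ times on pairwise-disjoint \textit{Open} components, and at each labeled node only a constant-time test is made --- whether its ball $\mathcal{B}(\cdot,w)$ meets $\mathcal{A}_u$, which would disqualify the region as a coverage hole. Invoking Property P6, $|\mathcal{E}_i|\le 3|\mathcal{N}_i|-6$, so detection costs $O(|\mathcal{N}_i|+|\mathcal{E}_i|)=O(|\mathcal{N}_i|)$.

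Next I would show the remaining two sub-routines contribute only lower-order terms, leaning on the structural fact that a coverage hole, by Definition~\ref{coverage_hole} and the way it arises --- a small secluded region bypassed during back-and-forth motion --- has bounded extent; hence the number of lap-nodes appended inside it at spacing $w$, and with it $|\overline{\mathcal{N}}^{CH}_i|$ and the edge count of the complete graph $\overline{\mathcal{G}}^{CH}_i$, is $O(1)$ with respect to $|\mathcal{N}_i|$. Each edge weight is an A$^*$ cost between two nodes lying in or on the boundary of that small hole, so the A$^*$ search never leaves the hole and is $O(1)$; building $\overline{\mathcal{G}}^{CH}_i$ is therefore $O(1)$, and the nearest-neighbor and 2-opt passes, being polynomial in $|\overline{\mathcal{N}}^{CH}_i|$, are $O(1)$ as well. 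Summing the three contributions yields $O(|\mathcal{N}_i|)+O(1)+O(1)=O(|\mathcal{N}_i|)$.

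The step I expect to be the real obstacle is making the smallness of a coverage hole precise enough to license $|\overline{\mathcal{N}}^{CH}_i|=O(1)$, and this claim is load-bearing: the nearest-neighbor and 2-opt routines are superlinear in the number of coverage-hole nodes, so a hole containing $\Theta(|\mathcal{N}_i|)$ appended nodes would already break the stated $O(|\mathcal{N}_i|)$ bound, and a hole with a long smooth boundary shows the danger is not purely hypothetical --- it may need many appended lap-nodes while being bordered by comparatively few RCG nodes. I would close the gap by carrying an explicit standing bound on the hole diameter, consistent with the paper's characterisation of coverage holes as small secluded regions; from such a bound the $O(1)$ estimates on $|\overline{\mathcal{N}}^{CH}_i|$, on each local A$^*$ search, and on the TSP heuristics all follow, leaving detection as the sole $O(|\mathcal{N}_i|)$ term.
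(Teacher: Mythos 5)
Your proposal follows essentially the same route as the paper's proof: it splits the strategy into detection by a recursive (floodfill) search over \textit{Open} nodes costed at $O(|\mathcal{N}_i|)$ in the worst case, followed by the nearest-neighbor construction at $O\left(|\overline{\mathcal{N}}^{CH}_i|^2\right)$ and 2-opt at $O\left(|\overline{\mathcal{N}}^{CH}_i|\right)$, with the TSP terms absorbed because the hole is small. The smallness assumption you correctly identify as load-bearing is handled in the paper only by the informal assertion that typically $|\overline{\mathcal{N}}^{CH}_i| \ll |\mathcal{N}_i|$, so your suggestion of carrying an explicit bound on the hole size is, if anything, more careful than the published argument.
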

\begin{proof}  
To find any coverage holes around node $\hat{n}_i$, a recursive search is conducted for {\textit Open nodes} in the neighborhood. This has the worst case complexity of $O(|\mathcal{N}_i|)$; however, in practice it is significantly smaller. If a coverage hole is detected, then a fully connected sub-graph $\overline{\mathcal{G}}^{CH}_i=(\overline{\mathcal{N}}^{CH}_i,\overline{\mathcal{E}}^{CH}_i)$ 
is formed in the coverage hole and a TSP 
is formulated. An initial solution of TSP is obtained using a nearest-neighbor based heuristic approach which has a complexity of $O\left(|\overline{\mathcal{N}}^{CH}_i|^2\right)$~\cite{aarts2003}. Then, the 2-Opt greedy algorithm is applied to improve the solution which has a complexity of $O\left(|\overline{\mathcal{N}}^{CH}_i|\right)$~\cite{aarts2003}. Typically, $|\overline{\mathcal{N}^{CH}_{i}}| \ll |\mathcal{N}_i|$, thus the complexity of coverage hole prevention is $O(|\mathcal{N}_{i}|)$. 
\end{proof}

\vspace{6pt}
\begin{thm}[Space Complexity]
\label{thm:space_complexity}
The space complexity of C$^*$ is $O(|\mathcal{N}_i|)$.
\end{thm}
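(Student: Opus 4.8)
The plan is to account for every data structure that C$^*$ maintains persistently across iterations and show that the dominant contribution is the storage of the RCG $\mathcal{G}_i = (\mathcal{N}_i, \mathcal{E}_i)$, whose size is already bounded in terms of $|\mathcal{N}_i|$ by Property P6. First I would enumerate what C$^*$ actually keeps in memory at iteration $i$: (i) the node set $\mathcal{N}_i$ with each node's position and state $q(\cdot) \in \{Cl, Op\}$; (ii) the edge set $\mathcal{E}_i$ with an A$^*$ (or precomputed) cost on each edge; (iii) the auxiliary partitions $\mathcal{N}_i^{Op}$, $\mathcal{N}_i^{Cl}$ and the retreat set $\mathcal{N}_{retreat}$; and (iv) transient structures created only when a dead-end or a coverage hole is encountered. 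The claim is that each of these is $O(|\mathcal{N}_i|)$, so their sum is as well.

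The key steps, in order, are as follows. \emph{Step 1: RCG storage.} By Property P6, $|\mathcal{N}_i| + |\mathcal{E}_i| \leq 4|\mathcal{N}_i| - 6 = O(|\mathcal{N}_i|)$, since the RCG is simple, connected, and planar (Euler's formula); storing a bounded amount of data per node and per edge therefore costs $O(|\mathcal{N}_i|)$. \emph{Step 2: State and auxiliary sets.} Each of $\mathcal{N}_i^{Op}$, $\mathcal{N}_i^{Cl}$ is a subset of $\mathcal{N}_i$, so together they are $O(|\mathcal{N}_i|)$; the retreat set satisfies $|\mathcal{N}_{retreat}| \le |\mathcal{N}_i^{Op}| \le |\mathcal{N}_i|$. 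Link nodes created in Algorithm~\ref{alg:StateUpdate} are themselves added to $\mathcal{N}_i$ (and pruned once covered), so they are already counted. \emph{Step 3: Dead-end escape.} Running A$^*$ over $\mathcal{G}_i$ requires open/closed lists and a parent pointer per node, i.e.\ $O(|\mathcal{N}_i| + |\mathcal{E}_i|) = O(|\mathcal{N}_i|)$ extra space, which is released after the retreat node is reached. \emph{Step 4: Coverage-hole prevention.} The floodfill labeling visits at most $|\mathcal{N}_i|$ nodes and stores one index per node, hence $O(|\mathcal{N}_i|)$. The densified subgraph $\overline{\mathcal{G}}_i^{CH} = (\overline{\mathcal{N}}_i^{CH}, \overline{\mathcal{E}}_i^{CH})$ is built only inside a coverage hole with inter-node spacing $w$; since a coverage hole is a bounded region of $\mathcal{A}_f$, $|\overline{\mathcal{N}}_i^{CH}|$ is bounded by the area of the hole divided by $w^2$, and in any case $|\overline{\mathcal{N}}_i^{CH}| \ll |\mathcal{N}_i|$, so even the fully connected edge set $|\overline{\mathcal{E}}_i^{CH}| = O(|\overline{\mathcal{N}}_i^{CH}|^2)$ and the TSP tour it stores are dominated by $|\mathcal{N}_i|$. \emph{Step 5: Sum.} Adding Steps 1--4 gives $O(|\mathcal{N}_i|)$.

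The main obstacle — and the point that needs the most care — is Step 4, specifically justifying that the \emph{fully connected} subgraph $\overline{\mathcal{G}}_i^{CH}$ does not blow up the space bound: a complete graph on $k$ nodes has $\Theta(k^2)$ edges, so one must argue that $k = |\overline{\mathcal{N}}_i^{CH}|$ is small relative to $|\mathcal{N}_i|$. This follows because a coverage hole, by Definition~\ref{coverage_hole}, is a \emph{small} region enclosed by obstacles and already-covered space, so the number of $w$-spaced lap points inside it is $O(1)$ (a constant not growing with the global problem size) — or, stated conservatively, always at most the total number of samples, so $|\overline{\mathcal{N}}_i^{CH}| \le |\mathcal{S}_{\mathcal{F}_i}|$ and certainly $O(|\mathcal{N}_i|)$, with the $k^2$ edge count absorbed because this subgraph is transient and discarded once $\Lambda_i^{TSP}$ is executed. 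I would also note that, as in Theorem~\ref{thm:complexity}, the constant implicit in the bound grows as the sampling resolution $w$ decreases, since finer sampling produces more frontier samples and hence a larger $|\mathcal{N}_i|$; but for fixed $w$ the per-iteration space is linear in $|\mathcal{N}_i|$.
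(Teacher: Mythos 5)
Your proposal is correct, and its core step --- bounding $|\mathcal{E}_i|\le 3|\mathcal{N}_i|-6$ via planarity (Property P6 / Euler's formula) so that storing the RCG $\mathcal{G}_i=(\mathcal{N}_i,\mathcal{E}_i)$ costs $O(|\mathcal{N}_i|)$ --- is exactly the paper's entire proof. Your additional accounting of the auxiliary and transient structures (the state sets, the A$^*$ workspace, and especially the fully connected coverage-hole subgraph $\overline{\mathcal{G}}_i^{CH}$ whose $\Theta(k^2)$ edges genuinely need the bound $|\overline{\mathcal{N}}_i^{CH}|\ll|\mathcal{N}_i|$) goes beyond the published argument, which counts only the RCG itself; this extra care is sound and, if anything, closes a gap the paper leaves implicit.
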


\begin{proof}
The space complexity is the amount of memory used to store the RCG $\mathcal{G}_i\left(\mathcal{N}_i,\mathcal{E}_i\right)$. The size of $\mathcal{G}_i$ is $|\mathcal{N}_i|+|\mathcal{E}_i|$. Since $|\mathcal{E}_i| \leq 3|\mathcal{N}_i|-6$, the space complexity is $O(|\mathcal{N}_i|)$.
\end{proof}

\begin{figure*}[t]
    \centering
    \subfloat[Coverage trajectories generated by different algorithms.]{
        \includegraphics[width=0.9\textwidth]{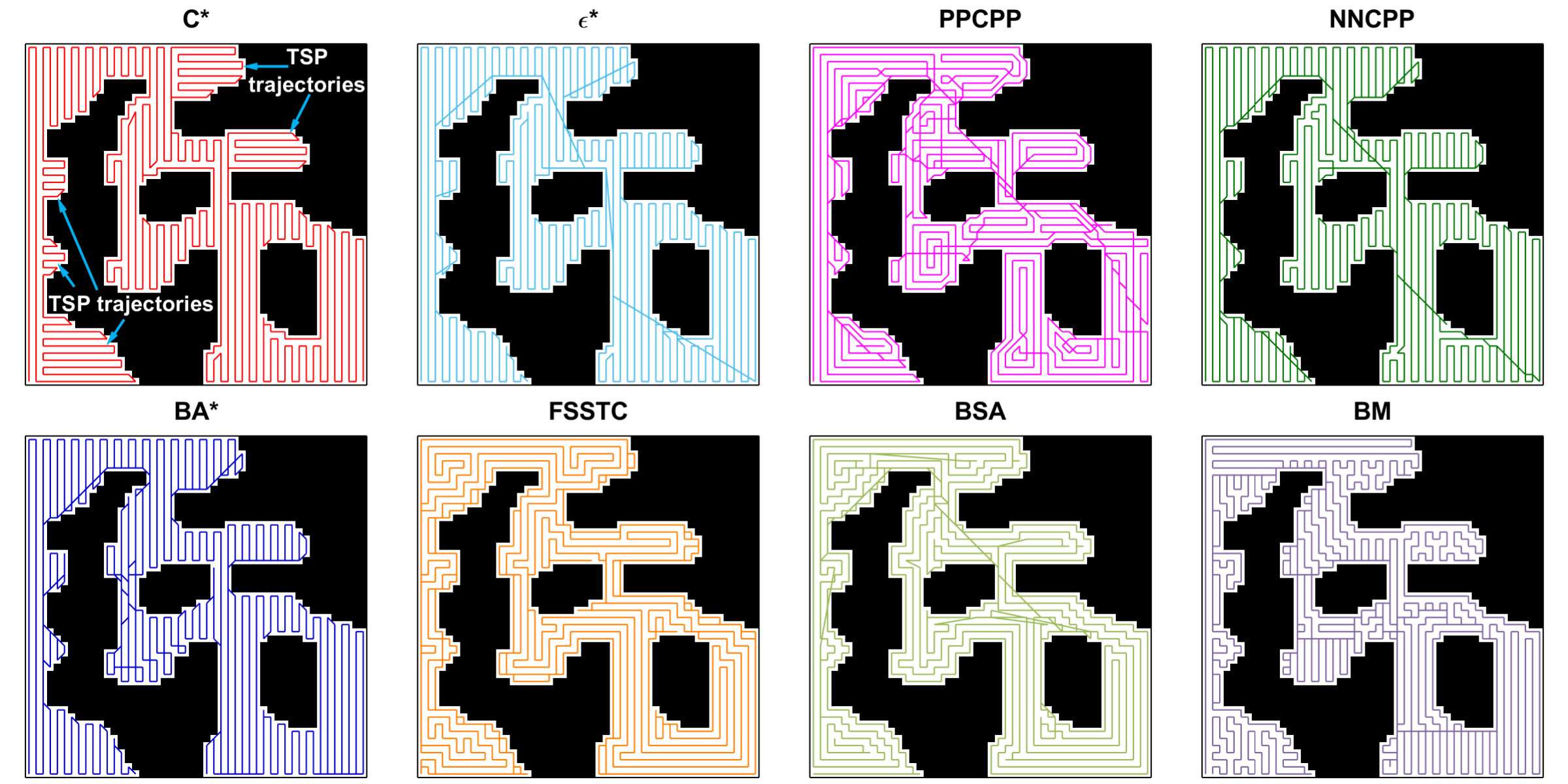}\label{fig:path_scene1}}\vspace{10pt}\\
         \centering
    \subfloat[Comparison of performance metrics of different algorithms.]{
         \includegraphics[width=0.9\textwidth]{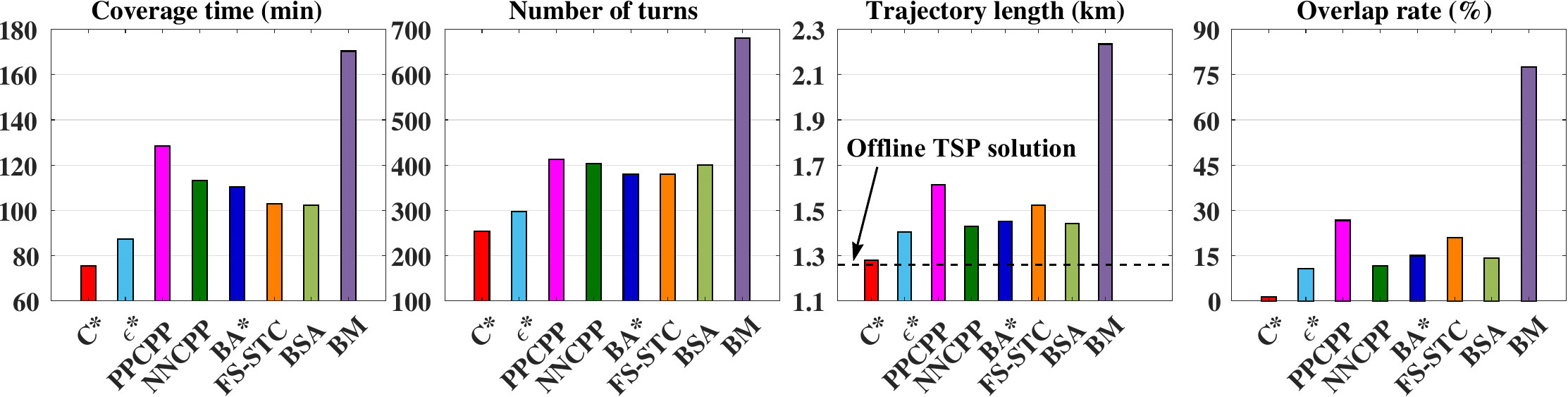}\label{fig:performance_scene1}}\\
          \caption{\textbf{Scenario 1:} Performance comparison of C$^*$ with the baseline algorithms. Video available in supplementary documents.}\label{fig:comparison_scene1}
          \vspace{-0pt}
\end{figure*}

\begin{thm}[Complete Coverage]
\label{complete_cover}
C$^*$ achieves complete coverage of the obstacle-free space.
\end{thm}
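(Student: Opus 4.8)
```latex
\begin{proof}[Proof proposal]
The plan is to argue that C$^*$ terminates and that upon termination every point of the obstacle-free space $\mathcal{A}_f$ has been swept by the coverage device, i.e., $\mathcal{A}_f \subseteq \bigcup_{t\in[0,T]}\mathcal{A}_c(\lambda(t))$. I would organize the proof in three stages: (i) the RCG eventually covers all of $\mathcal{A}_f$ with nodes/laps; (ii) every node of the fully-grown RCG is eventually marked \emph{Closed}; and (iii) being \emph{Closed} (together with Remark~\ref{lapcoverage} on lap coverage with $w\le 2r_c$ and the TSP-hole strategy) implies the corresponding region is physically covered.

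First I would establish the \textbf{RCG-completeness} claim: since the coverage space is connected and the robot keeps generating a nonempty sampling front $\mathcal{F}_i$ as long as there is adjacent unknown area (Definitions of sampling front and frontier sample), the progressive sampling cannot halt while any obstacle-free region remains undiscovered — every step either discovers new area or the environment is already fully mapped. Here I would invoke RCG property P4 (Essential: a fully-formed RCG is a minimum sufficient graph for complete coverage) and P3 (Connected), so that after the final growth iteration $m-1$ the laps drawn in $\mathcal{F}_i$ (spaced $w$ apart, terminating at obstacles/boundary, and made consistent across obstacles) collectively span $\mathcal{A}_f$ up to resolution $w$. The key sub-point is that laps plus link nodes (created in Alg.~\ref{alg:StateUpdate} when a lap segment would otherwise be skipped) guarantee no uncovered lap segment is orphaned by a \emph{Closed} node sitting in the middle of \emph{Open} nodes.

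Next, the \textbf{termination / all-nodes-Closed} step. The while-loop of Alg.~\ref{alg:Cstar} runs while $\mathcal{N}^{Op}_{i-1}\neq\emptyset$. I would argue the \emph{Open} set strictly shrinks (in a suitable potential-function sense) because: each back-and-forth step closes the current node unless it must stay \emph{Open} to preserve lap continuity (and then its neighbor gets closed); when a dead-end occurs, the dead-end escape strategy (Def.~\ref{retreat_node}, retreat nodes) always finds a retreat node since the area is connected and $\mathcal{N}_{retreat}\subset\mathcal{N}^{Op}_i$ is nonempty until coverage completes, so the robot is rerouted (via A$^*$) to an \emph{Open} region and resumes — the process never stalls with \emph{Open} nodes remaining; and when a coverage hole is detected, the TSP sub-trajectory visits and closes every node of $\mathcal{N}^{CH}_i$ (plus the appended inter-node-$w$ nodes) before back-and-forth resumes. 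Since the number of nodes of the fully-grown RCG is finite (bounded via P6 and Remark~\ref{rem_sampres}), the loop exits after finitely many iterations with $\mathcal{N}^{Op}=\emptyset$, i.e., $T<\infty$ and every node is \emph{Closed}.

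Finally, the \textbf{physical-coverage} step: a \emph{Closed} node means the robot traversed the lap(s) incident to it, and by Remark~\ref{lapcoverage}, choosing $w\le 2r_c$ ensures the coverage device sweeps the full strip between adjacent laps; since the laps (augmented by link nodes and the dense TSP nodes inside holes) span all of $\mathcal{A}_f$ from Stage~(i), the union of swept strips covers $\mathcal{A}_f$. I expect the \textbf{main obstacle} to be Stage~(ii): rigorously showing that the greedy left-up-down-right goal-selection rule combined with the dead-end escape never leaves an isolated pocket of \emph{Open} nodes permanently unvisited — one must verify that every \emph{Open} node is always reachable, on the RCG, from the robot's current node through other nodes that will themselves be processed, which relies delicately on P3 (connectivity preserved by expansion+pruning) and on the retreat-node mechanism always being invoked before the loop can terminate prematurely. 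The coverage-hole interplay (ensuring the TSP end node $n^{TSP}_e$ merges back correctly so the back-and-forth frontier is not lost) is the subtle corner case to check carefully.
\end{proof}
```
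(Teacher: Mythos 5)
Your proposal is correct and follows essentially the same route as the paper's proof: RCG growth into all of $\mathcal{A}_f$ by connectivity and progressive sampling (properties P3/P4), back-and-forth lap coverage with $w \le 2r_c$ per Remark~\ref{lapcoverage}, TSP coverage of holes, and dead-end escape via retreat nodes until no \emph{Open} node remains. Your added emphasis on termination via finiteness of the node set and on the risk of orphaned \emph{Open} pockets is a reasonable sharpening of the paper's more informal bullet-point argument, but it is not a different approach.
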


\begin{proof}
The proof follows from the RCG construction, goal node selection and dead-end escape strategies. 
\begin{itemize}
\item First, the connectivity of obstacle-free space and progressive sampling at all the frontier nodes adjacent to the unknown area imply that RCG will incrementally grow and eventually spread in all of the free space. 
\item Second, RCG is a minimum sufficient graph (P4) constructed from the frontier samples. This implies that the nodes of a fully grown RCG will lie within a ball of radius $w$ to the obstacles or the boundary of the free space. Thus, a fully grown RCG will wrap tightly around the boundary of the obstacle-free space.   
\item Third, the goal node selection strategy in Section \ref{Step4:waypointgeneration}a ensures that the robot performs the back-and-forth coverage lap by lap while shifting laps from left to right as they are covered. From Remark~\ref{lapcoverage} it is guaranteed that the region between any two adjacent laps is covered if $w \leq 2r_c$. By connectivity of RCG (P3) it is further guaranteed that each lap is covered. The TSP-based coverage of potential coverage holes further optimizes the coverage trajectory in local regions surrounded by obstacles before resuming the lap by lap coverage.
\item Finally, the coverage continues until all nodes of RCG are visited or a dead-end is reached. The robot escapes from a dead-end to the closest retreat node and resumes the back-and-forth coverage. If there is no retreat node available, it implies that complete coverage is achieved. 
\end{itemize}
Note: While C$^*$ provides complete coverage of the interior free regions of a map, it provides resolution completeness with respect to the RCG along obstacle boundaries.
\end{proof}

\begin{figure*}[t]
    \centering
    \subfloat[Coverage trajectories generated by different algorithms.]{
        \includegraphics[width=0.9\textwidth]{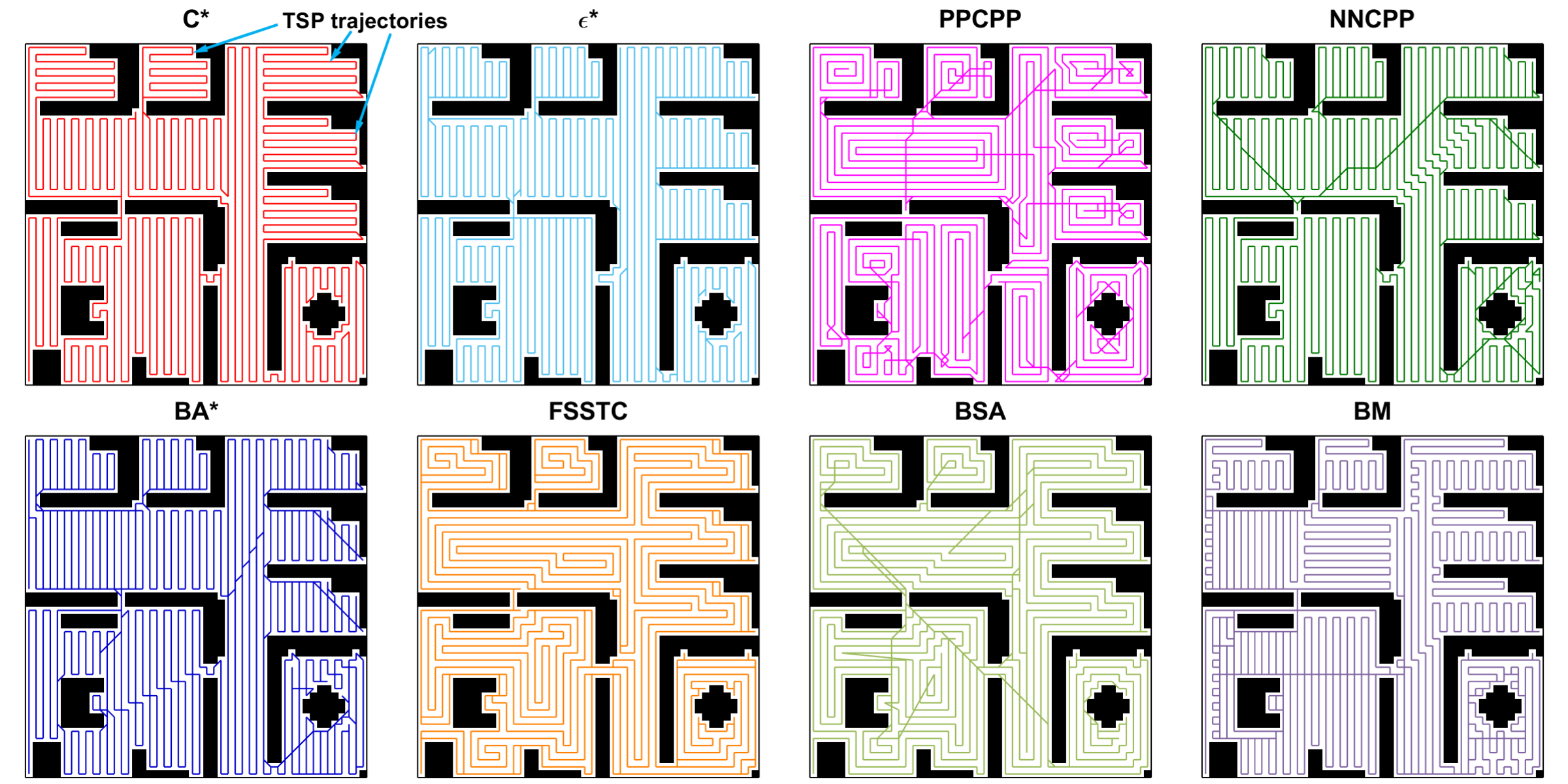}\label{fig:path_scene2}}\vspace{10pt}\\
         \centering
    \subfloat[Comparison of performance metrics of different algorithms.]{
         \includegraphics[width=0.9\textwidth]{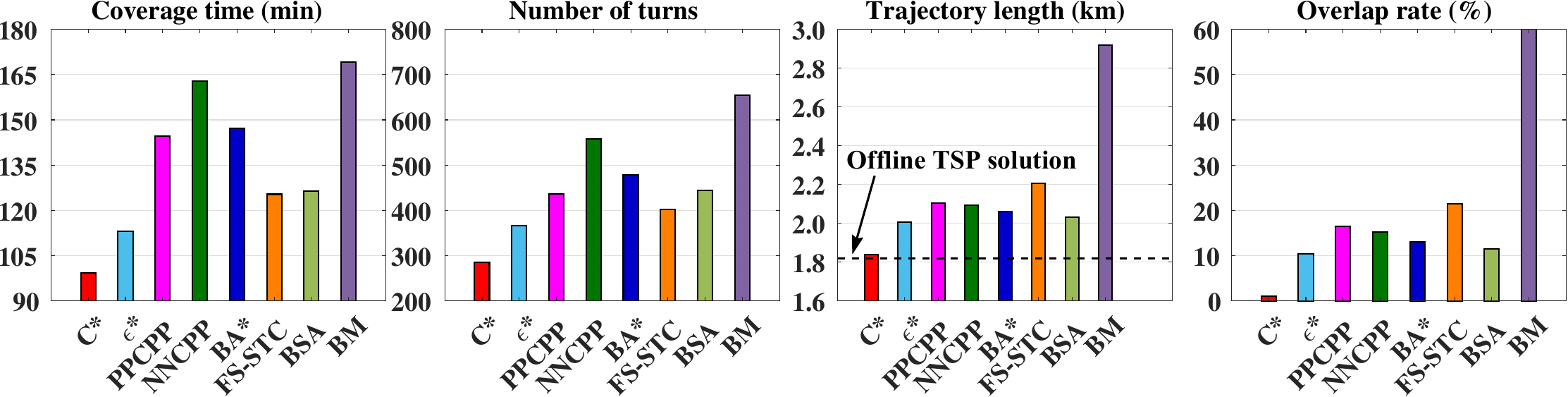}\label{fig:performance_scene2}}\\
          \caption{\textbf{Scenario 2:} Performance comparison of C$^*$ with the baseline algorithms. Video available in supplementary documents.}\label{fig:comparison_scene2}
          \vspace{-0pt}
\end{figure*}

\begin{figure*}[t]
    
        \centering
    \subfloat[Twenty four randomly generated scenarios for performance validation.]{
        \includegraphics[width=0.86\textwidth]{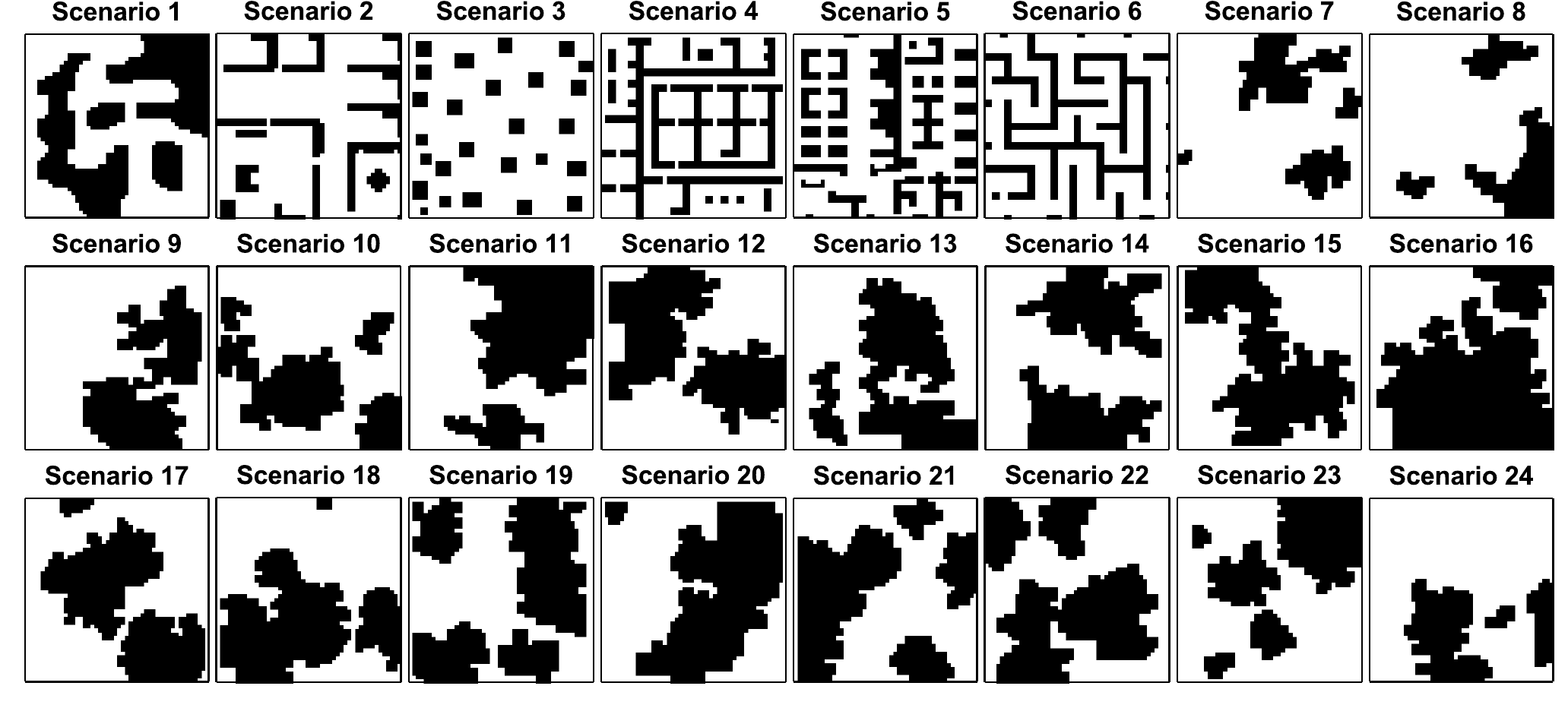}\vspace{-1.5em}\label{fig:scenario}}\vspace{1em}\quad
        \centering
    \subfloat[Comparison of performance metrics of different algorithms.]{
         \includegraphics[width=0.76\textwidth]{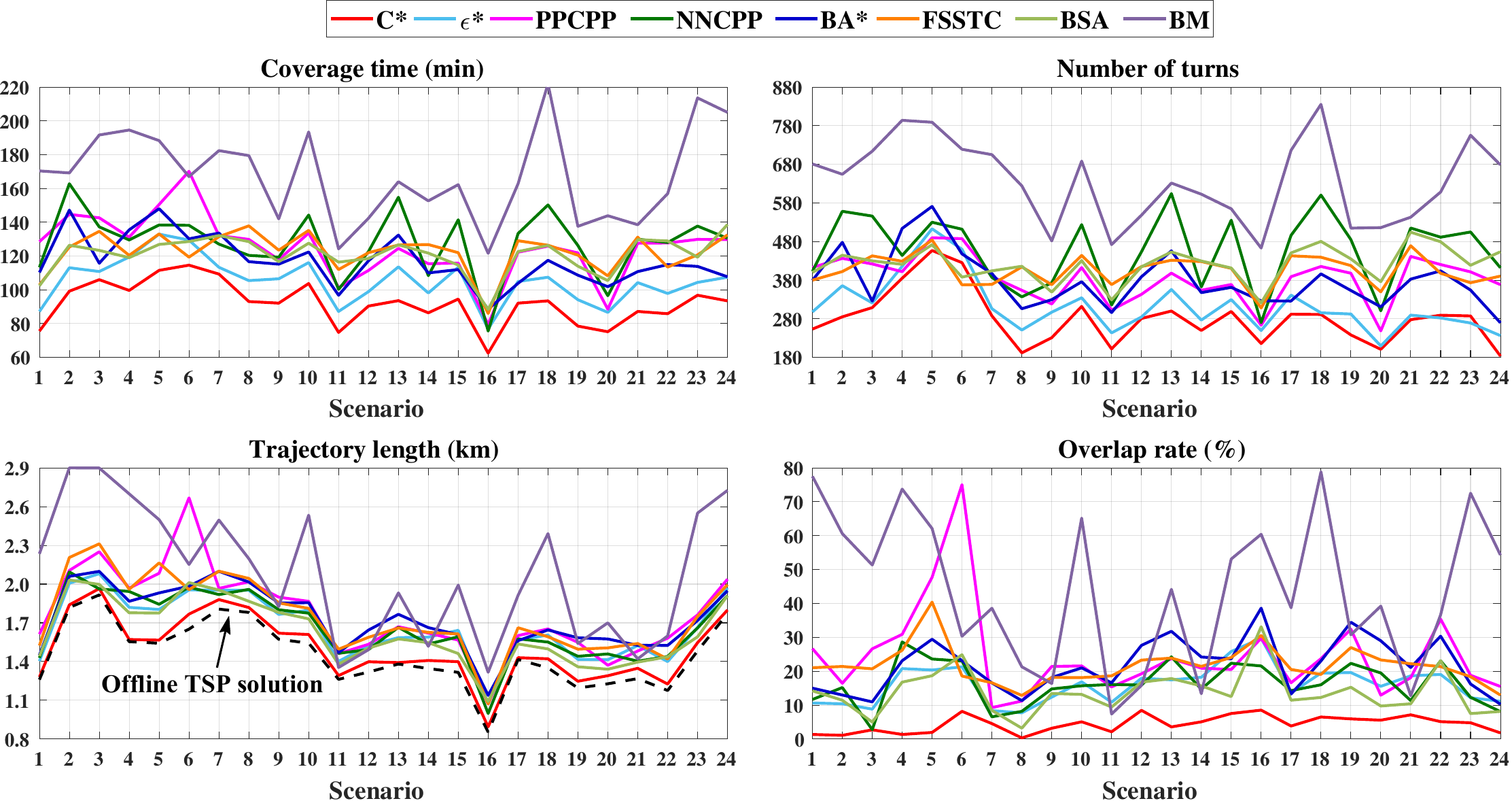}\label{fig:performance_overall}}\\
          \caption{Performance comparison of C$^*$ with the baseline algorithms on different scenarios}.\label{fig:comparison_overall}
          \vspace{-15pt}
\end{figure*}

\vspace{0pt}
\section{Results and Discussion}
\label{sec:results}
This section presents the results of testing and validation of 
C$^*$ by 1) high-fidelity simulations, 2) real-experiments on autonomous robots in a laboratory setting, and 3) applications on two different CPP problems considering a) energy-constrained robots and b) multi-robot teams.

\begin{figure}[!h]
 \centering 
  \subfloat{
 \includegraphics[width=0.86\columnwidth]{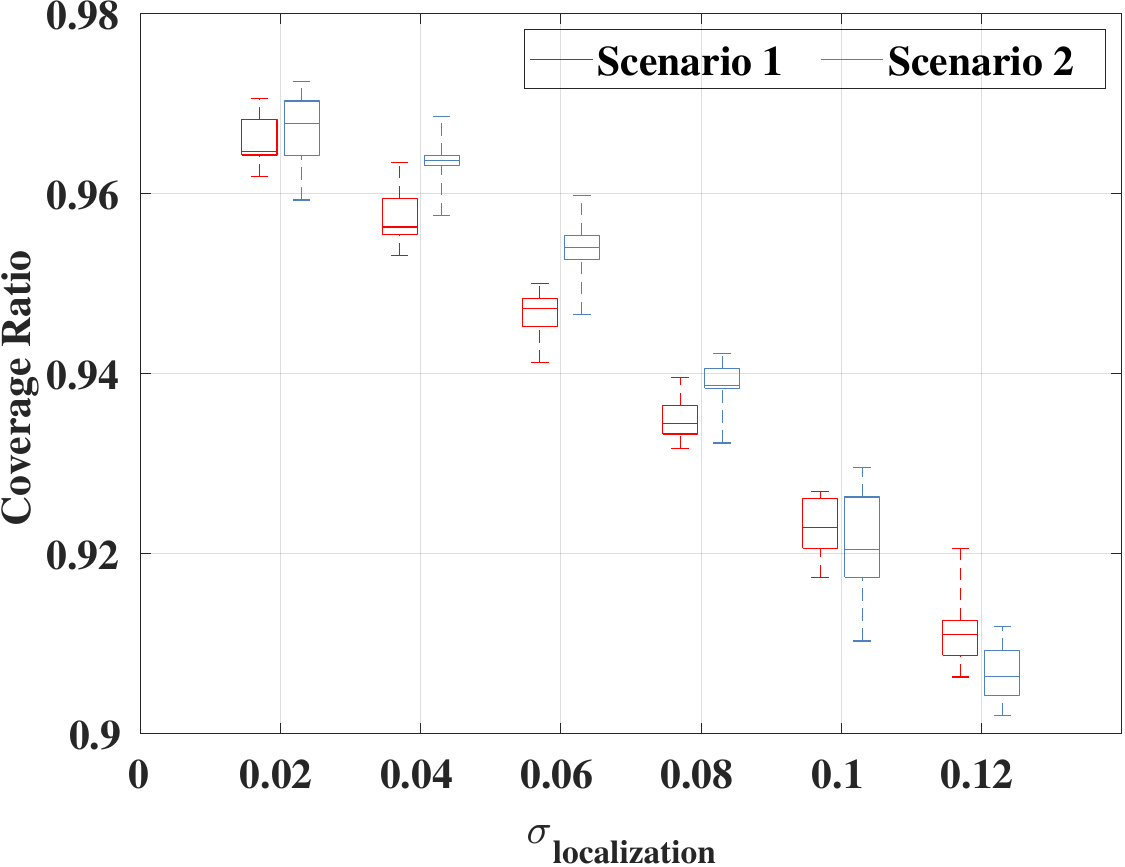}}
\caption{Coverage ratio versus noise.}\label{fig:uncertainty}
 \vspace{-6pt}
\end{figure}

\vspace{0pt}
\subsection{Validation on a Simulation Platform}
The performance of C$^*$ is first evaluated by extensive high-fidelity simulations on a multitude of complex scenarios. A robotic simulation platform, called Player/Stage~\cite{gerkey2003player} is used, where Player is a robot server that contains models of different types of robots and sensors, while Stage is a highly configurable simulator of a variety of robotic operations. Several complex scenarios of dimensions $50$ m $\times$ $50$ m are randomly generated with different obstacle layouts to test and validate the performance of C$^*$. These scenarios are considered as unknown to the robot at the beginning of the CPP operation. In order to detect and map obstacles during navigation, the autonomous robot is equipped with a laser sensor that has the FOV with a span of $360^\circ$ and a detection range of $15$ m. The robot has motion constraints including the top speed of $0.5$ m/s, max acceleration of $0.5$ m/s$^2$, and the minimum turning radius of $0.04$ m. The robot dynamically discovers the environment during navigation and generates adaptive coverage trajectory until complete coverage is achieved.

\vspace{0pt}
\subsubsection{\textbf{Baseline Algorithms}} 
Seven baseline algorithms are selected for performance comparison including: PPCPP~\cite{hassan2019ppcpp}, $\varepsilon^*$~\cite{song2018}, BA$^*$~\cite{viet2013ba}, NNCPP~\cite{luo2008bioinspired}, BM~\cite{ferranti2007brick}, BSA~\cite{gonzalez2005bsa}, and FSSTC~\cite{gabriely2003competitive}. Table~\ref{tab:feature} compares the key features of C$^*$ with the baseline algorithms. All algorithms were deployed in C$++$ and run on a computer with $2.60$ GHz processor and $32$ GB RAM. The sampling resolution for C$^*$ and the grid resolution for the baseline algorithms are set as $1$ m.

\vspace{6pt}
\subsubsection{\textbf{Performance Metrics}} 
The following performance metrics are considered for comparative evaluation:
\begin{itemize}
\item Coverage Time (CT): the total time of the coverage trajectory to achieve complete coverage.
\item Number of Turns (NT): the total number of turns on the coverage trajectory computed as the integral part of the total turning angle of the trajectory divided by $90^\circ$.
\item Trajectory Length (TL): the total length of the coverage trajectory to achieve complete coverage.
\item Overlap Rate (OR): the percentage of area covered repeatedly out of the total area of the coverage space. This is computed by  partitioning the coverage space into a grid with $1$ m resolution, and calculating the total number of obstacle-free cells that are traversed more than once with respect to the total number of free cells. 
\end{itemize}

\vspace{6pt}
\subsubsection{\textbf{Comparative Evaluation Results}}\label{subsec:mc_sim_res}
Figs.~\ref{fig:comparison_scene1} and \ref{fig:comparison_scene2} show the comparative evaluation results on two different scenarios: 1) an island scenario with complex obstacle layout, and 2) an indoor scenario (e.g., office floor or apartment) with several rooms, respectively. Figs.~\ref{fig:path_scene1} and \ref{fig:path_scene2} show the coverage trajectories generated by C$^*$ and the baseline algorithms for the above two scenarios, respectively. As seen, C$^*$ generates the back-and-forth coverage trajectories by default and adapts to the TSP-based optimal trajectories to cover any coverage holes detected on the way. This avoids the need to cover such holes later, thereby providing clean coverage without crossings and reducing the trajectory lengths and overlapping rates. Furthermore, there are no zig-zag and spiral patterns in the C$^*$ trajectories making them more appealing to the users. The baseline algorithms perform different motion patterns including the back-and-forth and spiral for complete coverage.  Figs.~\ref{fig:performance_scene1} and \ref{fig:performance_scene2} provide quantitative comparison results in terms of the performance metrics (i.e., coverage time, number of turns, trajectory length, and overlap rate) for the above two scenarios, respectively. Overall, C$^*$ achieves significant improvements over the baseline algorithms in all metrics. This follows from the fact that C$^*$ produces very efficient coverage pattern with minimal overalappings and number of turns.

 \begin{figure*}[t]
    \centering
    \subfloat[Coverage operation started.]{
        \includegraphics[width=0.47\textwidth]{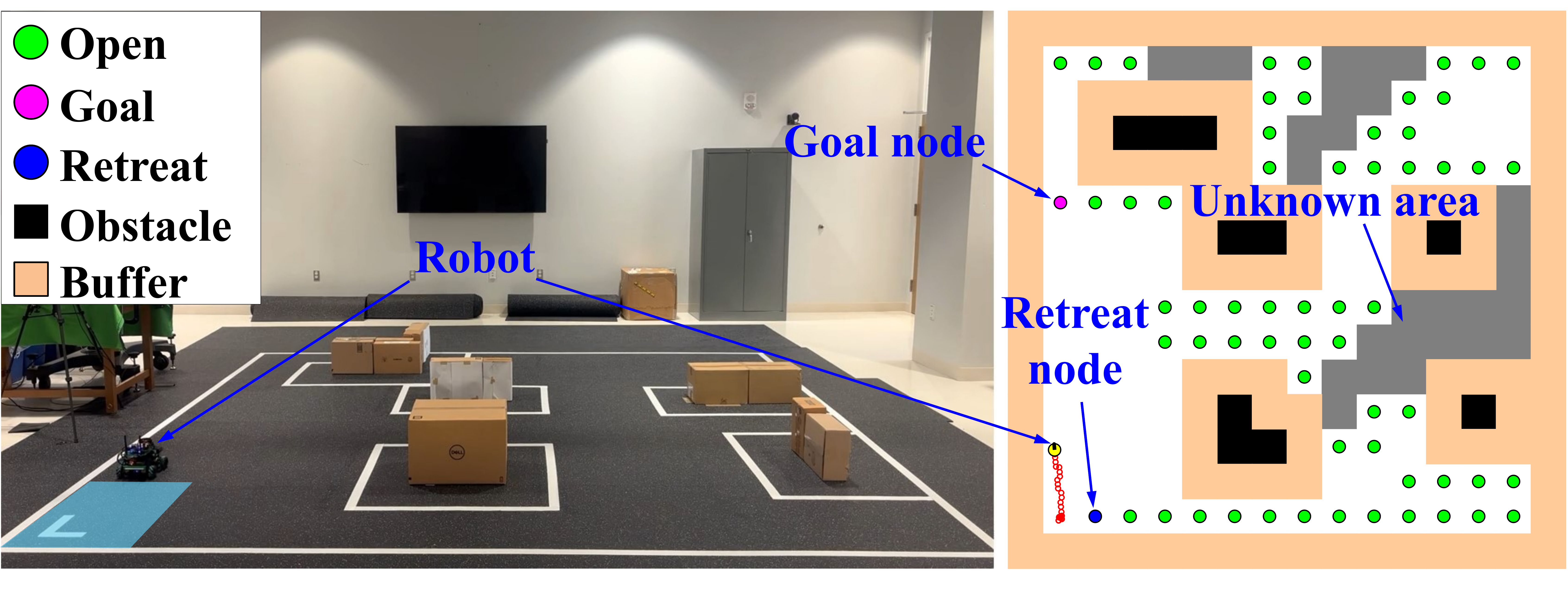}\label{fig:Cstar_experiment_part1}}\hspace{-12pt}\quad
    \centering
    \subfloat[Escaping from a dead-end.]{
        \includegraphics[width=0.47\textwidth]{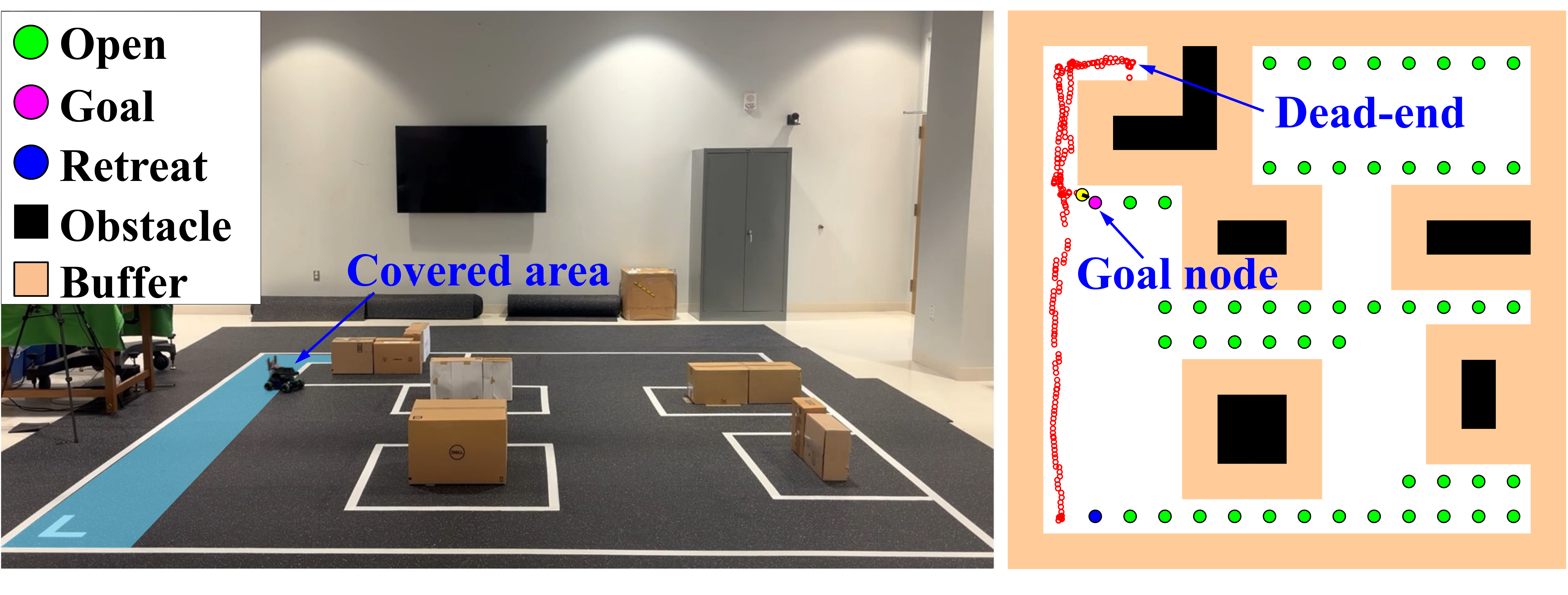}\label{fig:Cstar_experiment_part2}}\hspace{-12pt}\\
        \centering
    \subfloat[Continue back-and-forth coverage.]{
        \includegraphics[width=0.47\textwidth]{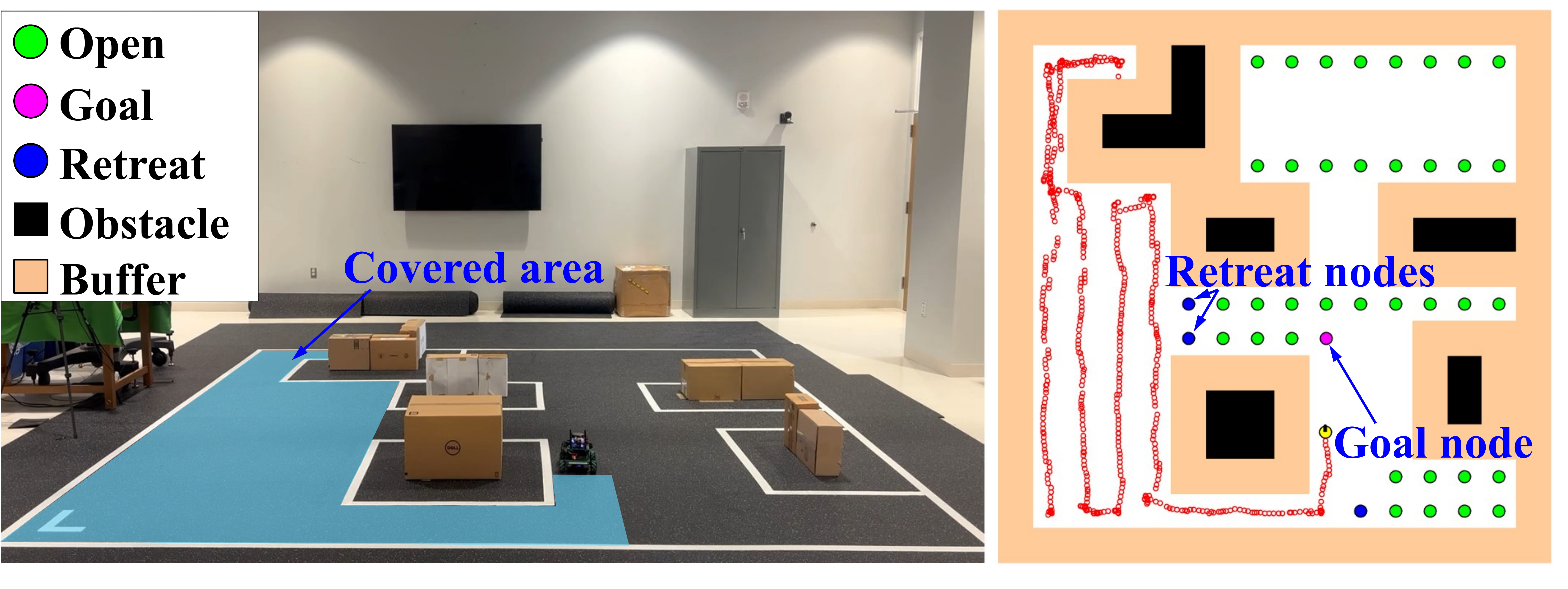}\label{fig:Cstar_experiment_part3}}\hspace{-12pt}\quad
    \centering
    \subfloat[Coverage-hole detected and TSP-based coverage started.]{
        \includegraphics[width=0.47\textwidth]{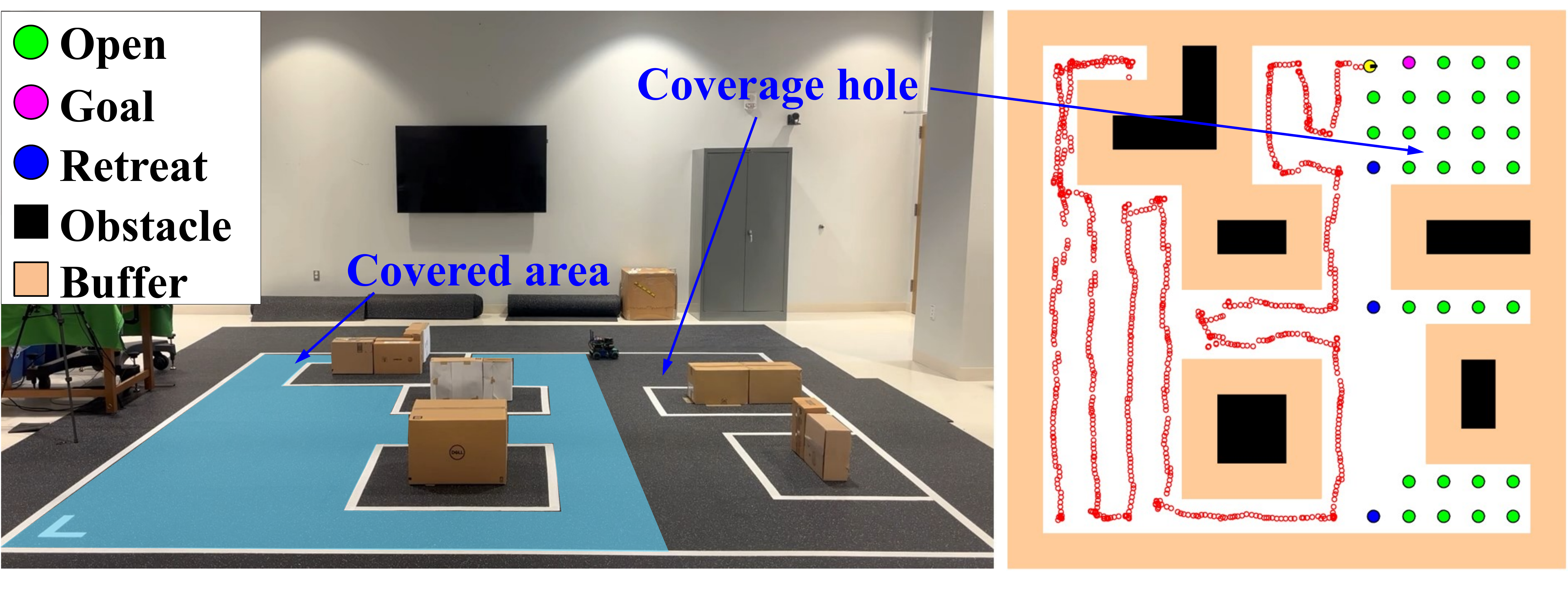}\label{fig:Cstar_experiment_part4}}\hspace{-12pt}\\
    \centering
    \subfloat[Another coverage-hole detected and TSP-based coverage started.]{
        \includegraphics[width=0.47\textwidth]{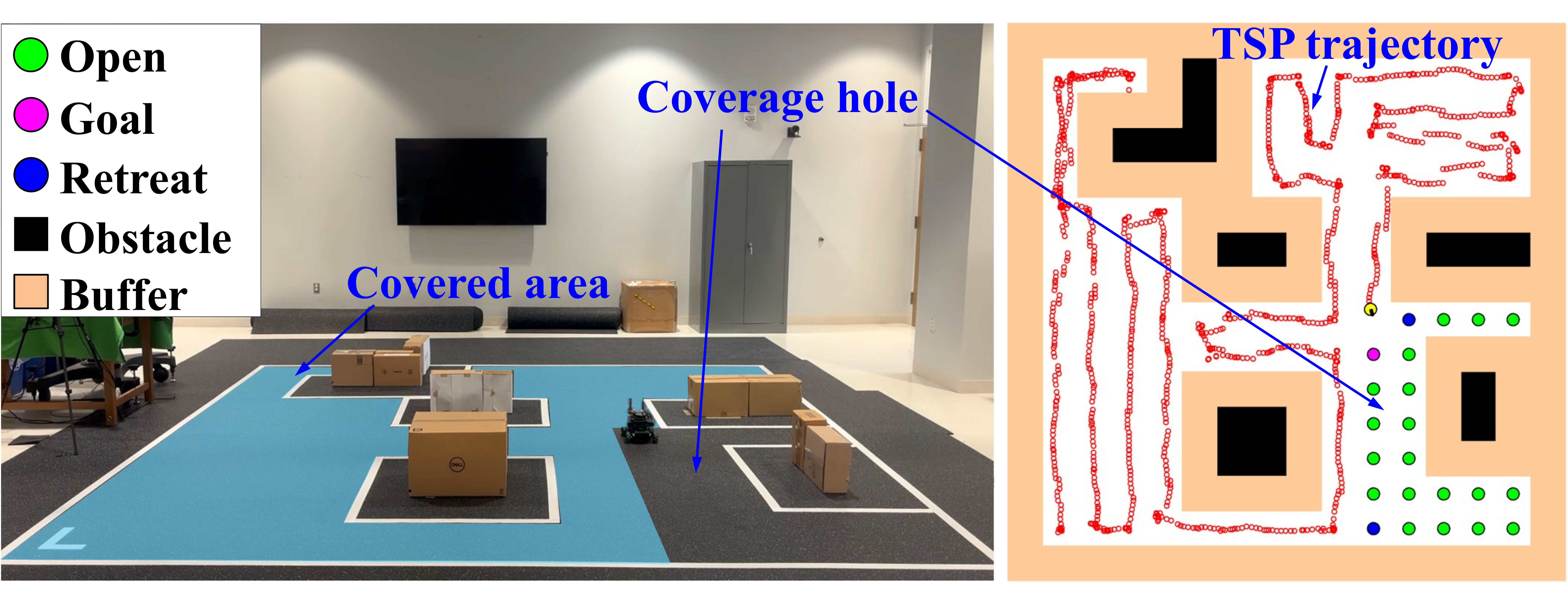}\label{fig:Cstar_experiment_part5}}\hspace{-12pt}\quad
    \centering
    \subfloat[Complete coverage achieved.]{
        \includegraphics[width=0.47\textwidth]{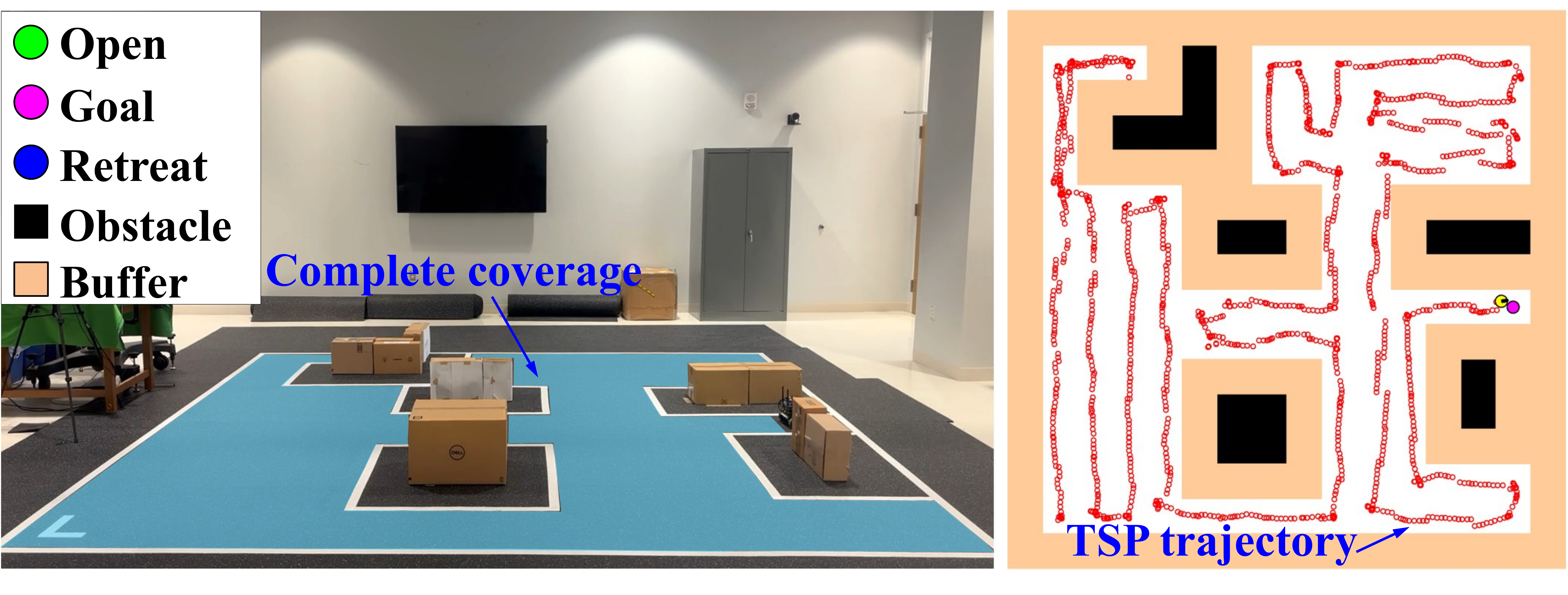}\label{fig:Cstar_experiment_part6}}\hspace{-12pt}\\
          \caption{Snapshots of a real experiment in a laboratory with complex obstacle layout. Video available in supplementary documents.}\label{fig:Cstar_experiment}
          \vspace{0pt}
\end{figure*}

For further validation, a total of twenty four diverse scenarios with complex obstacle layouts are randomly generated, as shown in Fig.~\ref{fig:scenario}. The performance metrics of all algorithms are recorded for these scenarios and plotted in Fig.~\ref{fig:performance_overall}. The results show a similar trend where C$^*$ provides a superior solution quality in all metrics as compared to the baseline algorithms. The average computation time recorded in these scenarios at each iteration including the sampling, RCG construction, and waypoint generation is of the order of $10^{-2}$ s. The average computation time to find a TSP trajectory for a coverage hole is $\sim 0.015$ s.

\vspace{6pt}
{$\bullet$ \textit{\textbf{Comparison with the Optimal TSP-Solution}}:} To evaluate the quality of C$^*$ paths, we computed the optimal coverage path with the shortest length for each of the above scenario by solving a TSP over the full environment map using the Concorde exact solver~\cite{David2005}. Figs.~\ref{fig:performance_scene1},~\ref{fig:performance_scene2} and ~\ref{fig:performance_overall} show the TSP path lengths and it is seen over all scenarios that C$^*$ achieves path lengths close to the optimal TSP solution, while maintaining real-time applicability in unknown environments.

\vspace{6pt}
\subsubsection{\textbf{Performance in the Presence of Uncertainties}}
The performance of C$^*$ is further evaluated in the presence of sensor uncertainties. These uncertainties can result in poor localization of the robot as well as misplacement of detected obstacles, which could impact the overall coverage performance. Therefore, to analyze the effects of sensor uncertainties, noise was injected into the measurements of range detector (laser sensor), the heading angle (compass), and the localization system of the robot. A laser sensor typically admits a measurement error of 1\% of its operation range and a modestly priced compass can provide heading information as accurate as $0.5$$^\circ$~\cite{paull2013auv}. The above errors were simulated with Additive White Gaussian Noise (AWGN) with standard deviations of $\sigma_{laser}=1.5$ cm and $\sigma_{compass} = 0.5^\circ$, respectively. The accuracy of the robot localization system depends on many factors, such as applied techniques, hardware platforms and environmental conditions. For example, the accuracy of a GPS system can range from $0.02$ m to $0.12$ m~\cite{paull2013auv}. Thus, the uncertainty due to localization system is studied using an AWGN with standard deviation ranging from $\sigma_{localization} =0.02$ m to $0.12$ m. 

\begin{figure}[!t]
    \centering
    \includegraphics[width=0.7\columnwidth]{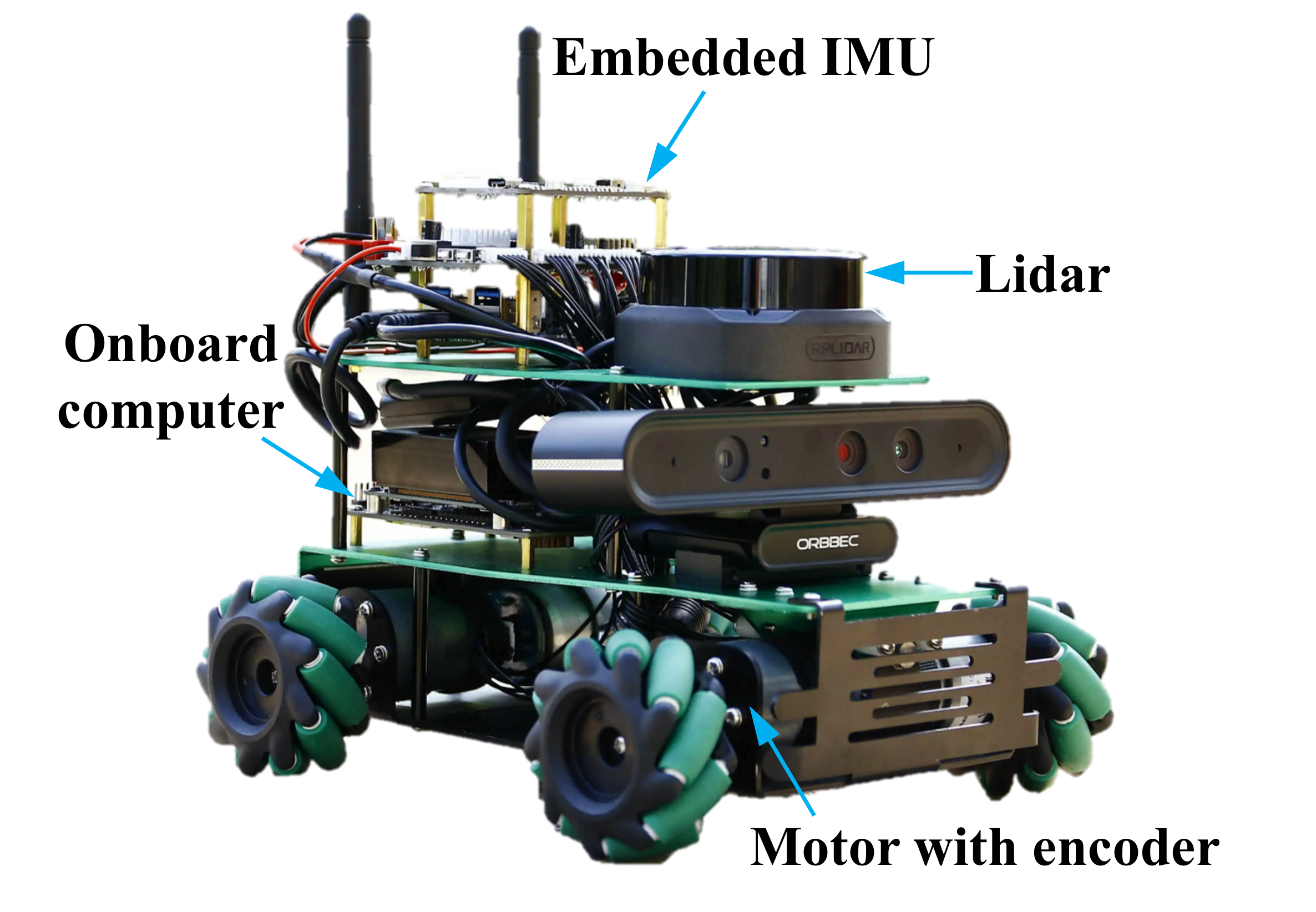}
    \caption{Robot ROSMASTER X3 integrated with sensors.} 
   \label{fig:robot} 
   \vspace{-9pt}
\end{figure}

The above uncertainties have a direct effect on the coverage ratio, i.e., the percentage of area covered with respect to the actual area of the coverage space. To calculate the coverage ratio, the search area is partitioned into a grid with a resolution of $1$ m, then the total number of obstacle-free cells covered by the robot are recorded and divided by the total number free cells. Fig.~\ref{fig:uncertainty} shows the box plot of the coverage ratio for different noise levels over ten Monte Carlo runs for the two scenarios presented above. The horizontal line in a box is the median result of the Monte Carlo simulations, while the box shows the middle $50^{th}$ percentile values of results.

\vspace{-6pt}
\subsection{Validation by Real Experiments}
The performance of C$^*$ is also validated by real experiments in a $7$ m $\times$ $7$ m lab space with several static obstacles. Fig~\ref{fig:robot} shows the robot, ROSMASTER X3~\cite{robot}, used for coverage of this space. The robot is equipped with 1) a RPLIDAR S2L~\cite{robot} with a range of $8$ m for obstacle detection, 2) MD520 motor with an encoder~\cite{robot} for detection of wheel rotation and linear displacement, and 3) MPU9250 IMU~\cite{robot} for detection of speed, acceleration, and orientation. The Gmapping algorithm~\cite{grisetti2007improved} is used for indoor localization. 
The robot carries Jetson Nano minicomputer that collects sensor measurements and runs C$^*$ for real-time control and navigation. Table~\ref{tab:sensor} lists the sensing and computing systems on the real robot.

Fig.~\ref{fig:Cstar_experiment} shows the various snapshots of an experiment where the robot covers the lab space by running C$^*$ onboard with adaptive coverage trajectory generation in real-time. Each snapshot consists of two images where i) the left image shows the actual robot covering the lab space and ii) the right image shows the discovered and unknown areas, RCG nodes statuses marked with different colors (\textit{Open} nodes as green, retreat nodes as blue and goal node as pink), and the coverage trajectory executed by the robot. The buffer zones marked by orange color are added around the obstacles for safety of the robot considering inertia and localization error. Fig.~\ref{fig:Cstar_experiment_part1} shows the start of the coverage process at the bottom left corner where the robot travels ahead of the starting point. Fig.~\ref{fig:Cstar_experiment_part2} shows the instant when the robot reaches a dead-end surrounded by obstacle and covered area in the local neighborhood. In this case, it escapes from the dead-end by selecting the nearest retreat node as the next waypoint. Fig.~\ref{fig:Cstar_experiment_part3} shows that the robot continues the back-and-forth coverage. Fig.~\ref{fig:Cstar_experiment_part4} shows the instant when the robot detects a coverage hole and starts the TSP-based optimal coverage. Fig.~\ref{fig:Cstar_experiment_part5} shows the TSP-based optimal coverage of another coverage hole.  
Finally, Fig.~\ref{fig:Cstar_experiment_part6} shows that the robot achieves complete coverage, thus validating the effectiveness of C$^*$ on real robots. The average computation time in the real experiment at each iteration is of the order of $10^{-2}$ s. The average computation time to find a TSP trajectory is $\sim0.008$ s.

\begin{table}[!t]{}
\scriptsize
\caption {Sensing and computing systems of the experimental robot. }\label{tab:sensor}\vspace{-3pt}
\centering
\setlength\tabcolsep{2.0pt}
\begin{tabular}{l c c c c c} 
 \toprule
\specialrule{0em}{1pt}{1pt}\vspace{3pt} 
&\tabincell{l}{\textbf{Onboard} \\ \textbf{computer}} 
&\tabincell{l}{\textbf{Wheel} \\ \textbf{encoder}}
& \textbf{IMU}
&\tabincell{l}{\textbf{Lidar}} 
& \textbf{Localization}\\ 
\toprule 
\specialrule{0em}{3pt}{-3pt}
\tabincell{l}{\textbf{Model}}
& \tabincell{l}{Jetson Nano}
& \tabincell{l}{MD520}  
& \tabincell{l}{MPU9250} 
& \tabincell{l}{RPLIDAR S2L}
& \tabincell{l}{Gmapping}\\
\specialrule{0em}{3pt}{3pt}
\tabincell{l}{\textbf{Specifications}}
& \tabincell{l}{$4$GB RAM \\ and $1.5$GHz \\ processor}
& \tabincell{l}{$-$} 
& \tabincell{l}{$-$}
& \tabincell{l}{Range $8$ m}
& \tabincell{l}{$-$}\\
\toprule
\end{tabular}
\end{table}

\begin{figure*}[t]
    \centering
    \subfloat[C$^*$ coverage in different cycles where each cycle consists of advance (magenta), coverage (red) and retreat  (blue) trajectory segments.]{
    \includegraphics[width=0.98\textwidth]{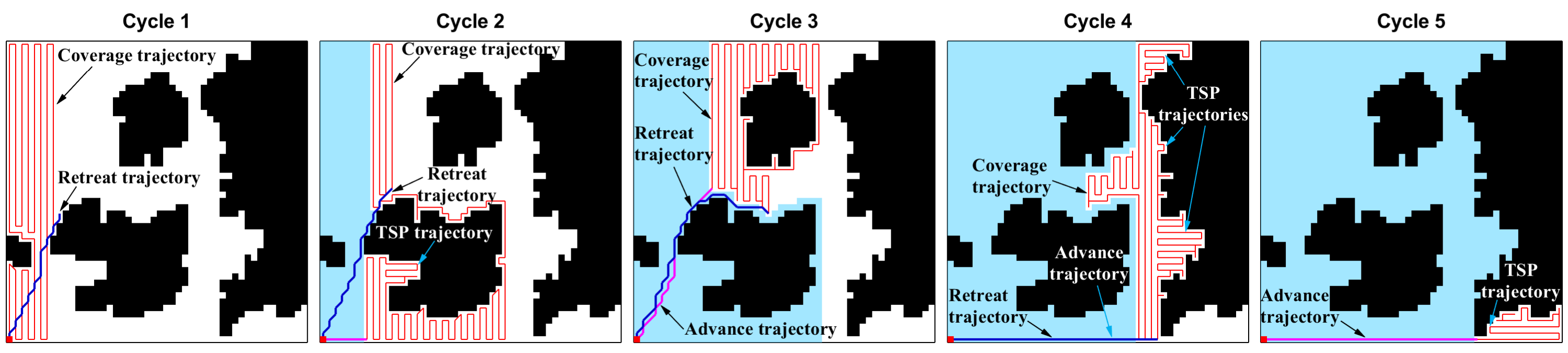}\label{fig:CstarPlus}\vspace{0pt}}\vspace{3pt}\\
    \centering
    \subfloat[$\varepsilon^*$+ coverage in different cycles where each cycle consists of advance (magenta), coverage (red) and retreat  (blue) trajectory segments.]{
    \includegraphics[width=0.98\textwidth]{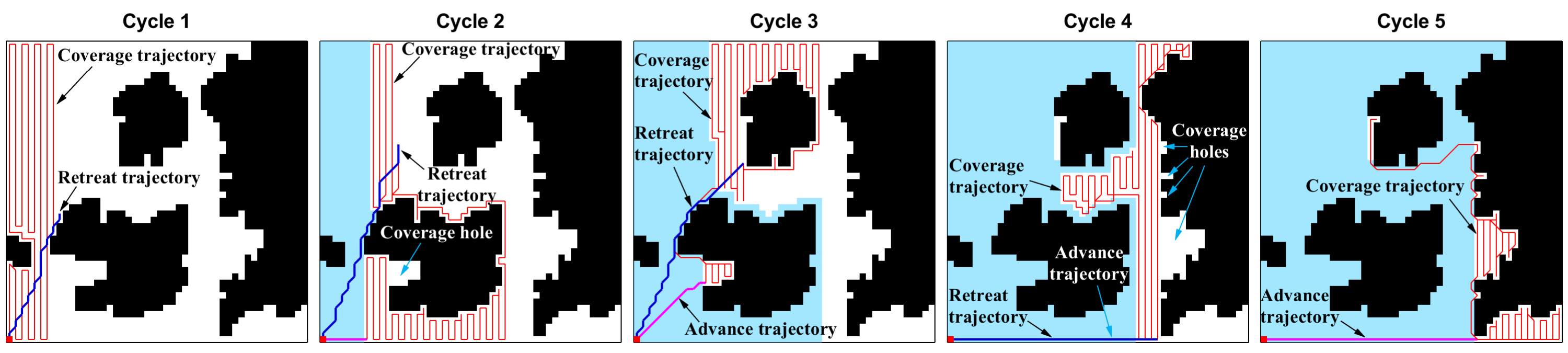}\label{fig:EstarPlus}\vspace{-0pt}}\vspace{3pt}\\
    \centering
    \subfloat[ECOCPP coverage in different cycles where each cycle consists of advance (magenta), coverage (red) and retreat  (blue) trajectory segments.]{
    \includegraphics[width=0.98\textwidth]{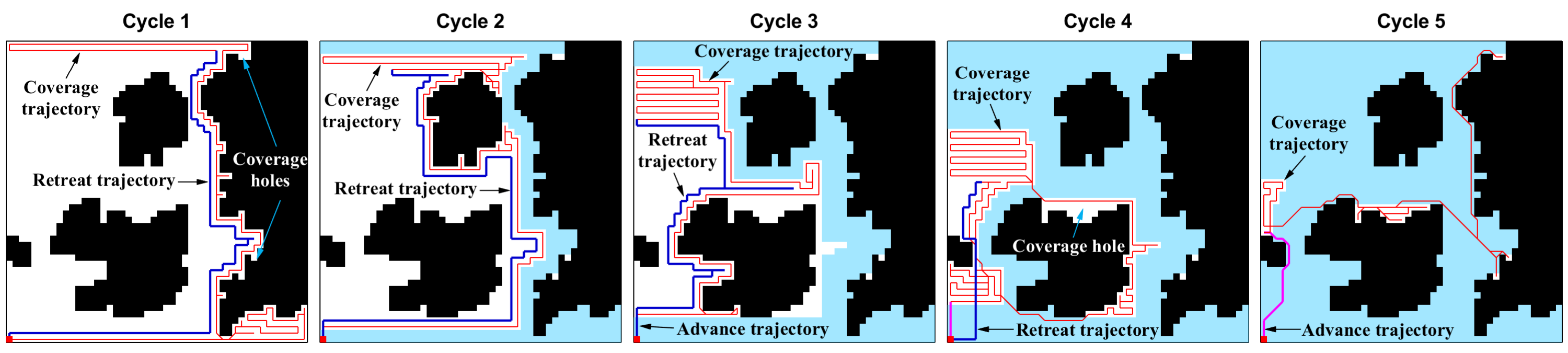}\label{fig:ECOCPP}\vspace{-0pt}}\vspace{6pt}\\
    \centering
    \subfloat[Comparison of performance metrics of different algorithms.]{
    \includegraphics[width=0.95\textwidth]{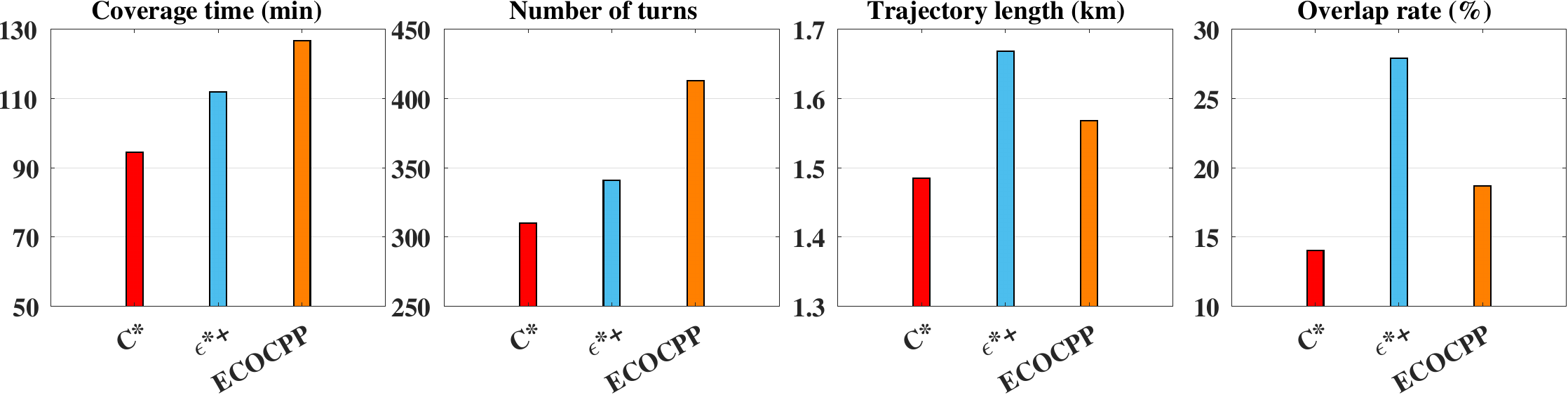}\label{fig:performance_energyConstraint}}\\      
  \caption{Performance comparison of C$^*$ with the baseline algorithms.}\label{fig:compare_energyConstraint}
      \vspace{-0pt}
\end{figure*}

\vspace{6pt}
\subsection{CPP Applications}
\label{Casestudy}

Finally, the performance of C$^*$ is evaluated on two different CPP applications using 1) energy-constrained robots and 2) multi-robot teams.

\begin{figure*}[t]
    \centering
    \subfloat[Coverage trajectories generated by different algorithms. Four robots are initialized at four corners.]{
    \includegraphics[width=0.90\textwidth]{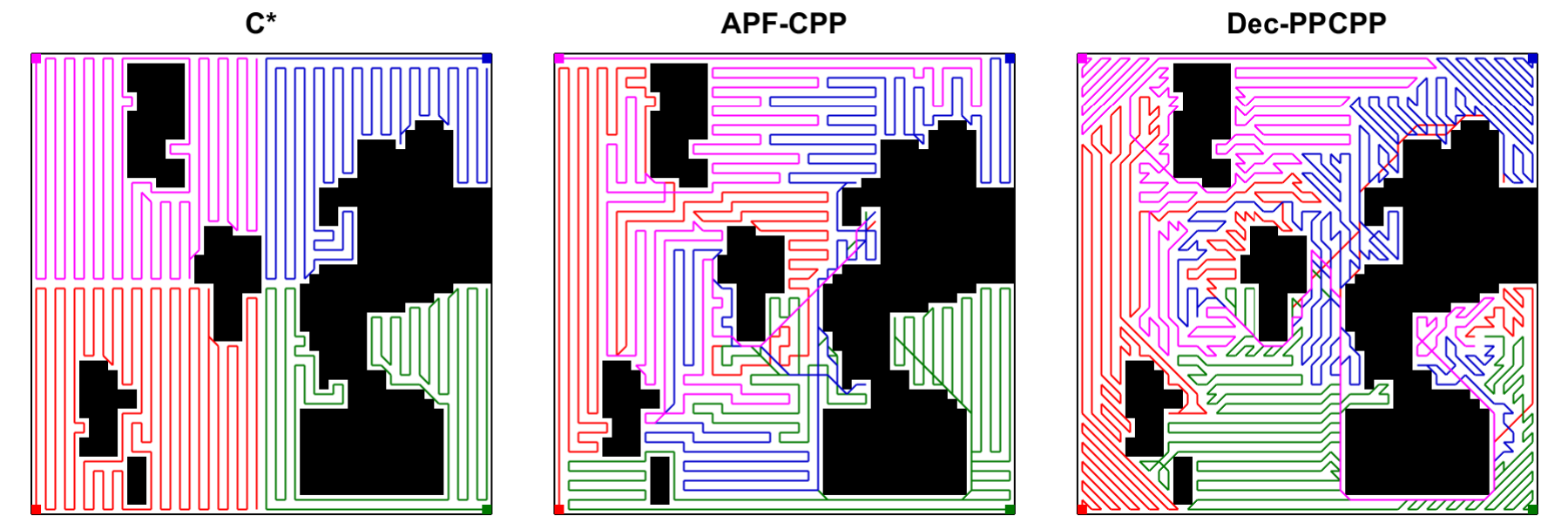}\label{fig:path_multirobot}}\vspace{12pt}\\
         \centering
    \subfloat[Comparison of performance metrics of different algorithms.]{
         \includegraphics[width=0.90\textwidth]{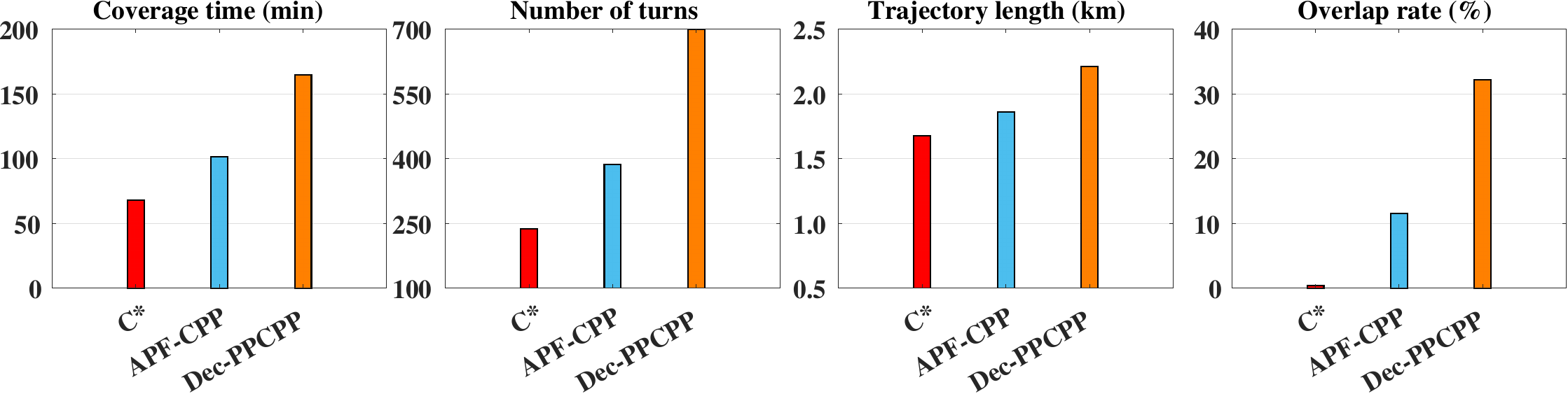}\label{fig:performance_multirobot}}\\
          \caption{Performance comparison of C$^*$ with the baseline algorithms.}\label{fig:comparison_multirobot}
          \vspace{-0pt}
\end{figure*}

\vspace{6pt}
\subsubsection{\textbf{Coverage using Energy-Constrained Robots}}
\label{subsubsection:energyconstrained}
In this application, the CPP problem considers robots with an energy constraint, i.e., limited battery capacity. Thus, the robot has to make multiple trips back to the charging station to recharge its battery. As such, the coverage is completed in multiple cycles 
where each cycle consists of three trajectory segments: 1) advance, 2) coverage, and 3) retreat. 
\begin{itemize}
\item The advance trajectory segment takes the robot from the charging station to the starting point of next round.  
\item The coverage trajectory segment performs coverage till the robot's battery depletes to the extent that enables it to go back to the charging station, and 
\item The retreat trajectory segment brings the robot back to the charging station along the shortest path.
\end{itemize}

During coverage the robot continuously monitors the remaining energy so that it has sufficient energy left for returning to the charging station. If the energy level is low then it returns back for recharging. For simplicity the energy consumption of the robot is modeled as proportional to the trajectory length~\cite{shnaps2016}. For C$^*$, the robot evaluates its energy content every time it reaches a node of RCG to make a decision to return back for recharging. Thus, while sitting at the current node $\hat{n}_{i}$, it computes the expected remaining energy upon reaching the next goal node $\hat{n}_{i+1}$. This is computed as the current remaining energy minus the expected energy needed to move to $\hat{n}_{i+1}$. Then it computes the retreat trajectory from $\hat{n}_{i+1}$ to the charging station using A$^*$~\cite{hart1968formal}. If the expected remaining energy at $\hat{n}_{i+1}$ is less than the expected energy consumption of the retreat trajectory, then it retreats back to the charging station from $\hat{n}_{i}$, otherwise it moves to $\hat{n}_{i+1}$. Once the robot is recharged at the charging station, the robot selects the nearest open node on RCG as the start node of next round and advances to this node along the shortest A$^*$ path.

Fig.~\ref{fig:compare_energyConstraint} shows the comparative evaluation results of C$^*$ with the baseline algorithms ($\varepsilon^*+$~\cite{shen2020} and energy constrained online path planning (ECOCPP) algorithm~\cite{dogru2022eco}) on a randomly generated $50$ m $\times$ $50$ m complex scenario. The battery charging station is located at the bottom left corner. The robot is considered to have the total energy capacity of $360$ units. Figs.~\ref{fig:CstarPlus}-\ref{fig:ECOCPP} show the trajectories of C$^*$, $\varepsilon^*+$ and ECOCPP, respectively, at different cycles with the advance, coverage and retreat segments. Fig~\ref{fig:performance_energyConstraint} shows the performance comparison of C$^*$ with the baseline algorithms in terms of different performance metrics. It is seen that C$^*$ provides clean coverage pattern from left to right, with minimum coverage time and overlappings, and leaving behind no coverage holes. Overall, C$^*$ achieves the best performance in all metrics.

\vspace{6pt}
\subsubsection{\textbf{Coverage using a Multi-Robot Team}}
\label{subsubsection:multirobot}
In this application, the CPP problem considers a multi-robot team to jointly cover the coverage space. It is desired that complete coverage is achieved with minimum overlappings of the multi-robot trajectories to avoid repetition. The multi-robot team consists of $4$ robots that are initialized at the corners of the space. For C$^*$, the area is partitioned into $4$ equal-sized subregions for coverage by each robot. Such partition is simple to create and helps to avoid mingling of the robot trajectories during coverage, thus preventing complex maneuvers. Note that the construction of an optimal partitioning scheme based on robot capabilities and obstacle distribution is beyond the scope of this paper. Further, the optimal reallocation strategies based on task completion, resource management and robot failures is also beyond the scope of this paper. 

The multi-robot CPP application presented in this paper focuses on an efficient hybrid approach for joint coverage. In this approach, each robot collects sensor data of the environment and sends it to a centralized server located at the base station. The server first creates a unified map of the environment with the input data from multiple robots. Then it performs progressive sampling on the sampling front of each robot. Note that the sampling is also done on the boundaries of individual partitioned subregions. Then it constructs a single RCG using these samples, while retaining the boundary samples during RCG pruning. The information sharing with the base station and construction of a single RCG is important because some obstacle-free regions could be hidden behind an obstacle and neither detectable by the robot nor reachable from within its partitioned subregion. After that, each robot uses this RCG to calculate its own goal node for covering its subregion. 

The performance of  C$^*$ is compared with baseline algorithms (Dec-PPCPP~\cite{hassan2020dec} and Artificial Potential Field CPP (APF-CPP) algorithm~\cite{wang2024apf}). Dec-PPCPP is extended from PPCPP~\cite{hassan2019ppcpp} into a decentralized multi-robot CPP by adding dynamic predator (other robots) avoidance reward in the reward function.  Wang et al.~\cite{wang2024apf} proposed the APF-CPP algorithm which leverages the concept of artificial
potential field for decision making of each robot. Fig.~\ref{fig:path_multirobot} shows the trajectories of C$^*$, APF-CPP and Dec-PPCPP. Fig~\ref{fig:performance_multirobot} shows the comparative evaluation results in terms of the different performance metrics. It is seen that C$^*$ provides clean coverage pattern, with minimum coverage time and overlappings, and leaving behind no coverage holes. Overall, C$^*$ outperforms the baseline algorithms in terms of all metrics.

\section{Conclusion and Future Work}
\label{sec:conclusions}
The paper presents a novel algorithm, called C$^*$, for adaptive CPP in unknown environments. C$^*$ is built upon the concepts of progressive sampling, incremental RCG construction and TSP-based coverage hole prevention. It is shown that C$^*$ is computationally efficient and guarantees complete coverage. The comparative evaluation with existing algorithms shows that C$^*$ significantly improves the coverage time, number of turns, trajectory length, and overlap rate. Finally, C$^*$ is validated by real experiments and also extended to energy-constrained and multi-robot coverage applications. 

Future research areas include advanced CPP problems that consider 1) decentralized multi-agent coverage for robustness to failures~\cite{Hare_POSER2021} and scalability and 2) time-risk optimization~\cite{Songgupta2019} of coverage trajectory for environments with dynamic obstacles
and spatio-temporally varying currents~\cite{mittal2020rapid}.

\vspace{0pt}
\bibliographystyle{ieeetr}
\bibliography{Cstar_arxiv}

\begin{figure*}[t]
    \centering
    \vspace{-10pt}
    \subfloat[\textbf{Scenario 3 (Forest):} Performance comparison of C$^*$ with the baseline algorithms.]{
    \includegraphics[width=1.0\textwidth]{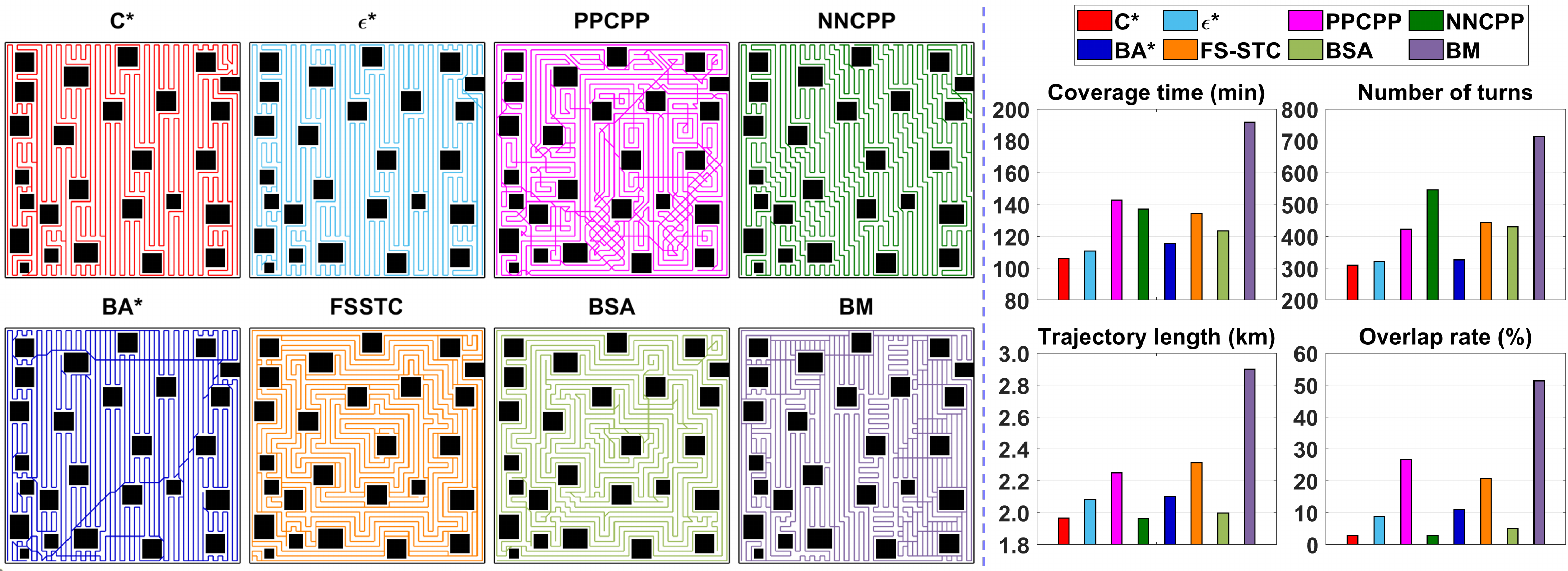}\label{fig:comparison_add_scene1}}\vspace{12pt}\\
    \centering
    \subfloat[\textbf{Scenario 4 (Office):} Performance comparison of C$^*$ with the baseline algorithms.]{
    \includegraphics[width=1.0\textwidth]{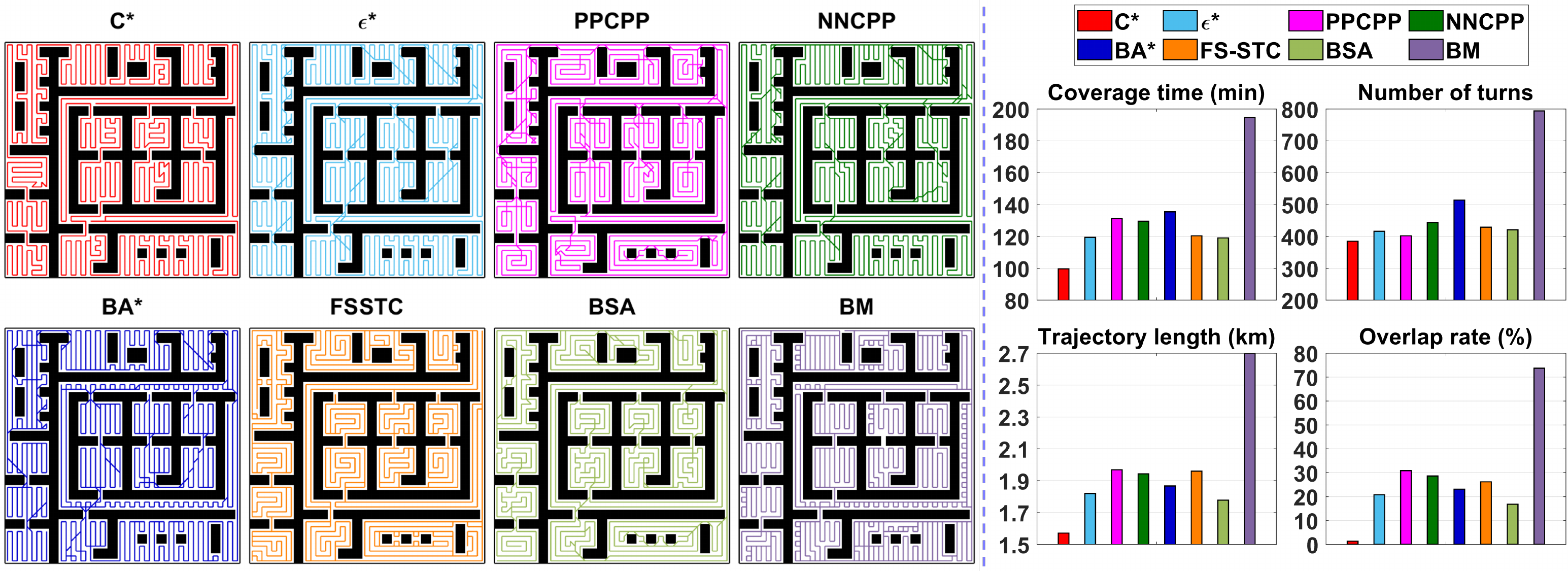}\label{fig:comparison_add_scene2}}\vspace{12pt}\\
    \centering
    \subfloat[\textbf{Scenario 5 (Warehouse):} Performance comparison of C$^*$ with the baseline algorithms.]{
    \includegraphics[width=1.0\textwidth]{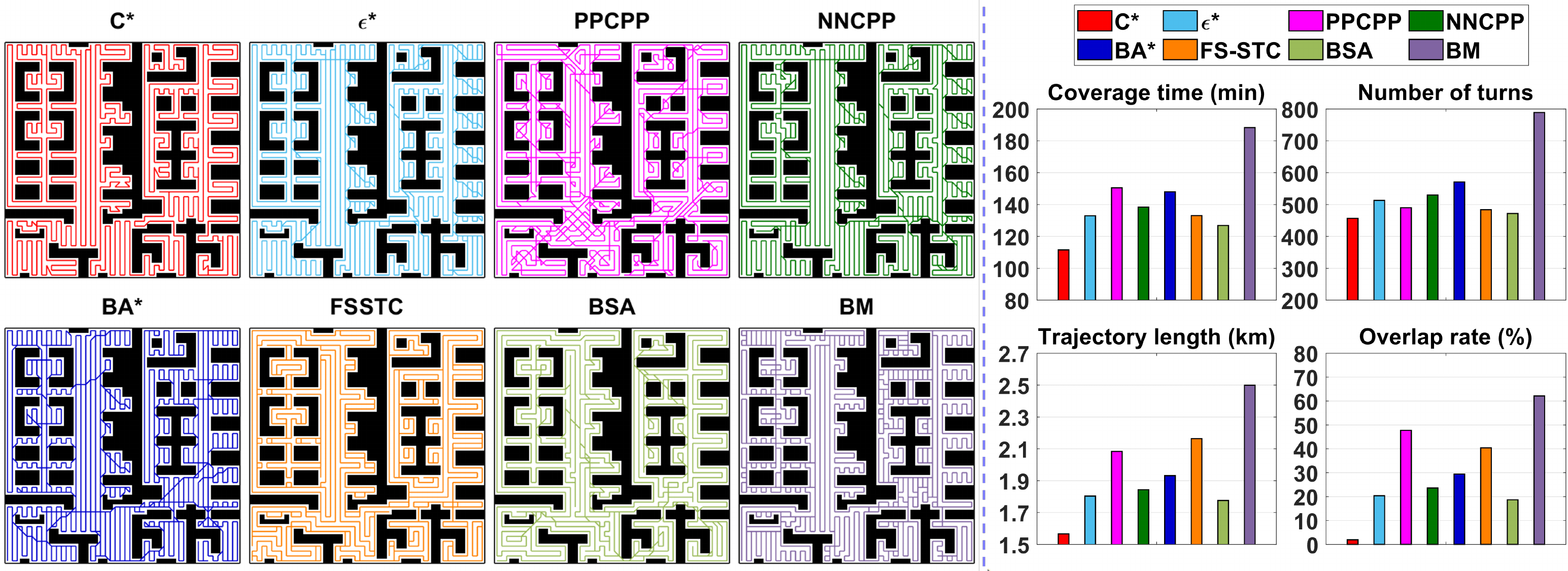}\label{fig:comparison_add_scene3}}\vspace{6pt}
      \vspace{-0pt}
\end{figure*}

\vspace{0pt}
\begin{appendix}
\subsection{Additional Results}\label{appendixA}
This section presents the detailed results for scenarios 3 (forest), 4 (office), 5 (warehouse), and 6 (maze) of Fig.~\ref{fig:scenario}. Fig.~\ref{fig:compare_addscenarios} shows the comparative evaluation results of the above scenarios, including the trajectory plots and performance metrics. 
The results demonstrate that C$^*$ consistently delivers superior performance in terms of  coverage time, number of turns, trajectory length, and overlap rate in all these scenarios.

\begin{figure*}[!t]
\ContinuedFloat
    \centering
    \vspace{-40pt}
    \subfloat[\textbf{Scenario 6 (Maze):} Performance comparison of C$^*$ with the baseline algorithms.]{
    \includegraphics[width=1\textwidth]{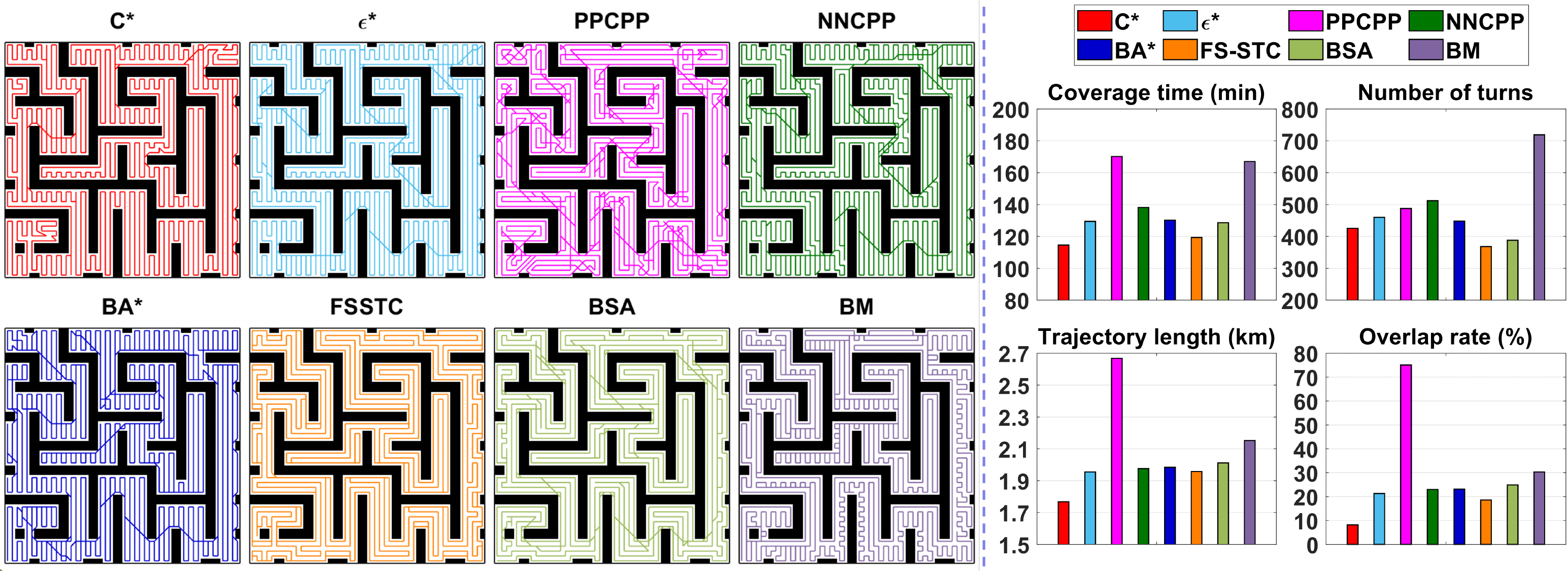}\label{fig:comparison_add_scene4}}\vspace{6pt}
      \caption{Performance comparison of C$^*$ with the baseline algorithms.}\label{fig:compare_addscenarios}\label{fig:comparison_add_scene4}
      \vspace{-0pt}
\end{figure*}

\subsection{Scalability Analysis}\label{appendixB}
This section presents the scalability analysis with respect to the map size. If the map size is increased while preserving its structure (i.e., obstacle distribution), then C$^*$ generates the same trajectory pattern with the same overlap rate. Thus, the trajectory length increases in proportion to the map size. As such, the coverage time increases linearly. To validate this, we conducted a simulation study of varying map sizes. Specifically, we generated three maps with identical obstacle layouts but different dimensions: $25 \,\mathrm{m} \times 25 \,\mathrm{m}$, $50 \,\mathrm{m} \times 50 \,\mathrm{m}$, and $100 \,\mathrm{m} \times 100 \,\mathrm{m}$, as shown in Fig.~\ref{fig:scale_scenario_path}. The C$^*$ trajectories for these three maps are shown in Fig.~\ref{fig:scale_scenario_path}, while the performance metrics are shown in Fig.~\ref{fig:performance_scale_scenario}. As expected, the coverage time and trajectory length increase linearly with the map size.

\begin{figure*}[!t]
    \centering
    \vspace{-100pt}
    \subfloat[Coverage paths generated by our algorithm in three maps with identical obstacle layouts but different physical dimensions. All maps are discretized into a $50 \times 50$ grid for coverage.]{
    \includegraphics[width=0.90\textwidth]{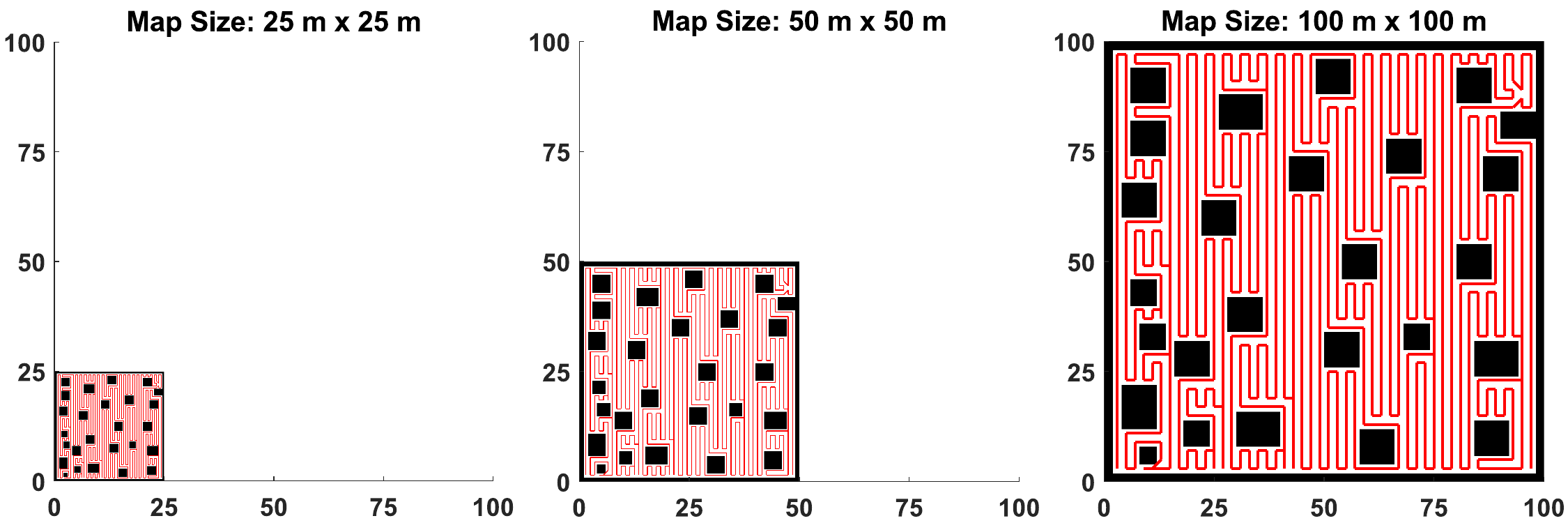}\label{fig:scale_scenario_path}}\vspace{12pt}\quad
    \centering
    \subfloat[Quantitative results showing coverage time, number of turns, trajectory length, and overlap rate.]{
    \includegraphics[width=0.90\textwidth]{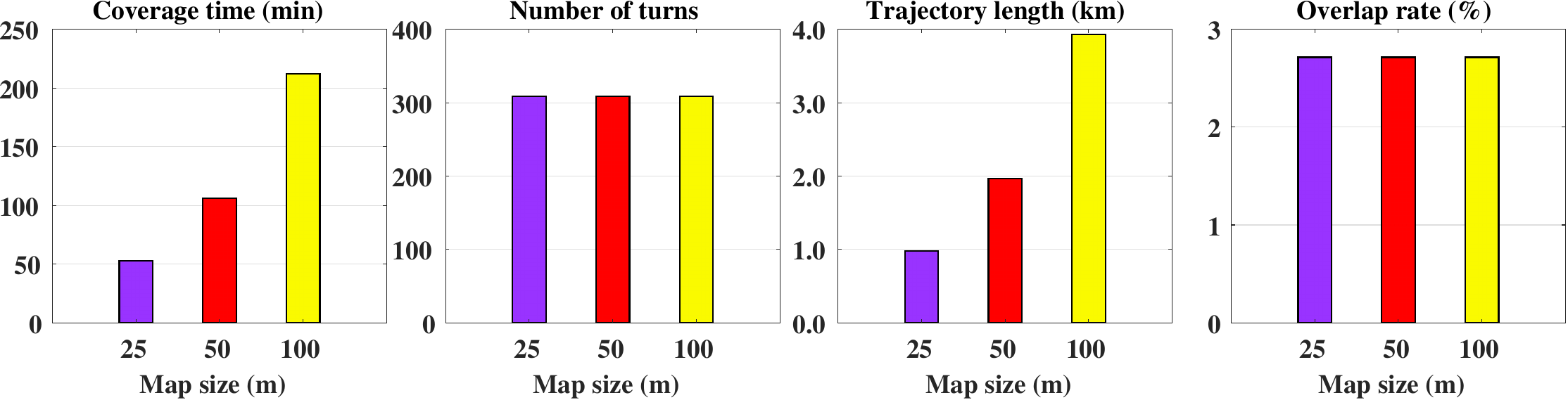}\label{fig:performance_scale_scenario}}\vspace{6pt}\\
    \caption{Scalability analysis of C$^*$ for different map sizes: (a) coverage paths in maps of $25 \times 25 \,\mathrm{m}$, $50 \times 50 \,\mathrm{m}$, and $100 \times 100 \,\mathrm{m}$ with identical obstacle layouts; (b) performance comparison in terms of coverage time, number of turns, trajectory length, and overlap rate.}
    \label{fig:scale_scenario_result}
\end{figure*}

\end{appendix}

\end{document}